\def\eqref#1{equation~\ref{#1}}
\def\1{\bm{1}}
\def\vq{{\bm{q}}}
\def\vs{{\bm{s}}}
\def\vt{{\bm{t}}}
\def\vx{{\bm{x}}}
\DeclareMathAlphabet{\mathsfit}{\encodingdefault}{\sfdefault}{m}{sl}
\SetMathAlphabet{\mathsfit}{bold}{\encodingdefault}{\sfdefault}{bx}{n}
\theoremstyle{definition}
\newtheorem{theorem}{Theorem}[section]
\newtheorem{definition}[theorem]{Definition}
\newcommand{\niparagraph}[1]{\vspace{2pt}\noindent\textbf{#1}}
\newcommand{\bench}[1]{\textsf{#1}\xspace}
\newcommand{\xx}[1]{\texttt{#1}}
\newcommand\rebut[1]{\leavevmode\noindent{\color{blue}{#1}}}
\title{Effective Interplay between Sparsity and Quantization: From Theory to Practice}
\author{%
  \begin{tabularx}{\textwidth}{>{\centering\arraybackslash\normalfont}X>{\centering\arraybackslash\normalfont}X>{\centering\arraybackslash\normalfont}X}
    \textbf{Simla Burcu Harma} & \textbf{Ayan Chakraborty} & \textbf{Elizaveta Kostenok} \\
    \small{EcoCloud, EPFL} & \small{EcoCloud, EPFL} & \small{EcoCloud, EPFL} \\
    \small{\texttt{\href{mailto:simla.harma@epfl.ch}{simla.harma@epfl.ch}}} & \small{\texttt{\href{mailto:ayan.chakraborty@epfl.ch}{ayan.chakraborty@epfl.ch}}} & \small{\texttt{\href{mailto:elizaveta.kostenok@epfl.ch}{elizaveta.kostenok@epfl.ch}}} \\
    & & \\
    \textbf{Danila Mishin} & \textbf{Dongho Ha} & \textbf{Babak Falsafi} \\
    \small{EcoCloud, EPFL} & \small{MangoBoost Inc.} & \small{EcoCloud, EPFL} \\
    \small{\texttt{\href{mailto:danila.mishin@epfl.ch}{danila.mishin@epfl.ch}}} & \small{\texttt{\href{mailto:dongho.ha@yonsei.ac.kr}{dongho.ha@mangoboost.io}}} & \small{\texttt{\href{mailto:babak.falsafi@epfl.ch}{babak.falsafi@epfl.ch}}} \\
    & & \\
    \textbf{Martin Jaggi} & \textbf{Ming Liu} & \textbf{Yunho Oh} \\
    \small{EcoCloud, EPFL} & \small{Google} & \small{Korea University} \\
    \small{\texttt{\href{mailto:martin.jaggi@epfl.ch}{martin.jaggi@epfl.ch}}} & \small{\texttt{\href{mailto:miliu@google.com}{miliu@google.com}}} & \small{\texttt{\href{mailto:yunho_oh@korea.ac.kr}{yunho\_oh@korea.ac.kr}}} \\
  \end{tabularx}
  \\\\
  \begin{tabularx}{\textwidth}{>{\centering\arraybackslash\normalfont}X>{\centering\arraybackslash\normalfont}X}
    \textbf{Suvinay Subramanian} & \textbf{Amir Yazdanbakhsh} \\
    \small{Google} & \small{Google DeepMind} \\
    \small{\texttt{\href{mailto:suvinay@google.com}{suvinay@google.com}}} & \small{\texttt{\href{mailto:ayazdan@google.com}{ayazdan@google.com}}} \\
  \end{tabularx}
}
\begin{document}

\maketitle

\begin{abstract}
The increasing size of deep neural networks (DNNs) necessitates effective model compression to reduce their computational and memory footprints.
Sparsity and quantization are two prominent compression methods that have been shown to reduce DNNs' computational and memory footprints significantly while preserving model accuracy.
However, how these two methods interact when combined together remains a key question for developers, as many tacitly assume that they are \textit{orthogonal}, meaning that their combined use does not introduce additional errors beyond those introduced by each method independently.
In this paper, we provide the first mathematical proof that sparsity and quantization are non-orthogonal.
We corroborate these results with experiments spanning a range of large language models, including the OPT and LLaMA model families (with 125M to 8B parameters), and vision models like ViT and ResNet.
We show that the order in which we apply these methods matters because applying quantization before sparsity may disrupt the relative importance of tensor elements, which may inadvertently remove significant elements from a tensor. More importantly, we show that even if applied in the correct order, the compounded errors from sparsity and quantization can significantly harm accuracy. 
Our findings extend to the efficient deployment of large models in resource-constrained compute platforms to reduce serving cost, offering insights into best practices for applying these compression methods to maximize hardware resource efficiency without compromising accuracy.  \footnote{Code and data are available at: \url{https://bit.ly/quantization-sparsity-interplay}}
\end{abstract}

\section{Introduction}
Recent breakthroughs in deep neural networks (DNNs) have surpassed human-level capabilities across various tasks such as text generation, machine translation, and computer vision. 
Unfortunately, this achievement is accompanied by significant challenges due to the exponential growth in the size and complexity of DNN models and datasets~\citep{brown_language_2020, zhang_opt_2022, scao_bloom_2022, touvron_llama2_2023, almazrouei_falcon_2023,  anil_gemini_2023, jiang_mistral_2023, openai_gpt4tech_2023, mesnard_gemma_2024}, which complicates their practical deployment and efficient serving.
Delivering efficient and real-time inference for these large models is constrained by arithmetic density (throughput/silicon area~\citep{drumond_hbfp_2018,rouhani_msfp_2020,harma_accuracy_2022}), memory footprint, and the pressure on memory bandwidth across various hardware platforms (e.g. GPU~\citep{nvidia_h100whitepaper_2022}, TPU~\citep{google_tpuv5_2023}).

Among various efficiency efforts, model compression has emerged as a crucial solution to effectively address the challenges associated with large models~\citep{micikevicius_mixedfp16_2018, drumond_training_2018, drumond_hbfp_2018, wang_bfloat16_2019, darvish_pushing_2020, rouhani_msfp_2020, dai_vs_2021,  zhang_fast_2022, yeh_like_2022, harma_accuracy_2022,rouhani_mx_2023, rouhani_microscaling_2023, hassibi_optimal_1993, lecun_optimal_1989, frantar_scaling_2023, kao_training_2022, lasby_dynamic_2023, kuzmin_svsq_2023} with quantization standing out as a prominent method in terms of overall compression ratio achieved.
Quantization effectively reduces the precision of model tensors from native floating-point representation to formats such as FP16~\citep{micikevicius_mixedfp16_2018}, BFloat16~\citep{wang_bfloat16_2019}, and INT8~\citep{baalen_int8_2023,zafrir_q8bert_2019}. In recent years, block-wise numerical formats have gained prominence~\citep{drumond_training_2018, drumond_hbfp_2018, darvish_pushing_2020, rouhani_msfp_2020, dai_vs_2021,  zhang_fast_2022, yeh_like_2022, harma_accuracy_2022, rouhani_mx_2023, rouhani_microscaling_2023}, particularly due to their ability to reduce memory footprint and increase arithmetic density in sub-8-bit regimes, while preserving accuracy with minimal hardware overhead.

Beyond quantization, researchers are also exploiting sparsity to further compress models, pruning elements of tensors that are least significant to preserve model accuracy.
This reduction in the number of parameters decreases the models' memory footprint and eliminates potentially unnecessary computation~\citep{Sze_efficientprocssing_2020}.
The most popular sparsity method is magnitude-based sparsity which prunes elements of a tensor based on their magnitudes (i.e., a proxy for the importance of an element to model accuracy)~\citep{han_learning_2015, mishra_nvidianm_2021, lu_step_2023, ding_usmlite_2023, bambhaniya_progressive_2024}, which when combined with fine-tuning achieves noticeable compression rates with negligible impact on accuracy \citep{kurtic_optimal_2022, sanh_movement_2020}.

While combining sparsity and quantization provides significant gains in arithmetic density and memory footprint, it may inadvertently have a high impact on model accuracy.
Prior work tacitly assumes that these two methods are \textit{orthogonal}, meaning that their combined use does not introduce additional errors beyond those of each method individually. 
These include studies focusing on CNNs~\citep{han_deep_2015, wang_apq_2020}, which are more resilient to quantization errors due to the absence of dot-product outliers in activation tensors~\citep{xiao_smoothquant_2023}.
Because quantization error in CNNs is relatively low, additional errors introduced by combining the two are minimal. 
Other studies focus only on compressing weights without quantizing activations~\citep{li_train_2020, liu_deja_2023}, which also leads to low overall error when combined with sparsity. 
These studies fall short of properly investigating the combined impact of these two compression methods to maximize hardware resource efficiency without compromising accuracy. 

In this paper, we study the interplay between sparsity and quantization systematically.
Sparsity and quantization leverage fundamentally separate computational properties of DNNs, but their combined impact on model accuracy involves complex interactions due to the introduction of errors in tensors. We hypothesize that sparsity and quantization are \textit{non-orthogonal} based on the following two insights.
\underline{\textbf{First}}, applying quantization before sparsity (Q$\rightarrow$S) may adversely disrupt the relative importance of tensor elements, leading to the removal of significant elements of a tensor with a significant impact on model accuracy.
\underline{\textbf{Second}}, applying sparsity before quantization (S$\rightarrow$Q) can introduce additional errors in dot product calculations, as these are influenced by the magnitudes and precision of the involved elements requiring a careful investigation.
To the best of our knowledge, we are the first to study the interplay between sparsity and quantization in depth to identify the conditions under which accuracy can be preserved or compromised.
Our contributions are summarized below:
\begin{itemize}[leftmargin=*]
\itemsep0em 
\item \textbf{Non-Orthogonality of Sparsity and Quantization}: We prove mathematically that sparsity and quantization are \emph{non-orthogonal} operations. Our per-layer error analysis shows that combination of the two introduces compounded errors and a degradation of model accuracy. Our findings challenge the conventional wisdom that these methods can be combined without a significant impact on accuracy.
\item \textbf{Corroborating the Optimal Compression Order}:
Although applying sparsity before quantization (S$\rightarrow$Q) is the most commonly adopted approach, the optimal order has not been formally demonstrated in the literature. We provide the first mathematical proof that applying sparsity \emph{before} quantization (S$\rightarrow$Q) is optimal. Moreover, we derive the upper bound for the error caused by the sub-optimal order of the transformations at the tensor level. We show that it depends linearly on the number of elements in the tensor and the size of the quantization bin.
\item \textbf{Validating Non-orthogonality Empirically}:
We validate our mathematical findings with experiments covering a diverse range of models, including prominent LLMs (OPT, LLaMA), ViT, and ResNet. These experiments support our hypotheses and mathematical findings, underscoring the non-orthogonality of sparsity and quantization, and the optimal order of compression methods. 
Our experiments demonstrate that combining sparsity and quantization, even in the optimal order, can cause up to 13\% additional error in perplexity.
\end{itemize}

\section{Related work}
\label{sec:related_work}
\niparagraph{Quantization.} 
The ever-growing size of DNN models has spurred extensive research into using narrow numerical formats for inference to reduce memory footprint and improve computational efficiency~\citep{micikevicius_mixedfp16_2018, drumond_training_2018, drumond_hbfp_2018, wang_bfloat16_2019, darvish_pushing_2020, rouhani_msfp_2020, dai_vs_2021,  zhang_fast_2022, yeh_like_2022, harma_accuracy_2022,rouhani_mx_2023, rouhani_microscaling_2023}.
Numerical formats employ scaling factors to adjust their dynamic range and can be categorized based on the granularity and levels of the scaling factors.

Element-wise scaling formats, such as FP32, BFloat16~\citep{wang_bfloat16_2019}, and FP16~\citep{micikevicius_mixedfp16_2018}, consist of sign, mantissa, and exponent components, differing in the bit allocation for each component.
Conversely, block-wise scaling formats assign scaling factors to blocks of elements, with block sizes varying by format. For instance, INT8 employs per-tensor scaling, where a single scaling factor is shared by around 1K elements.
Recent research highlights the effectiveness of max-scaled fine-grained block-wise scaling formats with block sizes smaller than 100 elements, especially in the sub-8-bit regime for both training and inference~\citep{drumond_hbfp_2018, rouhani_mx_2023, rouhani_microscaling_2023, zhang_fast_2022, rouhani_msfp_2020}.
Additionally, modern hardware platforms have adopted these techniques. For instance, the upcoming NVIDIA Blackwell GPUs~\citep{nvidia_blackwell} will support MXFP.

\niparagraph{Sparsity.}
Sparsity methods~\citep{hassibi_optimal_1993, lecun_optimal_1989, frantar_scaling_2023, kao_training_2022, lasby_dynamic_2023} aim to reduce computational and memory footprints in DNNs by selectively pruning tensor elements according to various sparsity mask selection criteria.
Broadly, these methods fall into two main categories based on sparsity patterns: unstructured~\citep{han_deep_2015, guo_dynamic_2016, frankle_lottery_2019, evci_rigging_2020} and structured~\citep{wen_learning_2016,yao:balancedsparsity,kang:accaware,mishra_nvidianm_2021, nvidia_nmtraining, zhou_nmfromscratch_2021, sun_dominosearch_2021, hubara_transposable_2021, lu_step_2023}.
Unstructured sparsity~\citep{han_deep_2015, guo_dynamic_2016, frankle_lottery_2019, evci_rigging_2020} involves removing individual tensor elements without any specific pattern.
Structured sparsity~\citep{wen_learning_2016}, on the other hand, employs specific patterns when pruning tensor elements.
Recent work~\citep{yao:balancedsparsity,kang:accaware} has highlighted the effectiveness of fine-grained N:M structured sparsity in mitigating model accuracy loss.

The fundamental operation in any sparsification scheme is selecting candidate elements for pruning among which magnitude-based sparsity~\citep{han_learning_2015} is one of the most widely used methods~\citep{lu_step_2023,ding_usmlite_2023,bambhaniya_progressive_2024}.
In addition, recent work has introduced one-shot pruning methods, such as SparseGPT~\citep{frantar_sparsegpt_2023} and Wanda~\citep{sun_wanda_2023}, aiming to eliminate the need for an additional fine-tuning phase.
However, evidence suggests that incorporating fine-tuning still improves accuracies significantly~\citep{sun_wanda_2023,syed_prunetune_2023, lu_spp_2024}.

\niparagraph{Combining sparsity and quantization.}
Prior work has studied the combination of sparsity and quantization and its impact on model accuracy in both orders: sparsity followed by quantization~\citep{yu_boost_2023, frantar_sparsegpt_2023,frantar_optimal_2022, park_quantized_2022, mishra_accelerating_2021, li_train_2020, han_deep_2015} and quantization followed by sparsity~\citep{hu_opq_2021, hawks_ps_2021,wu2023understanding, mishra_nvidianm_2021}.
There are two missing pieces of information from prior work. 
First, consensus on the optimal order of compression operations is not established. A few studies raise the question and experiment with both
orders to determine the best approach~\citep{wu2023understanding, wang_deepcomp_2022, zandonati_towards_2023, park_squantizer_2019, yu_joint_2020, mishra_accelerating_2021,kozlov_neural_2021,zhang_training_2021}, while others treat the methods as orthogonal compression schemes.
Second, there is a lack of mathematical grasp on how sparsity and quantization errors interact and influence final model performance.

\newcommand*{\vsq}{\varepsilon_{q \circ s}}
\newcommand*{\varepsc}{\varepsilon_c}
\newcommand*{\varepsqc}{\varepsilon_{q,c}^D}
\newcommand*{\vsqt}{\varepsilon_{\tilde q \circ s}}
\newcommand*{\varepss}{\varepsilon_{s}}
\newcommand*{\varepsq}{\varepsilon_{q}}
\newcommand*{\vqt}{\varepsilon_{\tilde q}}
\newcommand*{\tq}{\tilde q}
\newcommand*{\varepst}{\tilde \varepsilon}
\newcommand*{\bx}{\mathbf{x}}
\newcommand*{\bw}{\mathbf{w}}
\newcommand*{\varepsi}{\varepsilon_{i}}
\newcommand*{\vten}{\varepsilon_t}
\newcommand*{\tv}{\tilde \varepsilon}

\section{Non-orthogonality of sparsity and quantization}
\label{sec:theory}
This section provides a mathematical analysis of the interplay between sparsity and quantization, formalizing these compression methods and examining their combination, henceforth referred to as (mathematical) \textit{composition}, at both the tensor and dot-product levels.
For the remainder of the paper, we use the following notions: (1) Tensor level refers to structures that encompass both weight and activation tensors; and (2) Dot-product level pertains to the computation of inner products within these tensors, such as the matrix multiplication operation between weights and activations during the forward pass.
Our analysis centers on quantization methods that reduce the bit-width of model weights and activations using block-wise numerical formats, which are prevalent in practical implementations~\citep{rouhani_msfp_2020, drumond_hbfp_2018, zhang_fast_2022, rouhani_mx_2023, rouhani_microscaling_2023, fp8_micikevicius_2022}.
These formats determine the scaling factor based on the element with the maximum magnitude within the block.
We refer to any quantization method employing these numerical formats as \textit{max-scaled block-wise quantization}.
We use magnitude-based sparsity for both unstructured and N:M structured sparsity.

\begin{definition}[\textbf{Max-scaled block-wise quantization}]
Let \(\mathbf{x} \in \mathbb{R}^n\) be a block of $n$ numbers and $m \in \mathbb{N}$ denote the quantization bit-width. 
Max-scaled block-wise quantization $q: \mathbb{R}^n \rightarrow \mathbb{R}^n$ is a transformation of the block $\bx$ such that
\begin{equation}
    x_i \xrightarrow{q} Q_m(x_i, scale)
\end{equation}
where $scale = \max(|x_1|, \dots, |x_n|)$ is the scaling factor.
$Q_m(\cdot, scale)$ quantizes the given element with the scaling factor $scale$ and the number of mantissa bits $m$. 
The exact form of $Q_m$ depends on the numerical format and can be found in Appendix \ref{app:def_q}.
For instance, INT$m$ quantization transformation is defined as follows:
\begin{equation}
Q_m(x_i, scale) = s \cdot \left \lfloor \frac{x_i}{s} \right \rceil, 
\text{where } 
s=\frac{scale}{2^{m-1}-1},
\text{ and } 
\left \lfloor \cdot \right \rceil
\text{ is the rounding to the nearest integer.} 
\end{equation}
\end{definition}

\begin{definition}[\textbf{Magnitude-based sparsity}]
Let $\mathbf{x} \in \mathbb{R}^n$ be a block of $n$ numbers.
We assume $n$ is divisible by $M$ and we consider each group of $M$ elements in the block.
The magnitude-based N:M sparsity transformation can be formulated as:
\begin{equation}
   \tilde x_i \coloneqq   
   \begin{cases}
        0 & \text{if } |x_i| < \xi\\
        x_i & \text{otherwise}
    \end{cases}, \text{for } i=1,2,...,M
\end{equation}
where $\xi$ is the $N$-th largest element in the set $\{|x_1|, \dots, |x_M|\}$. The same formula can be adjusted to represent $p \%$ unstructured sparsity by defining $\xi$ as the $N$-th largest element in the tensor, where $N = \left \lfloor  M \cdot p / 100 \right \rceil$, $M$ is the number of elements in the tensor, and $\left \lfloor \cdot \right \rceil$ is the operation of rounding to the nearest integer.
\end{definition}

In the remainder of this section, we delve into the composition of sparsity and quantization at two different levels.
First, we examine the effects of applying this composition in different orders at the tensor level, observing how individual tensors are altered.
Then, we explore how the composition influences the result of the dot product operation.

\subsection{Tensor-level analysis}
\label{sec:order}
Sparsity and quantization transformations inherently introduce errors by decreasing precision or pruning tensor elements.
To study the composition of sparsity and quantization transformations at the tensor level, we introduce formal definitions of transformation error and orthogonality in compression.
We prove that orthogonality in compression between sparsity and quantization \emph{does not} persist within this composition.

The following definition formalizes the error for a specific transformation at the block level, which consists of a subset of tensor elements.

\begin{definition}[\textbf{Transformation error}]
\label{def:transformation_error}
    Let $\mathbf{x} \in \mathbb{R}^n$ be a block of $n$ numbers, which are the input of a transformation $f: \mathbb{R}^n \rightarrow \mathbb{R}^n$. 
    We define $\varepsilon_f(\mathbf{x}) \coloneqq \mathbf{x} - f(\mathbf{x}) $ as the error of the transformation $f$. 
\end{definition}

Definition~\ref{def:transformation_error} can be extended to the tensor level, despite being defined at the block level.
The cumulative error of a tensor can be viewed as the summation of individual errors across all its constituent blocks.
Hence, the theorems analyzed at the block level are indicative of the behavior when scaled up to the tensor level.

Composing two compression methods (transformations) is expected to introduce additional errors.
Any error introduced by the first transformation becomes part of the input to the second transformation, potentially amplifying the initial error and resulting in a larger overall error. 

\begin{definition}[\textbf{Tensor-level orthogonality}]
\label{def:orthogonal}
    We define two transformations $f$ and $g$ to be \textit{orthogonal in compression\footnote{In the remainder of the paper, we use the term "orthogonal" to refer to "orthogonal in compression" for simplicity.}} if any order of their composition does not introduce any additional error, and thus, the following inequalities hold:
    \begin{equation}
        \forall \mathbf{x} \in \mathbb{R}^n, ~ \left\| \varepsilon_{g \circ f}(\mathbf{x}) \right\| \le \left\| \varepsilon_f(\mathbf{x}) \right\| + \left\| \varepsilon_g(\mathbf{x}) \right\| \text{ and } \left\| \varepsilon_{f \circ g}(\mathbf{x}) \right\| \le \left\| \varepsilon_f(\mathbf{x}) \right\| + \left\| \varepsilon_g(\mathbf{x}) \right\|
    \end{equation}
where $\|\cdot\|$ is an $L_p$ norm, $p \in [1, +\infty)$.
\end{definition}

\begin{theorem}
\label{q_ind_s}
Let $q$ be the max-scaled block-wise quantization and $s$ be the magnitude-based sparsity transformation. 
Applying sparsity before quantization does not introduce any additional error:
\begin{equation}
    \forall \mathbf{x} \in \mathbb{R}^n, ~ \left\| \varepsilon_{q \circ s}(\mathbf{x}) \right\| \le \left\| \varepsilon_q(\mathbf{x}) \right\| + \left\| \varepsilon_s(\mathbf{x}) \right\|
\end{equation}
Moreover, the equality is attainable.
\end{theorem}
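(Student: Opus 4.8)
The plan is to exploit the single structural fact that distinguishes the two orders: magnitude-based sparsity always retains the element of largest magnitude, so the scaling factor $scale = \max(|x_1|,\dots,|x_n|)$ is left unchanged by $s$. I would first record this as the key step, noting that $\max_i |s(\bx)_i| = \max_i |x_i|$ because $s$ keeps the $N \ge 1$ largest-magnitude entries and hence in particular the global maximum survives pruning. Consequently the quantizer $q$ acts on $s(\bx)$ through exactly the same per-element map $Q_m(\cdot, scale)$ that it would use on $\bx$ itself. I would also observe that any sensible format sends zero to zero, $Q_m(0,scale)=0$, which is immediate for $\mathrm{INT}m$ from the definition and holds in general by Appendix~\ref{app:def_q}.

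Next I would partition the coordinates into the pruned set $P = \{i : s(\bx)_i = 0\}$ and its complement, the kept set $K$, and compute $\vsq(\bx)$ coordinate-wise. For $i \in P$ we have $s(\bx)_i = 0$, so $q(s(\bx))_i = Q_m(0,scale) = 0$ and therefore $\vsq(\bx)_i = x_i = \varepss(\bx)_i$. For $i \in K$ we have $s(\bx)_i = x_i$, so by scale-invariance $q(s(\bx))_i = Q_m(x_i,scale) = q(\bx)_i$ and therefore $\vsq(\bx)_i = \varepsq(\bx)_i$. In words, the composed error coincides with the pure sparsity error on the pruned coordinates and with the pure quantization error on the kept coordinates, and the two supports are handled independently.

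I would then lift this coordinate identity to the $L_p$ norm. Since $\varepss(\bx)$ is supported on $P$, summing $p$-th powers gives $\|\vsq(\bx)\|^p = \|\varepss(\bx)\|^p + \sum_{i \in K} |\varepsq(\bx)_i|^p \le \|\varepss(\bx)\|^p + \|\varepsq(\bx)\|^p$, where the inequality holds because $K$ runs over only a subset of all coordinates. Setting $A = \|\varepss(\bx)\|^p$ and $C = \|\varepsq(\bx)\|^p$, it remains to pass from $\|\vsq(\bx)\| \le (A+C)^{1/p}$ to the triangle-type bound $A^{1/p} + C^{1/p}$, which is the elementary inequality $(A+C)^{1/p} \le A^{1/p} + C^{1/p}$ valid for $p \ge 1$ (equivalently $u^p + v^p \le (u+v)^p$ for $u,v \ge 0$). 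For attainability of equality I would exhibit a block $\bx$ lying exactly on the quantization grid, so that $\varepsq(\bx) = \mathbf{0}$; then the kept coordinates quantize without error, $\vsq(\bx) = \varepss(\bx)$, and $\|\vsq(\bx)\| = \|\varepss(\bx)\| = \|\varepss(\bx)\| + \|\varepsq(\bx)\|$.

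The crux of the whole argument — and the step I expect to be the main obstacle to state cleanly — is the scale-invariance observation: everything downstream relies on $s$ preserving the block maximum, which is exactly the property that fails for the reverse order $s \circ q$ and becomes the source of the non-orthogonality analyzed later in the paper. A secondary point requiring care is that the coordinate-wise identity only yields the stated bound after the convexity inequality $u^p + v^p \le (u+v)^p$, so I would state the restriction $p \in [1,+\infty)$ explicitly rather than silently specialize to $L_2$.
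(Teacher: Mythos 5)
Your proof is correct and follows essentially the same route as the paper's: both hinge on the observation that magnitude-based sparsity preserves the block maximum and hence the quantization scale, identify the composed error coordinate-wise with the sparsity error on pruned indices and the quantization error on kept indices, and then bound the norm (you via the exact $p$-th-power decomposition plus subadditivity of $t\mapsto t^{1/p}$, the paper via two triangle inequalities). The only substantive difference is in attainability: you exhibit the degenerate case $\varepsilon_q(\mathbf{x})=\mathbf{0}$, which suffices for the theorem as stated, whereas the paper additionally characterizes when equality holds for $p>1$ and gives a non-trivial $L_1$ example in which both the sparsity and quantization errors are nonzero.
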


The proof of Theorem~\ref{q_ind_s} can be found in Appendix \ref{app:mathematics}.
The main idea behind the proof is that as the sparsity transformation does not prune the largest element in the block, the $scale$ quantization parameter remains unchanged.
Consequently, the quantization error for the non-zero components before and after sparsity remains the same.

\begin{theorem}
\label{s_not_ind_q}
Let $q$ be the max-scaled block-wise quantization and $s$ be the magnitude-based sparsity transformation.
Applying quantization before sparsity \emph{may} introduce additional error:
\begin{equation}
    \exists \mathbf{x} \in \mathbb{R}^n, ~ \left\| \varepsilon_{s \circ q}(\mathbf{x}) \right\| > \left\| \varepsilon_q(\mathbf{x}) \right\| + \left\| \varepsilon_s(\mathbf{x}) \right\|
\end{equation}
\end{theorem}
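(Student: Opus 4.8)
The statement is an existence claim, so the plan is to exhibit a single block $\bx$ (a counterexample) for which the composition error strictly exceeds the sum of the two individual errors. The conceptual reason this is possible is exactly the mechanism flagged in the introduction: since $q$ is applied first, its max-scaled rounding can collapse the magnitude gap between a genuinely significant element and a less significant one, so that magnitude-based sparsity — which now ranks elements by their \emph{quantized} magnitudes — can no longer tell them apart and may discard the element that was the more important one in the original block.

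Concretely, I would work in $\mathbb{R}^3$ with $N{:}M = 2{:}3$ magnitude sparsity and, say, INT$4$ quantization, and choose $\bx = (7,\,2.4,\,1.6)$. The leading entry fixes $scale = 7$ and step $s = 7/(2^{3}-1) = 1$, so that $q$ leaves it exactly ($q(7)=7$) while rounding the other two to the \emph{same} grid point, $q(\bx) = (7,2,2)$. On the original block the two largest magnitudes are $7$ and $2.4$, so $s$ prunes the third entry: $s(\bx) = (7,2.4,0)$ and $\varepsilon_s(\bx) = (0,0,1.6)$. On the quantized block entries $2$ and $3$ are tied, so the sparsity mask — which under the $N{:}M$ rule must retain exactly $N=2$ nonzeros — is free to zero the originally larger entry $x_2$, giving $s(q(\bx)) = (7,0,2)$.

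It then remains to compute the three errors and verify the strict inequality. We get $\varepsilon_q(\bx) = (0,0.4,-0.4)$, $\varepsilon_s(\bx) = (0,0,1.6)$, and $\varepsilon_{s\circ q}(\bx) = \bx - s(q(\bx)) = (0,2.4,-0.4)$. In the $L_1$ norm this yields $\left\| \varepsilon_{s\circ q}(\bx) \right\|_1 = 2.8 > 0.8 + 1.6 = \left\| \varepsilon_q(\bx) \right\|_1 + \left\| \varepsilon_s(\bx) \right\|_1$, and the same block satisfies the strict inequality in $L_2$ (roughly $2.43 > 0.57 + 1.6$), so a single explicit block settles the theorem; I would then remark that the gap persists for every $L_p$, $p \in [1,\infty)$. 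The intuition is that the composition pays the \emph{full} magnitude of the significant entry $x_2$, since it was pruned, whereas $q$ alone pays only a small rounding error on it and $s$ alone pays only the small pruned entry $x_3$.

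The main obstacle — and the point that makes the construction non-obvious — is that max-scaled rounding is \emph{weakly monotone} in magnitude: it can never strictly reverse the relative order of two entries, it can at most map two distinct magnitudes to a single tied value. Hence the counterexample cannot rely on a genuine order reversal; it must be engineered around a tie that lands exactly at the sparsity threshold, where the $N{:}M$ rule is forced to keep precisely $N$ survivors and a valid tie-break may discard the originally more significant element. Ensuring the tie sits at the boundary (rather than strictly inside the kept or pruned set, where it would be harmless) and confirming strict dominance across the whole $L_p$ family is the delicate part of the argument.
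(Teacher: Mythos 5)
Your proposal is correct and takes essentially the same approach as the paper: the paper's proof also exhibits an explicit block (there, $\mathbf{x}=(3.9,4.0)^{T}$ with INT4 and $1{:}2$ sparsity) in which max-scaled rounding collapses two distinct magnitudes onto the same grid point, and an adversarial tie-break then prunes the originally larger element, making $\left\|\varepsilon_{s\circ q}\right\|$ strictly exceed $\left\|\varepsilon_q\right\|+\left\|\varepsilon_s\right\|$ for all $L_p$, $p\in[1,\infty)$. Your arithmetic checks out, and your observation that the counterexample must be engineered around a tie at the sparsity threshold (since max-scaled rounding is only weakly monotone) is exactly the mechanism the paper exploits.
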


Moreover, a global upper bound exists for the additional error arising from this specific order of transformations (Q$\rightarrow$S).
This upper bound is solely determined by the quantization method and the parameters of the sparsity type, independent of the input data.
The following theorem precisely quantifies the magnitude of this additional error.
\begin{theorem}
    \label{s_not_ind_q_upper}
    Let $q$ be the max-scaled block-wise quantization and $s$ be the magnitude-based N:M sparsity transformation. Let $step$ be the least upper bound for the magnitude of the quantization error for one element: $step = \sup \{ |\varepsq(\bx)_i| \; | \; \bx \in \mathbb{R}^n, i \in \{1 \dots n \} \}$. Then the error of the composition $s \circ q$ with respect to $L_1$ norm has the following upper bound:
    \begin{equation}
        \forall \mathbf{x} \in \mathbb{R}^n, ~ \left\| \varepsilon_{s \circ q}(\mathbf{x}) \right\|_1 \le \left\| \varepsilon_q(\mathbf{x}) \right\|_1 + \left\| \varepsilon_s(\mathbf{x}) \right\|_1 + \underbrace{2 \cdot step \cdot \frac{M - N}{M} \cdot n}_{\text{additional error}} 
    \end{equation}
\end{theorem}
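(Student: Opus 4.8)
The plan is to compute the $L_1$ error of the composition $s \circ q$ explicitly and then compare the sparsity mask chosen \emph{after} quantization against the one chosen directly on $\bx$. Write $\mathbf{y} \coloneqq q(\bx)$, and within one group of $M$ coordinates let $S_x$ and $S_y$ be the index sets of the $N$ retained (largest-magnitude) entries when magnitude-based sparsity acts on $\bx$ and on $\mathbf{y}$ respectively; both have cardinality $N$. Since $s$ zeroes the pruned coordinates and leaves the rest untouched, I would first record that the contribution of this group to $\left\| \varepsilon_{s \circ q}(\bx) \right\|_1$ is
\[
    \sum_{i \in S_y} |x_i - y_i| + \sum_{i \notin S_y} |x_i|,
\]
the block-level statement following by summing over the $n/M$ groups. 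Writing $d_i \coloneqq |x_i - y_i| = |\varepsq(\bx)_i| \le step$ and adding and subtracting $\sum_i d_i$, this rearranges to the group's share of $\left\| \varepsq(\bx) \right\|_1$ plus $\sum_{i \notin S_y}(|x_i| - d_i)$; discarding the nonnegative $-d_i$ terms, it then suffices to control $\sum_{i \notin S_y} |x_i|$.

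The crux is to bound the gap $\sum_{i \notin S_y} |x_i| - \sum_{i \notin S_x} |x_i|$, i.e. the extra magnitude pruned by using the post-quantization mask rather than the ideal one, noting that $\sum_{i \notin S_x}|x_i|$ is exactly the group's share of $\left\| \varepss(\bx) \right\|_1$. Here I would exploit two facts: the optimality of each mask for its own input, so that $S_y$ retains the $N$ largest magnitudes of $\mathbf{y}$ and hence $\sum_{i \notin S_y}|y_i| \le \sum_{i \notin S_x}|y_i|$; and the reverse triangle inequality $\big| |x_i| - |y_i| \big| \le |x_i - y_i| = d_i \le step$. Chaining these gives $\sum_{i \notin S_y}|x_i| \le \sum_{i \notin S_y}|y_i| + \sum_{i \notin S_y} d_i \le \sum_{i \notin S_x}|y_i| + \sum_{i \notin S_y} d_i \le \sum_{i \notin S_x}|x_i| + \sum_{i \notin S_x} d_i + \sum_{i \notin S_y} d_i$, and each residual sum has $M-N$ terms, each at most $step$, yielding the per-group bound $2 \cdot step \cdot (M-N)$.

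Assembling the pieces and summing over the $n/M$ groups produces the claimed additional error $2 \cdot step \cdot \frac{M-N}{M} \cdot n$, since $(M-N) \cdot n / M$ is precisely the number of pruned coordinates in the block. The main obstacle will be the middle step: one must resist bounding the symmetric difference of the two masks directly (which is brittle and can involve arbitrarily many swapped indices) and instead route the comparison through the optimality inequality on $\mathbf{y}$ together with the two applications of the reverse triangle inequality that transfer the estimate from $\mathbf{y}$ back to $\bx$. Care is also needed to keep the argument strictly per-group, since N:M sparsity is defined group-wise, and to note that it is the $L_1$ norm specifically that makes this additive, coordinate-wise accounting close cleanly.
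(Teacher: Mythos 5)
Your proof is correct and, at the top level, follows the same decomposition as the paper's: split $\left\|\varepsilon_{s\circ q}(\bx)\right\|_1$ into the quantization error on the retained coordinates (bounded by $\left\|\varepsq(\bx)\right\|_1$) plus the mass pruned under the post-quantization mask, and then show the latter exceeds $\left\|\varepss(\bx)\right\|_1$ by at most $2\cdot step$ per pruned coordinate. Where you genuinely diverge is in how that last comparison is made. The paper constructs an explicit pairing (a permutation $W$) between the set pruned by $s\circ q$ and the set that $s$ would prune on the original $\bx$, and bounds each paired difference by $2\cdot step$ via the observation that a reordering $|x_i|<|x_j|$ with $q(\bx)_i\ge q(\bx)_j$ forces $|x_j-x_i|\le 2\cdot step$; this requires arguing about which elements collide and how the two masks' symmetric difference is matched up, which is exactly the brittleness you flag. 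You instead route the comparison through the sum-level optimality of each mask for its own input, $\sum_{i\notin S_y}|y_i|\le\sum_{i\notin S_x}|y_i|$, sandwiched between two applications of the reverse triangle inequality, each contributing at most $(M-N)\cdot step$ per group. This buys a cleaner and more robust argument: no case analysis on collisions or signs, no bijection to exhibit, and the per-group accounting makes the $\frac{M-N}{M}\cdot n$ count immediate. The paper's pairing argument, in exchange, localizes the damage to the specific swapped elements (which is the picture behind its Figure on collisions), but as an upper-bound proof your version is tighter in exposition and yields the identical constant.
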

The general formulation of Theorem \ref{s_not_ind_q_upper} for all $L_p$ norms and the proof of Theorem \ref{s_not_ind_q} and \ref{s_not_ind_q_upper} can be found in Appendix \ref{app:mathematics}.
As a corollary of Theorem~\ref{q_ind_s}, \ref{s_not_ind_q}, and \ref{s_not_ind_q_upper}, it follows that the optimal order of transformations is sparsity followed by quantization, as this sequence \emph{does not} introduce any additional error.
Moreover, according to Definition~\ref{def:orthogonal}, sparsity and quantization are \emph{non-orthogonal} at the tensor level.

\subsection{Dot-product-level analysis}
\label{sec:orthogonality}
In this section, we delve into the error linked with the dot product operation, which is the primary operation in DNNs.
Our analysis focuses on scenarios where weight tensors undergo sparsity and quantization, while activation tensors solely undergo quantization.
We first extend the definition of transformation error to the dot-product level.
\begin{definition}[\textbf{Transformation error over the dot product}]
Let $\mathbf{x}, \mathbf{w} \in \mathbb{R}^n$ denote the inputs of a transformation $f: \mathbb{R}^n \rightarrow \mathbb{R}^n$ and the dot product operation $\langle .,.\rangle: \mathbb{R}^n \times \mathbb{R}^n \rightarrow \mathbb{R}$.
We define $\varepsilon_f^D(\mathbf{x}, \mathbf{w}) \coloneqq \langle \mathbf{x}, \mathbf{w} \rangle - \langle f(\mathbf{x}), f(\mathbf{w}) \rangle $ as the error of the transformation $f$ over dot product.
Similarly, we define $\varepsilon_{f,g}^D(\mathbf{x}, \mathbf{w}) \coloneqq \langle \mathbf{x}, \mathbf{w} \rangle - \langle f(\mathbf{x}), g(\mathbf{w}) \rangle $ as the error over the dot product when different transformations are applied to $x$ and $w$.
\end{definition}

At the dot-product level, we define two compression methods as \textit{orthogonal} if their composition, in any order, does not introduce additional error, akin to Definition~\ref{def:orthogonal}.
\begin{definition}[\textbf{Dot-product-level orthogonality}] Let $\mathbf{x}, \mathbf{w} \in \mathbb{R}^n$ denote the inputs of transformations $f: \mathbb{R}^n \rightarrow \mathbb{R}^n$ and $g: \mathbb{R}^n \rightarrow \mathbb{R}^n$, and the dot product operation $\langle .,.\rangle: \mathbb{R}^n \times \mathbb{R}^n \rightarrow \mathbb{R}$. Let the transformation $f$ be applied to both $\mathbf{x}$ and $\mathbf{w}$, and transformation $g$ be applied only to $\mathbf{w}$. Let $c$ denote a composition of $f$ and $g$ in any order, $c := f \circ g ~ \text{or} ~ c := g \circ f$. We define two transformations $f$ and $g$ to be \textit{orthogonal} on the dot-product level if any order of their composition applied to the second term $\mathbf{w}$ does not introduce any additional error:
\begin{equation}
    \forall \mathbf{x}, \mathbf{y} \in \mathbb{R}^n, ~ | \varepsilon_{f,c}^D(\mathbf{x}, \mathbf{w}) | < | \varepsilon_{I,g}^D(\mathbf{x}, \mathbf{w})| + |\varepsilon^D_{f}(\mathbf{x}, \mathbf{w}) |
\end{equation}
\end{definition}
In the following theorem, we demonstrate that any composition of sparsity and quantization yields additional error, rendering these two methods non-orthogonal.

\begin{theorem}
\label{thm:dotp}
Let $q$ be the max-scaled block-wise quantization, $s$ be the magnitude-based sparsity transformation, $c$ be the composition which is either $s \circ q$ or $q \circ s$ and $I$ be the identity function.
Composition of max-scaled quantization $q$ and sparsity $s$ in any order produces additional error in any order, given that only the second operand, i.e., weight, is pruned: 
\begin{equation}
    \exists \bx, \bw \in \mathbb{R}^n, ~ | \varepsilon_{q,c}^D(\mathbf{x}, \mathbf{w}) | > | \varepsilon_{I,s}^D(\mathbf{x}, \mathbf{w})| + |\varepsilon^D_{q}(\mathbf{x}, \mathbf{w}) |
\end{equation}
Moreover,
\begin{equation}
    | \varepsilon_{q,c}^D(\mathbf{x}, \mathbf{w}) | \le | \varepsilon_{I,s}^D(\mathbf{x}, \mathbf{w})| + |\varepsilon^D_{q}(\mathbf{x}, \mathbf{w}) | + \underbrace{\overbrace{|\langle q(\bx), \tilde \varepsilon_c(\bw) \rangle|}^{\varepsilon_t} + \overbrace{|\langle \varepsq(\mathbf{x}), \varepss(\mathbf{w}) \rangle|}^{\varepsilon_i}}_{\text{additional error}}
\end{equation}
where $\tilde \varepsilon_c(\bx)$ is defined as the correction error vector of the composition:
\begin{equation}
    \varepsilon_c(\bx) = \varepsilon_q(\bx) + \varepsilon_s(\bx) + \tilde \varepsilon_c(\bx)
\end{equation}
\end{theorem}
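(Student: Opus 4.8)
The plan is to reduce the whole statement to a single exact identity that writes the composed dot-product error $\varepsqc(\bx,\bw)$ as the two ``individual'' dot-product errors plus exactly the two cross terms named in the statement. Once that identity is in hand, the upper bound is one application of the triangle inequality, and the strict inequality is obtained by exhibiting an input on which the cross terms refuse to cancel.

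First I would expand everything in terms of error vectors. Writing $q(\bx) = \bx - \varepsq(\bx)$ and $c(\bw) = \bw - \varepsc(\bw)$ and expanding the bilinear form gives
\begin{equation}
    \varepsqc(\bx,\bw) = \langle \bx, \varepsc(\bw)\rangle + \langle \varepsq(\bx), \bw\rangle - \langle \varepsq(\bx), \varepsc(\bw)\rangle .
\end{equation}
Next I substitute the defining decomposition $\varepsc(\bw) = \varepsq(\bw) + \varepss(\bw) + \tilde\varepsilon_c(\bw)$ and regroup. The terms involving only $\varepsq$ assemble into $\varepsilon_q^D(\bx,\bw) = \langle \bx, \varepsq(\bw)\rangle + \langle \varepsq(\bx), \bw\rangle - \langle \varepsq(\bx), \varepsq(\bw)\rangle$; the term $\langle \bx, \varepss(\bw)\rangle$ is exactly $\varepsilon_{I,s}^D(\bx,\bw)$; and the leftover terms combine, using $\bx - \varepsq(\bx) = q(\bx)$, into
\begin{equation}
    \varepsqc(\bx,\bw) = \varepsilon_q^D(\bx,\bw) + \varepsilon_{I,s}^D(\bx,\bw) + \langle q(\bx), \tilde\varepsilon_c(\bw)\rangle - \langle \varepsq(\bx), \varepss(\bw)\rangle .
\end{equation}
Applying the triangle inequality to this identity and recognising the last two magnitudes as $\vten$ and $\varepsi$ immediately yields the claimed upper bound, valid for either order $c = s\circ q$ or $c = q\circ s$, since the derivation uses nothing about $c$ beyond the definition of $\tilde\varepsilon_c$.

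For the strict inequality I would construct an explicit block. The identity shows that the additional error vanishes only when the two cross terms cancel against one another and against the individual errors, so it suffices to choose $\bx,\bw$ in a single N:M group on which $\tilde\varepsilon_c(\bw)$ and $\varepss(\bw)$ are nonzero and the signs of all four summands on the right-hand side align; then $|\varepsqc(\bx,\bw)|$ equals the full sum of magnitudes and hence strictly exceeds $|\varepsilon_{I,s}^D(\bx,\bw)| + |\varepsilon_q^D(\bx,\bw)|$. To set this up I would first compute $\tilde\varepsilon_c$ coordinatewise for each order: it is supported on the pruned coordinates, where the composition zeroes the element while $\varepsq + \varepss$ does not, and additionally---for $c = s\circ q$---on any coordinate where quantization reorders magnitudes and shifts the sparsity mask (cf. Theorem~\ref{s_not_ind_q}); on retained, mask-stable coordinates the scale is unchanged because $s$ never removes the block maximum, so the correction is zero.

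The main obstacle is precisely this existence step. The upper-bound identity is essentially bookkeeping, but turning the strict inequality into a rigorous statement requires pinning down the rounding behaviour of $Q_m$ on a concrete input and verifying that all four terms in the identity line up in sign simultaneously, rather than merely arguing that some cross term is nonzero; I would therefore carry out the $c = s\circ q$ and $c = q\circ s$ cases separately, reusing the mask-shift construction of Theorem~\ref{s_not_ind_q} to guarantee a nonzero $\tilde\varepsilon_c$ in the former.
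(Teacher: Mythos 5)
Your proposal follows essentially the same route as the paper's proof: the exact same bilinear expansion and substitution of $\varepsc(\bw) = \varepsq(\bw) + \varepss(\bw) + \tilde\varepsilon_c(\bw)$ yielding the identity $\varepsqc(\bx,\bw) = \varepsilon_{I,s}^D(\bx,\bw) + \varepsilon_q^D(\bx,\bw) + \langle q(\bx), \tilde\varepsilon_c(\bw)\rangle - \langle \varepsq(\bx), \varepss(\bw)\rangle$, followed by the triangle inequality for the bound and a concrete witness for the strict inequality. The only part you leave unexecuted is the explicit witness, which the paper supplies with $\bx=(1.0,1.0)^T$, $\bw=(0.6,1.3)^T$, HBFP4 and $1$:$2$ sparsity (there $\varepsi=0$ and the remaining three terms are all positive with $\vten = 0.025 > 0$, giving $0.65 > 0.625$), exactly the sign-alignment construction you describe.
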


Proof of Theorem~\ref{thm:dotp} can be found in Appendix~\ref{app:mathematics}.

\niparagraph{Analysis of the additional error.}
As a corollary of Theorem~\ref{thm:dotp}, the composition of max-scaled sparsity and quantization is \emph{non-orthogonal}, resulting in two additional error terms. 

The term $\vten$ incorporates the correction vector of the composition $\varepst_c$, which carries the additional error from the tensor level to the dot-product level. 
Depending on the order of the composition, the value of $\vten$ varies. 
When quantization precedes sparsity, certain elements within a block may become equal due to quantization, leading the sparsity step to prune different elements than it would on the original tensor. 
This introduces additional error, as previously significant elements may be inadvertently pruned.
If sparsity precedes quantization, the correction vector $\varepst_c$ exclusively comprises the quantization errors of the pruned elements. 
As a result, the magnitude of the additional error is typically smaller compared to the reverse order.

The term $\varepsi$ also contributes to the additional error, encoding the interaction between the error vectors $\varepsq(\bx)$ and $\varepss(\bw)$. 
However, since the norm of quantization and sparsity errors are generally smaller than the norm of the quantization block, this term is less significant than $\vten$.

We provide a more detailed explanation of the additional error analysis in Appendix~\ref{app:additionalerror}

Finally, to experimentally validate our mathematical findings, we define a metric, \textit{orthogonality threshold} to assess whether the transformations are orthogonal.

\begin{definition}[\textbf{Orthogonality threshold}]
    Let $M$ be a DNN model under consideration, $\text{EM}(M)$ be an evaluation metric that measures the performance of the model $M$ (e.g., perplexity or cross-entropy loss), $\text{EM}_C(M)$ be the evaluation metric of the model $M$ with transformation $C$, which is either sparsity $S$ or quantization $Q$. Moreover, let $\text{Err}_C(M) = \text{EM}_C(M) - \text{EM}(M)$ be the evaluation metric error of the transformation $C$ for the model $M$. We define \textit{orthogonality threshold} as:
    \begin{equation}
        \label{eq:bound}
        \text{Orthogonality Threshold} = \text{EM}(M) + \text{Err}_Q(M) + \text{Err}_S(M)
    \end{equation}
\end{definition}
If the compression methods are non-orthogonal, and the evaluation metric improves with lower values (e.g., perplexity), we expect the compressed model’s evaluation metric (e.g., perplexity) to worsen and thus exceed the orthogonality threshold due to compounded errors.
Similarly, if the compression methods are non-orthogonal, and the evaluation metric improves with higher values (e.g., accuracy), we expect the compressed model’s evaluation metric (e.g., accuracy) to decrease and thus fall below the orthogonality threshold.

\section{Experimental methodology and results}
\label{sec:results}
\niparagraph{Models, datasets, and evaluation setup.}
We study the most widely adopted Transformer-based models, including OPT~\citep{zhang_opt_2022} and LLaMA~\citep{touvron_llama2_2023} model families.
In line with prior work~\citep{xiao_smoothquant_2023, frantar_sparsegpt_2023, sun_wanda_2023}, we fine-tune pre-trained models and evaluate perplexity on the WikiText2~\citep{merity_wikitext2_2017} dataset.
The pretrained LLMs used in our experiments are base (general-purpose) models, not instruct-tuned variants.
In addition, we assess non-orthogonality across different metrics of ViT~\citep{dosovitskiy_vit_2021} and ResNet~\citep{he2016deep} on ImageNet-1k~\citep{deng_imagenet_2009}.
In all experiments, we designate the dense FP32 configuration as the primary baseline. 

Our experiments span a diverse range of configurations to validate our mathematical findings, including various variants of max-scaled formats, such as INT8 quantization with per channel scaling~\citep{dettmers_llmint8_2022}, HBFP8/6~\citep{drumond_hbfp_2018}, and MXFP8/6~\citep{rouhani_microscaling_2023}.
We primarily study magnitude-based 50\% unstructured and 2:4 structured sparsity with sparsity-aware fine-tuning.
We define N:M structured sparsity as following: \emph{in every group of M consecutive weights, at most N weights can have non-zero values.}
We evaluate the impact of a higher compression ratio on \bench{ViT-B/16} and \bench{ResNet-50} by applying 75\% unstructured sparsity, 1:4 structured sparsity, and HBFP4. Detailed results can be found in Appendix~\ref{app:vit} and~\ref{app:cnn}.
We also explore post-training one-shot sparsity methods like SparseGPT~\citep{frantar_sparsegpt_2023} and Wanda~\citep{sun_wanda_2023} in Appendix~\ref{app:oneshotsparsity}.

In our experiments, we evaluate the impact of each compression method by analyzing the variations in perplexity and cross-entropy loss compared to the baseline, thereby focusing on the cumulative error in the model output.
Additionally, we analyze the errors of intermediate layers to support our mathematical analysis, as detailed in Section~\ref{sec:layerwise} and Appendix ~\ref{app:error_propogation}.

Note that we focus on cross-entropy loss for \bench{ViT-B/16} and \bench{ResNet-50} instead of classification accuracy. 
This is because accuracy remains unaffected as long as the most likely label remains unchanged, regardless of its absolute value. 
However, our primary metric for orthogonality threshold is the aggregated errors introduced in the model output distribution. 
 Table~\ref{tab:vitresults} in Appendix~\ref{app:vit} and Table~\ref{tab:resnetresults} in Appendix~\ref{app:cnn} present orthogonality threshold on additional metrics.

\niparagraph{Experimental setup.}
We exclusively sparsify and/or quantize layers with trainable parameters.
Specifically, we target all linear layers in LLMs (excluding the lm-head or embedding layers following the literature~\citep{frantar_gptq_2022, frantar_sparsegpt_2023, lee_owq_2024}), and all linear and convolution layers (including the initial embedding layer) in \bench{ViT-B/16} and \bench{ResNet-50}.
These layers collectively constitute approximately 99\% of the total parameters.
In all experiments, we sparsify weights while keeping activation dense.
Both weights and activation tensors are quantized before matrix multiplication operations.
For OPT, LLaMA, ViT, and ResNet fine-tuning, we employ sparse fine-tuning on a dense FP32 pre-trained model, recomputing sparsity masks at each iteration.
In experiments involving sparsity followed by quantization (S$\rightarrow$Q), we apply one-shot quantization to sparse fine-tuned models.
Conversely, for experiments with the reverse order (Q$\rightarrow$S), we directly fine-tune the model in a quantized and sparsified manner.
At each iteration, we quantize activations and weights while applying sparsity to weight tensors. 
We validate the effectiveness of these compression recipes through an ablation study, the details of which are presented in Appendix~\ref{app:fine-tuning}.
To ensure fair comparison, we maintain uniform hyperparameters across various number formats for a given model and sparsity type (details in Appendix~\ref{app:hyperparameters}).
We present a limited sensitivity study on the initial seed number in Table~\ref{tab:seeds} in Appendix~\ref{app:robustness}.
A summary of our experimental setup is presented in Table~\ref{tab:exp-setup}.

\subsection{Empirical study 1: Order of sparsity and quantization}
This section presents empirical evidence demonstrating that applying sparsity before quantization leads to better perplexities compared to the reverse order.
These results are aligned with the mathematical analysis in Section~\ref{sec:order}.
Table~\ref{tab:order} presents perplexities for \bench{OPT-125M} and \bench{LLaMA-2-7B} under various number formats and sparsity types considering both orders of transformations. ``\emph{50\%}'' denotes unstructured sparsity, while ``\emph{2:4}'' represents a variant of N:M structured sparsity.
In the FP32 columns, only one perplexity for each compression order is reported because no quantization is applied. The best results for each \underline{(sparsity type, number format)} pair are highlighted in bold.

We evaluate the order of transformations across different configurations: sparsity followed by quantization (S$\rightarrow$Q) and quantization followed by sparsity (Q$\rightarrow$S). Configurations with lower perplexities are highlighted in bold.
Consistently, we observe that the S$\rightarrow$Q order yields better perplexities across all number formats for magnitude-based sparsity. 
As discussed in Section~\ref{sec:order}, in the case of Q$\rightarrow$S, quantizing a tensor can alter the order of its elements due to changes in magnitudes. 
If magnitude-based sparsity removes elements of the quantized tensor that were originally larger before quantization, the error of the combination may exceed the sum of the errors caused by each transformation individually. 
This compounded error further propagates through subsequent dot-product and vector operations, impacting overall model performance.
\subsection{Empirical study 2: Non-Orthogonality between sparsity and quantization}

This section demonstrates that combining sparsity and quantization results in additional error, surpassing the sum of their individual errors. 
Table~\ref{tab:orthogonality-upd} presents perplexities for \bench{OPT-125M}, \bench{OPT-6.7B}, \bench{LLaMA-2-7B} and \bench{LLaMA-3-8B} on WikiText2, and cross-entropy (CE) loss for \bench{ViT-B/16} on ImageNet-1k for various combinations of number formats and sparsity types. 
Following our conclusions regarding the order of transformations, we only report results for S$\rightarrow$Q. 
We compute the orthogonality threshold for each combination by summing the individual errors from sparsity and quantization relative to the baseline dense FP32 model, using Equation~\ref{eq:bound}.
Each model's performance is compared against this bound, with superior results highlighted in bold.
In the majority of configurations, perplexity and cross-entropy loss values exceed the orthogonality thresholds, validating the non-orthogonality of sparsity and quantization.

\begin{table}[t]
\centering
\begin{scriptsize}
\caption{Model perplexities on WikiText2 for combined sparsity and quantization. The best results for each (sparsity type, number format) pair are highlighted in bold.}
\vspace{\baselineskip}
\centering
  \resizebox{\columnwidth}{!}{
\label{tab:order}
\begin{tabular}{@{}cccccccccccccc@{}}
\toprule
& &  \multicolumn{6}{c}{\textbf{OPT-125M}} & \multicolumn{6}{c}{\textbf{LLaMA-2-7B}}   \\ 
\cmidrule(l{5pt}r{5pt}){3-8} \cmidrule(l{5pt}r{5pt}){9-14} 
\textbf{\makecell{Sparsity\\type}} & \textbf{Order}& \textbf{FP32} & \textbf{INT8} & \textbf{MXFP8} & \textbf{MXFP6} & \textbf{HBFP8} & \textbf{HBFP6}
& \textbf{FP32} & \textbf{INT8} & \textbf{MXFP8} & \textbf{MXFP6} & \textbf{HBFP8} & \textbf{HBFP6} \\ \midrule
0\% (Dense)  & - & 27.65 & 28.06 & 28.45 & 28.01 & 27.81 & 29.91 
& 5.12 & 5.15 & 5.17 & 5.16 & 5.12 & 5.24  \\ \midrule
\multirow{2}{*}{50\%} & S$\rightarrow$ Q & 29.94 & \textbf{30.22} & \textbf{31.13} & \textbf{31.20} & \textbf{30.46} & \textbf{32.51} 
& 6.31 & \textbf{6.94} & \textbf{6.40} & \textbf{6.38} & \textbf{6.32} & \textbf{6.51} \\ 
  & Q$\rightarrow$ S & - & 34.71 & 36.39 & 35.60 & 37.48 & 40.86 
& - & 8.13 & 8.47 & 9.32 & 9.86 & 10.20  \\\midrule
\multirow{2}{*}{2:4}  & S$\rightarrow$ Q & 31.89 & \textbf{32.76} & \textbf{33.99} & \textbf{33.41} & \textbf{32.25} & \textbf{34.58} 
& \textbf{9.30} & \textbf{9.37} & \textbf{9.35} & \textbf{9.32} & \textbf{9.39} & \textbf{10.68} \\  
  & Q$\rightarrow$ S & - & 45.06 & 44.16 & 42.25 & 46.57 & 55.64 
& - & 14.65 & 14.35 & 14.50 & 14.98 & 18.64 \\

\bottomrule
\end{tabular}
\label{tab:only_magnitude}
}
\end{scriptsize}
\end{table}
\vspace{\baselineskip}

\begin{table}[t]
\centering
\caption{Model perplexities and CE loss for combined sparsity and quantization. The numbers in the parentheses show the difference in perplexity/CE loss between the sparse and dense configuration.}
  \resizebox{1\columnwidth}{!}{
\label{tab:orthogonality-upd}
\begin{tabular}{@{}cccccccccccc@{}}
\toprule
& & \multicolumn{2}{c}{\textbf{OPT-125M}} & \multicolumn{2}{c}{\textbf{OPT-6.7B}} & \multicolumn{2}{c}{\textbf{LLaMA-2-7B}} &  \multicolumn{2}{c}{\textbf{LLaMA-3-8B}} & \multicolumn{2}{c}{\textbf{ViT-B/16}}  
\\ 
\cmidrule(l{5pt}r{5pt}){3-4} \cmidrule(l{5pt}r{5pt}){5-6} \cmidrule(l{5pt}r{5pt}){7-8} \cmidrule(l{5pt}r{5pt}){9-10} \cmidrule(l{5pt}r{5pt}){11-12}
\textbf{\makecell{Sparsity\\type}} & \textbf{\makecell{Number\\format}} & \textbf{PPL$\downarrow$}& \textbf{\makecell{Orth.\\threshold}}& \textbf{PPL$\downarrow$} & \textbf{\makecell{Orth.\\threshold}}& \textbf{PPL$\downarrow$} & 
\textbf{\makecell{Orth.\\threshold}}& \textbf{PPL$\downarrow$}  & 
\textbf{\makecell{Orth.\\threshold}} & \textbf{\makecell{CE\\Loss$\downarrow$}} & \textbf{\makecell{Orth.\\threshold}} \\
\midrule

\multirow{6}{*}[-0.5em]{0\%} & FP32 & 27.65 & - & 10.86 & - & 5.12 & - & 5.53  & - & 0.703 & - \\ 
 & INT8 & 28.06 & - & 10.95 & - & 5.19 & - &  5.63 & - & 0.706 & - \\ 
 & MXFP8 & 28.45 & - & 11.25 & - & 5.17 & - & 5.62 & - & 0.722 & - \\ 
 & MXFP6 & 28.01 & - & 11.02 & - & 5.16 & - & 5.62 & - & 0.715 & - \\ 
 & HBFP8 & 27.81 & - & 10.88 & - & 5.12 & - & 5.56 & - & 0.704 & - \\ 
 & HBFP6 & 29.91 & - & 11.20 & - & 5.24 & - & 5.87 & - & 0.718 & - \\ \midrule

\multirow{6}{*}[-0.5em]{50\%} & FP32 & 29.94 \textsuperscript{{(+2.29)}} & - & 11.30 \textsuperscript{{(+0.44)}} &- & 6.31 \textsuperscript{{(+1.19)}} &- & 10.09 \textsuperscript{{(+4.56)}} &- & 0.723 \textsuperscript{{(+0.020)}} &- \\ 
 & INT8 & \textbf{30.22} \textsuperscript{{(+2.16)}} &30.35 & \textbf{11.37} \textsuperscript{{(+0.42)}} &11.39 & 6.94 \textsuperscript{{(+1.75)}} &\textbf{6.38} & 10.85 \textsuperscript{{(+5.22)}} &\textbf{10.19} & 0.728 \textsuperscript{{(+0.022)}} &\textbf{0.725} \\ 
 & MXFP8 & 31.13 \textsuperscript{{(+2.68)}} &\textbf{30.74} & 11.74 \textsuperscript{{(+0.49)}} &\textbf{11.69} & 6.40 \textsuperscript{{(+1.23)}} &\textbf{6.36} & 10.34 \textsuperscript{{(+4.72)}} &\textbf{10.18} & 0.745 \textsuperscript{{(+0.023)}} &\textbf{0.742} \\ 
 & MXFP6 & 31.20 \textsuperscript{{(+3.19)}} &\textbf{30.30} & 11.53 \textsuperscript{{(+0.51)}} &\textbf{11.44} & 6.38 \textsuperscript{{(+1.22)}} &\textbf{6.35} & \textbf{10.15} \textsuperscript{{(+4.53)}} &10.18 & \textbf{0.734} \textsuperscript{{(+0.019)}} &0.735 \\
 & HBFP8 & 30.46 \textsuperscript{{(+2.65)}} &\textbf{30.18} & \textbf{11.31} \textsuperscript{{(+0.43)}} &11.32 & 6.32 \textsuperscript{{(+1.20)}} &\textbf{6.31} & \textbf{10.12} \textsuperscript{{(+4.56)}} &\textbf{10.12} & 0.724 \textsuperscript{{(+0.020)}} &\textbf{0.723} \\ 
 & HBFP6 & 32.51 \textsuperscript{{(+2.60)}} &\textbf{32.2} & 11.94 \textsuperscript{{(+0.74)}} &\textbf{11.65} & 6.51 \textsuperscript{{(+1.27)}} &\textbf{6.43} & 10.55 \textsuperscript{{(+4.68)}} &\textbf{10.43} & \textbf{0.736} \textsuperscript{{(+0.018)}} &0.737 \\ \midrule

\multirow{6}{*}[-0.5em]{2:4} & FP32 & 31.89 \textsuperscript{{(+4.24)}} &- & 15.48 \textsuperscript{{(+4.62)}} &- & 9.30 \textsuperscript{{(+4.18)}} &- & 13.07 \textsuperscript{{(+7.54)}} &- & 0.759 \textsuperscript{{(+0.056)}} &- \\ 
 & INT8 & 32.76 \textsuperscript{{(+4.70)}} &\textbf{32.30} & 15.61 \textsuperscript{{(+4.66)}} &\textbf{15.57} & \textbf{9.37} \textsuperscript{{(+4.18)}} &\textbf{9.37} & 13.23 \textsuperscript{{(+7.60)}} &\textbf{13.17} & 0.762 \textsuperscript{{(+0.056)}} &\textbf{0.761} \\ 
 & MXFP8 & 33.99 \textsuperscript{{(+5.54)}} &\textbf{32.69} & \textbf{15.70} \textsuperscript{{(+4.45)}} &15.87 & \textbf{9.35} \textsuperscript{{(+4.18)}} &\textbf{9.35} & 13.35 \textsuperscript{{(+7.73)}} &\textbf{13.16} & 0.781 \textsuperscript{{(+0.059)}} &\textbf{0.777} \\ 
 & MXFP6 & 33.41 \textsuperscript{{(+5.40)}} &\textbf{32.25} & 15.95 \textsuperscript{{(+4.93)}} &\textbf{15.64} & \textbf{9.32} \textsuperscript{{(+4.16)}} &9.34 & 13.20 \textsuperscript{{(+7.58)}} &\textbf{13.16} & \textbf{0.770} \textsuperscript{{(+0.055)}} &0.771 \\ 
 & HBFP8 & 32.25 \textsuperscript{{(+4.44)}} &\textbf{32.05} & 15.57 \textsuperscript{{(+4.69)}} &\textbf{15.50} & 9.39 \textsuperscript{{(+4.27)}} &\textbf{9.31} & 13.11 \textsuperscript{{(+7.55)}} &\textbf{13.1} & 0.760 \textsuperscript{{(+0.056)}} &\textbf{0.759} \\ 
 & HBFP6 & 34.58 \textsuperscript{{(+4.67)}} &\textbf{34.15} & 16.98 \textsuperscript{{(+5.78)}} &\textbf{15.82} & 10.68 \textsuperscript{{(+5.44)}} &\textbf{9.42} & 13.64 \textsuperscript{{(+7.77)}} &\textbf{13.41} & 0.774 \textsuperscript{{(+0.056)}} &\textbf{0.773} \\ 
\bottomrule
\end{tabular}
}
\end{table}

We mathematically show (Section~\ref{sec:orthogonality}) that the additional error introduced by combining sparsity and quantization significantly depends on the values of quantized activation tensors and the quantization error of sparsified weights. 
This error tends to amplify through successive dot products and vector operations. 
Consequently, the quantization error affects the additional error caused by the combination more than the sparsity error. 
The gap between the model's performance and the orthogonality threshold is minimal for number formats with minimal performance decrease, whereas larger errors are observed for formats with larger errors.
For instance, HBFP6 results in a \xx{2.26} increase in perplexity for \bench{OPT-125M}, while its combination with 50\% unstructured sparsity leads to a \xx{4.86} increase.

Our analysis reveals that both model size and compression ratio significantly influence the additional error introduced by combining sparsity and quantization.
Larger models exhibit greater tolerance to these compression methods, resulting in lower additional errors. 
Moreover, formats with minimal quantization errors (e.g. MXFP8), and sparsity types with minimal sparsification errors (e.g. unstructured sparsity) lead to lower additional errors, even for smaller models.
The effect of high quantization error is more pronounced for sparsity types known for higher errors. 
For instance, HBFP6's combination with 2:4 sparsity causes a \xx{6.93} perplexity increase for \bench{OPT-125M} and a \xx{6.12} perplexity increase for \bench{OPT-6.7B}.
In contrast, combining HBFP6 with unstructured sparsity results in smaller increases of \xx{4.86} and \xx{0.79} for the respective models.

We also observe instances where the orthogonality threshold is slightly higher than the actual perplexity: (a) INT8 with 50\% unstructured sparsity for both OPT models, (b) MXFP8 with 2:4 sparsity for \bench{OPT-6.7B}, (c) MXFP6 with 2:4 sparsity for \bench{LLaMA-2-7B}, and (d) MXFP6 50\% unstructured sparsity for \bench{LLaMa-3-8B}.
Although these occurrences do not consistently correlate with specific formats, sparsity types, or model sizes, they do not contradict our mathematical analysis, which primarily concerns upper bounds of errors and does not entirely rule out orthogonal cases.
Furthermore, orthogonal configurations still result in larger errors compared to applying either sparsity or quantization alone.
Our mathematical analysis (Theorem~\ref{thm:dotp}) indicates that there \emph{exists at least one occurrence} where the orthogonality is \emph{not} preserved.
This underscores the need for careful examination when applying these compression methods together, as they do not guarantee high accuracies.
Delving into cases where orthogonality is preserved falls beyond the scope of this paper, and we leave it for future work.

We observe that although the cross-entropy loss for \bench{ViT-B/16} is higher than the calculated orthogonality threshold in most cases in Table~\ref{tab:orthogonality-upd}, the difference is relatively small. We hypothesize the reason behind this behavior is due to the fact that \bench{ViT-B/16}, being a vision model that operates on images, is more robust to sparsity and quantization errors than LLMs that operate on text.
Hence, the sparsity and quantization levels shown in Table~\ref{tab:orthogonality-upd} are not sufficient to induce large errors, and hide the non-orthogonality of sparsity and quantization. 
To test our hypothesis, we increase the compression further by employing 75\% unstructured sparsity, 1:4 structured sparsity and HBFP4 on \bench{ViT-B/16} (Appendix~\ref{app:vit}). 
The results show that the difference between the baseline cross-entropy loss and the calculated orthogonality threshold for these cases increases significantly, validating the non-orthogonality of sparsity and quantization. 
This increase in cross-entropy loss also translates to a significant non-orthogonal drop in accuracy.
Additionally, we run the same experiments on a CNN-based vision model, ResNet-50 (Appendix~\ref{app:cnn}), and the experimental results corroborate these findings.
\subsection{Ablation: Error Propagation Across Layers}
\label{sec:layerwise}

In this section, we study error propagation across the layers of a deep neural network. 

While our mathematical study provides insights into how sparsity and quantization introduce errors at the dot-product and tensor levels, mathematically estimating the cumulative effect of these errors on the model's final loss or performance is non-trivial. 
Therefore, we perform this layer-wise empirical analysis to verify that the order of applying sparsity and quantization not only affects per-layer errors but also significantly impacts overall model's accuracy.
By inspecting the intermediate outputs of the pre-trained \bench{OPT-125M} model, we show that the error accumulation increases with the layer index and that the choice of the order directly influences the final model performance.

\begin{figure}[t]
 \centering  
 \includegraphics[width=0.5\linewidth]{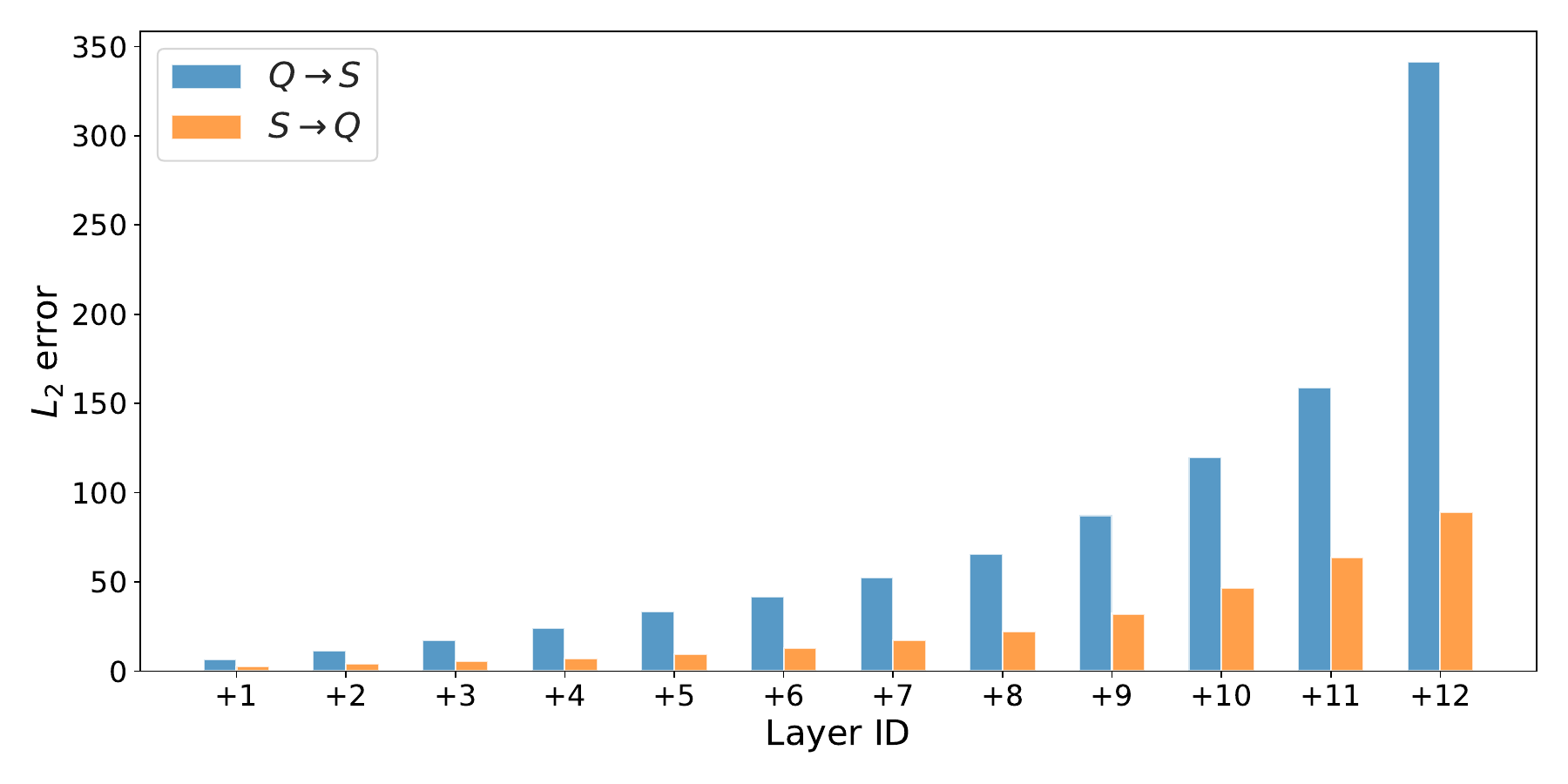}
  \caption{Cumulative error for the combination of 2:4 structured sparsity and HBFP6 quantization}   
  \label{fig:layerwise}
 \end{figure}

We apply quantization and magnitude-based sparsity to all linear layers of the pre-trained \bench{OPT-125M} model in a zero-shot manner.
We feed a sample from the test subset through both the compressed model and the corresponding full precision dense model, and we measure the difference in Feed-Forward outputs for each Transformer block. 
We repeat this experiment for both the S$\rightarrow$Q and Q$\rightarrow$S orders, and we compare the $L_2$ errors at each layer. 
The results are shown in Figure~\ref{fig:layerwise}.

As illustrated in the figure, the per-layer error increases consistently with the layer index, regardless of the order of transformations, and reaches its peak in the final layer.
However, we observe that the S$\rightarrow$Q order produces significantly lower errors at each intermediate layer compared to the reverse order. 
This pattern indicates that the error introduced by Q$\rightarrow$S accumulates more rapidly as layers deepen, highlighting the detrimental effects of applying quantization before sparsity.

\vspace{-0.20cm}
\section{Discussion}
\label{sec:discussion}
Our mathematical analysis and experimental results offer multiple insights for ML model practitioners.
First, our analysis demonstrates a risk-free method to improve model performance, measured by lower perplexity and/or higher accuracy, through choosing the optimal ordering of compression operations for any max-scaled number format and magnitude-based pruning scheme.
This contribution is particularly important in the current ML landscape, where sparsity and quantization are pivotal methods for reducing the memory footprint and bandwidth requirements of state-of-the-art LLMs.
Second, we show that calculating the orthogonality threshold offers a close enough estimate of model performance (e.g. accuracy, perplexity, etc.) under conditions of sparsity and quantization.
This bound can streamline the search for optimal sparse-quantized model configurations by effectively narrowing the search space.
There is an inherent tradeoff between the hardware benefits of various sparse-quantized configurations and the achieved model performance.
Quantization bit-width and sparsity level are key factors influencing the memory and bandwidth requirements for serving these models.
For example, at a 50\% sparsity level, 8-bit and 6-bit quantization result in total reductions in memory footprint and bandwidth requirements by \xx{8}$\times$ and \xx{10.7}$\times$, respectively.

Ideally, practitioners aim to maximize compression (increase sparsity ratio and/or reduce the average bitwidth per element).
Our analysis elucidates the individual and combined impacts of these factors across a range of recent large models, providing practical guidelines to achieve the highest compression without compromising model performance.
Typically, 8-bit quantization with any max-scaled number format can serve as a direct replacement for FP32 when combined with any form of sparsity in the optimal order (S$\rightarrow$Q).
As discussed in Section~\ref{sec:results}, certain models exhibit sensitivity to sub-8bit number formats and structured sparsity combinations, even when applied in an optimal order.
In scenarios where improvements in arithmetic density (TOPS/mm$^2$) and memory footprint justify a slight reduction in model performance, such as the deployment of large models on edge devices, these combinations may still be viable.

In this work, we do not consider heterogeneous sparsity and quantization schemes, where the sparsity fraction and quantization bit-width vary across layers and differ between activation and weight tensors. Such approaches~\citep{rouhani_microscaling_2023, ma2024era, harma_accuracy_2022, pmlr-v119-evci20a} have demonstrated to be effective in maintaining model accuracy or perplexity while improving compressing ratio.
However, these schemes may not be hardware-friendly, introduce noticeable overhead, and are impractical to implement on off-the-shelf hardware platforms (e.g. GPU, TPU).
We leave the investigation of the interactions between these heterogeneous sparsity and quantization schemes for future work.

\section{Conclusion}
We provide a comprehensive analysis of the interplay between sparsity and quantization in DNNs, showing that applying sparsity before quantization (S$\rightarrow$Q) minimizes additional errors and yields better model accuracy.
Moreover, our mathematical analysis and extensive empirical study with large language models (OPT, LLaMA), vision transformers (ViT), and convolutional neural networks (ResNet) demonstrate that sparsity and quantization are \textit{non-orthogonal} and their combined use can adversely affect model accuracy.
Our findings provide valuable insights for optimizing the compression of large models while preserving accuracy.

\subsubsection*{Acknowledgments}
We extend our gratitude towards Zhifeng Chen, Cliff Young, James Laudon, and Shashwat Shrivastava for reviewing the paper and providing insightful feedback. We also thank the extended team at Google DeepMind, who enabled and supported this research direction. 
This work was partially supported by the SNSF project “Unified Accelerators for Post-Moore Machine Learning” (200021\_212757) and a Microsoft Research PhD Fellowship.

\newpage
\small{

\bibliography{iclr2025_conference}

\begin{thebibliography}{88}
\providecommand{\natexlab}[1]{#1}
\providecommand{\url}[1]{\texttt{#1}}
\expandafter\ifx\csname urlstyle\endcsname\relax
  \providecommand{\doi}[1]{doi: #1}\else
  \providecommand{\doi}{doi: \begingroup \urlstyle{rm}\Url}\fi

\bibitem[Almazrouei et~al.(2023)Almazrouei, Alobeidli, Alshamsi, Cappelli, Cojocaru, Debbah, Goffinet, Hesslow, Launay, Malartic, Mazzotta, Noune, Pannier, and Penedo]{almazrouei_falcon_2023}
Ebtesam Almazrouei, Hamza Alobeidli, Abdulaziz Alshamsi, Alessandro Cappelli, Ruxandra Cojocaru, M{\'{e}}rouane Debbah, {\'{E}}tienne Goffinet, Daniel Hesslow, Julien Launay, Quentin Malartic, Daniele Mazzotta, Badreddine Noune, Baptiste Pannier, and Guilherme Penedo.
\newblock The falcon series of open language models.
\newblock \emph{CoRR}, abs/2311.16867, 2023.
\newblock \doi{10.48550/ARXIV.2311.16867}.
\newblock URL \url{https://doi.org/10.48550/arXiv.2311.16867}.

\bibitem[Anil et~al.(2023)Anil, Borgeaud, Wu, Alayrac, Yu, Soricut, Schalkwyk, Dai, Hauth, Millican, Silver, Petrov, Johnson, Antonoglou, Schrittwieser, Glaese, Chen, Pitler, Lillicrap, Lazaridou, Firat, Molloy, Isard, Barham, Hennigan, Lee, Viola, Reynolds, Xu, Doherty, Collins, Meyer, Rutherford, Moreira, Ayoub, Goel, Tucker, Piqueras, Krikun, Barr, Savinov, Danihelka, Roelofs, White, Andreassen, von Glehn, Yagati, Kazemi, Gonzalez, Khalman, Sygnowski, and et~al.]{anil_gemini_2023}
Rohan Anil, Sebastian Borgeaud, Yonghui Wu, Jean{-}Baptiste Alayrac, Jiahui Yu, Radu Soricut, Johan Schalkwyk, Andrew~M. Dai, Anja Hauth, Katie Millican, David Silver, Slav Petrov, Melvin Johnson, Ioannis Antonoglou, Julian Schrittwieser, Amelia Glaese, Jilin Chen, Emily Pitler, Timothy~P. Lillicrap, Angeliki Lazaridou, Orhan Firat, James Molloy, Michael Isard, Paul~Ronald Barham, Tom Hennigan, Benjamin Lee, Fabio Viola, Malcolm Reynolds, Yuanzhong Xu, Ryan Doherty, Eli Collins, Clemens Meyer, Eliza Rutherford, Erica Moreira, Kareem Ayoub, Megha Goel, George Tucker, Enrique Piqueras, Maxim Krikun, Iain Barr, Nikolay Savinov, Ivo Danihelka, Becca Roelofs, Ana{\"{\i}}s White, Anders Andreassen, Tamara von Glehn, Lakshman Yagati, Mehran Kazemi, Lucas Gonzalez, Misha Khalman, Jakub Sygnowski, and et~al.
\newblock Gemini: {A} family of highly capable multimodal models.
\newblock \emph{CoRR}, abs/2312.11805, 2023.
\newblock \doi{10.48550/ARXIV.2312.11805}.
\newblock URL \url{https://doi.org/10.48550/arXiv.2312.11805}.

\bibitem[Bambhaniya et~al.(2024)Bambhaniya, Yazdanbakhsh, Subramanian, Kao, Agrawal, Evci, and Krishna]{bambhaniya_progressive_2024}
Abhimanyu~Rajeshkumar Bambhaniya, Amir Yazdanbakhsh, Suvinay Subramanian, Sheng{-}Chun Kao, Shivani Agrawal, Utku Evci, and Tushar Krishna.
\newblock Progressive gradient flow for robust {N:} {M} sparsity training in transformers.
\newblock \emph{CoRR}, abs/2402.04744, 2024.
\newblock \doi{10.48550/ARXIV.2402.04744}.
\newblock URL \url{https://doi.org/10.48550/arXiv.2402.04744}.

\bibitem[Brown et~al.(2020)Brown, Mann, Ryder, Subbiah, Kaplan, Dhariwal, Neelakantan, Shyam, Sastry, Askell, Agarwal, Herbert{-}Voss, Krueger, Henighan, Child, Ramesh, Ziegler, Wu, Winter, Hesse, Chen, Sigler, Litwin, Gray, Chess, Clark, Berner, McCandlish, Radford, Sutskever, and Amodei]{brown_language_2020}
Tom~B. Brown, Benjamin Mann, Nick Ryder, Melanie Subbiah, Jared Kaplan, Prafulla Dhariwal, Arvind Neelakantan, Pranav Shyam, Girish Sastry, Amanda Askell, Sandhini Agarwal, Ariel Herbert{-}Voss, Gretchen Krueger, Tom Henighan, Rewon Child, Aditya Ramesh, Daniel~M. Ziegler, Jeffrey Wu, Clemens Winter, Christopher Hesse, Mark Chen, Eric Sigler, Mateusz Litwin, Scott Gray, Benjamin Chess, Jack Clark, Christopher Berner, Sam McCandlish, Alec Radford, Ilya Sutskever, and Dario Amodei.
\newblock {Language Models are Few-Shot Learners}.
\newblock In Hugo Larochelle, Marc'Aurelio Ranzato, Raia Hadsell, Maria{-}Florina Balcan, and Hsuan{-}Tien Lin (eds.), \emph{NeurIPS}, 2020.

\bibitem[Dai et~al.(2021)Dai, Venkatesan, Ren, Zimmer, Dally, and Khailany]{dai_vs_2021}
Steve Dai, Rangha Venkatesan, Mark Ren, Brian Zimmer, William Dally, and Brucek Khailany.
\newblock Vs-quant: Per-vector scaled quantization for accurate low-precision neural network inference.
\newblock \emph{Proceedings of Machine Learning and Systems}, 3:\penalty0 873--884, 2021.

\bibitem[Darvish~Rouhani et~al.(2020{\natexlab{a}})Darvish~Rouhani, Lo, Zhao, Liu, Fowers, Ovtcharov, Vinogradsky, Massengill, Yang, Bittner, Forin, Zhu, Na, Patel, Che, Chand~Koppaka, SONG, Som, Das, T, Reinhardt, Lanka, Chung, and Burger]{rouhani_msfp_2020}
Bita Darvish~Rouhani, Daniel Lo, Ritchie Zhao, Ming Liu, Jeremy Fowers, Kalin Ovtcharov, Anna Vinogradsky, Sarah Massengill, Lita Yang, Ray Bittner, Alessandro Forin, Haishan Zhu, Taesik Na, Prerak Patel, Shuai Che, Lok Chand~Koppaka, XIA SONG, Subhojit Som, Kaustav Das, Saurabh T, Steve Reinhardt, Sitaram Lanka, Eric Chung, and Doug Burger.
\newblock Pushing the limits of narrow precision inferencing at cloud scale with microsoft floating point.
\newblock In H.~Larochelle, M.~Ranzato, R.~Hadsell, M.~F. Balcan, and H.~Lin (eds.), \emph{Advances in Neural Information Processing Systems}, volume~33, pp.\  10271--10281. Curran Associates, Inc., 2020{\natexlab{a}}.
\newblock URL \url{https://proceedings.neurips.cc/paper/2020/file/747e32ab0fea7fbd2ad9ec03daa3f840-Paper.pdf}.

\bibitem[Darvish~Rouhani et~al.(2020{\natexlab{b}})Darvish~Rouhani, Lo, Zhao, Liu, Fowers, Ovtcharov, Vinogradsky, Massengill, Yang, Bittner, et~al.]{darvish_pushing_2020}
Bita Darvish~Rouhani, Daniel Lo, Ritchie Zhao, Ming Liu, Jeremy Fowers, Kalin Ovtcharov, Anna Vinogradsky, Sarah Massengill, Lita Yang, Ray Bittner, et~al.
\newblock Pushing the limits of narrow precision inferencing at cloud scale with microsoft floating point.
\newblock \emph{Advances in neural information processing systems}, 33:\penalty0 10271--10281, 2020{\natexlab{b}}.

\bibitem[Darvish~Rouhani et~al.(2023)Darvish~Rouhani, Garegrat, Savell, More, Han, Zhao, Klar, Chung, Yu, Schulte, Wittig, Bratt, Stephens, Milanovic, Brothers, Dubey, Cornea, Heinecke, Rodriguez, Langhammer, Deng, Naumov, Micikevicius, Siu, and Verrilli]{rouhani_miscroscaling_2023}
Bita Darvish~Rouhani, Nitin Garegrat, Tom Savell, Ankit More, Kyung-Nam Han, Mathew Zhao, Ritchie amd~Hall, Jasmine Klar, Eric Chung, Yuan Yu, Michael Schulte, Ralph Wittig, Ian Bratt, Nigel Stephens, Jelena Milanovic, John Brothers, Pradeep Dubey, Marius Cornea, Alexander Heinecke, Andres Rodriguez, Martin Langhammer, Summer Deng, Maxim Naumov, Paulius Micikevicius, Michael Siu, and Colin Verrilli.
\newblock {OCP Microscaling (MX) Specification}.
\newblock \emph{Open Compute Project}, 2023.

\bibitem[Deng et~al.(2009)Deng, Dong, Socher, Li, Li, and Fei-Fei]{deng_imagenet_2009}
Jia Deng, Wei Dong, Richard Socher, Li-Jia Li, Kai Li, and Li~Fei-Fei.
\newblock Imagenet: A large-scale hierarchical image database.
\newblock In \emph{2009 IEEE conference on computer vision and pattern recognition}, pp.\  248--255. Ieee, 2009.

\bibitem[Dettmers et~al.(2022)Dettmers, Lewis, Belkada, and Zettlemoyer]{dettmers_llmint8_2022}
Tim Dettmers, Mike Lewis, Younes Belkada, and Luke Zettlemoyer.
\newblock Llm.int8(): 8-bit matrix multiplication for transformers at scale.
\newblock \emph{CoRR}, abs/2208.07339, 2022.
\newblock \doi{10.48550/ARXIV.2208.07339}.
\newblock URL \url{https://doi.org/10.48550/arXiv.2208.07339}.

\bibitem[Ding et~al.(2023)Ding, Qiu, Rim, He, Rybakov, Li, Prabhavalkar, Wang, Sainath, Agrawal, Han, Li, and Yazdanbakhsh]{ding_usmlite_2023}
Shaojin Ding, David Qiu, David Rim, Yanzhang He, Oleg Rybakov, Bo~Li, Rohit Prabhavalkar, Weiran Wang, Tara~N. Sainath, Shivani Agrawal, Zhonglin Han, Jian Li, and Amir Yazdanbakhsh.
\newblock Usm-lite: Quantization and sparsity aware fine-tuning for speech recognition with universal speech models.
\newblock \emph{CoRR}, abs/2312.08553, 2023.
\newblock \doi{10.48550/ARXIV.2312.08553}.
\newblock URL \url{https://doi.org/10.48550/arXiv.2312.08553}.

\bibitem[Dosovitskiy et~al.(2021)Dosovitskiy, Beyer, Kolesnikov, Weissenborn, Zhai, Unterthiner, Dehghani, Minderer, Heigold, Gelly, Uszkoreit, and Houlsby]{dosovitskiy_vit_2021}
Alexey Dosovitskiy, Lucas Beyer, Alexander Kolesnikov, Dirk Weissenborn, Xiaohua Zhai, Thomas Unterthiner, Mostafa Dehghani, Matthias Minderer, Georg Heigold, Sylvain Gelly, Jakob Uszkoreit, and Neil Houlsby.
\newblock An image is worth 16x16 words: Transformers for image recognition at scale.
\newblock In \emph{9th International Conference on Learning Representations, {ICLR} 2021, Virtual Event, Austria, May 3-7, 2021}. OpenReview.net, 2021.
\newblock URL \url{https://openreview.net/forum?id=YicbFdNTTy}.

\bibitem[Drumond et~al.(2018{\natexlab{a}})Drumond, Lin, Jaggi, and Falsafi]{drumond_hbfp_2018}
Mario Drumond, Tao Lin, Martin Jaggi, and Babak Falsafi.
\newblock Training {DNNs} with {Hybrid} {Block} {Floating} {Point}.
\newblock \emph{arXiv:1804.01526 [cs, stat]}, December 2018{\natexlab{a}}.
\newblock URL \url{http://arxiv.org/abs/1804.01526}.
\newblock arXiv: 1804.01526.

\bibitem[Drumond et~al.(2018{\natexlab{b}})Drumond, Lin, Jaggi, and Falsafi]{drumond_training_2018}
Mario Drumond, Tao Lin, Martin Jaggi, and Babak Falsafi.
\newblock Training dnns with hybrid block floating point.
\newblock \emph{Advances in Neural Information Processing Systems}, 31, 2018{\natexlab{b}}.

\bibitem[Evci et~al.(2020{\natexlab{a}})Evci, Gale, Menick, Castro, and Elsen]{evci_rigging_2020}
Utku Evci, Trevor Gale, Jacob Menick, Pablo~Samuel Castro, and Erich Elsen.
\newblock Rigging the lottery: Making all tickets winners.
\newblock In \emph{Proceedings of the 37th International Conference on Machine Learning, {ICML} 2020, 13-18 July 2020, Virtual Event}, volume 119 of \emph{Proceedings of Machine Learning Research}, pp.\  2943--2952. {PMLR}, 2020{\natexlab{a}}.
\newblock URL \url{http://proceedings.mlr.press/v119/evci20a.html}.

\bibitem[Evci et~al.(2020{\natexlab{b}})Evci, Gale, Menick, Castro, and Elsen]{pmlr-v119-evci20a}
Utku Evci, Trevor Gale, Jacob Menick, Pablo~Samuel Castro, and Erich Elsen.
\newblock Rigging the lottery: Making all tickets winners.
\newblock In Hal~Daumé III and Aarti Singh (eds.), \emph{Proceedings of the 37th International Conference on Machine Learning}, volume 119 of \emph{Proceedings of Machine Learning Research}, pp.\  2943--2952. PMLR, 13--18 Jul 2020{\natexlab{b}}.
\newblock URL \url{https://proceedings.mlr.press/v119/evci20a.html}.

\bibitem[Frankle \& Carbin(2019)Frankle and Carbin]{frankle_lottery_2019}
Jonathan Frankle and Michael Carbin.
\newblock The lottery ticket hypothesis: Finding sparse, trainable neural networks.
\newblock In \emph{7th International Conference on Learning Representations, {ICLR} 2019, New Orleans, LA, USA, May 6-9, 2019}. OpenReview.net, 2019.
\newblock URL \url{https://openreview.net/forum?id=rJl-b3RcF7}.

\bibitem[Frantar \& Alistarh(2022)Frantar and Alistarh]{frantar_optimal_2022}
Elias Frantar and Dan Alistarh.
\newblock Optimal brain compression: A framework for accurate post-training quantization and pruning.
\newblock \emph{Advances in Neural Information Processing Systems}, 35:\penalty0 4475--4488, 2022.

\bibitem[Frantar \& Alistarh(2023)Frantar and Alistarh]{frantar_sparsegpt_2023}
Elias Frantar and Dan Alistarh.
\newblock Sparsegpt: Massive language models can be accurately pruned in one-shot.
\newblock In Andreas Krause, Emma Brunskill, Kyunghyun Cho, Barbara Engelhardt, Sivan Sabato, and Jonathan Scarlett (eds.), \emph{International Conference on Machine Learning, {ICML} 2023, 23-29 July 2023, Honolulu, Hawaii, {USA}}, volume 202 of \emph{Proceedings of Machine Learning Research}, pp.\  10323--10337. {PMLR}, 2023.
\newblock URL \url{https://proceedings.mlr.press/v202/frantar23a.html}.

\bibitem[Frantar et~al.(2022)Frantar, Ashkboos, Hoefler, and Alistarh]{frantar_gptq_2022}
Elias Frantar, Saleh Ashkboos, Torsten Hoefler, and Dan Alistarh.
\newblock {GPTQ:} accurate post-training quantization for generative pre-trained transformers.
\newblock \emph{CoRR}, abs/2210.17323, 2022.
\newblock \doi{10.48550/ARXIV.2210.17323}.
\newblock URL \url{https://doi.org/10.48550/arXiv.2210.17323}.

\bibitem[Frantar et~al.(2023)Frantar, Riquelme, Houlsby, Alistarh, and Evci]{frantar_scaling_2023}
Elias Frantar, Carlos Riquelme, Neil Houlsby, Dan Alistarh, and Utku Evci.
\newblock Scaling laws for sparsely-connected foundation models.
\newblock \emph{arXiv preprint arXiv:2309.08520}, 2023.

\bibitem[Gao et~al.(2024)Gao, Tow, Abbasi, Biderman, Black, DiPofi, Foster, Golding, Hsu, Le~Noac'h, Li, McDonell, Muennighoff, Ociepa, Phang, Reynolds, Schoelkopf, Skowron, Sutawika, Tang, Thite, Wang, Wang, and Zou]{gao_lm_eval_harness}
Leo Gao, Jonathan Tow, Baber Abbasi, Stella Biderman, Sid Black, Anthony DiPofi, Charles Foster, Laurence Golding, Jeffrey Hsu, Alain Le~Noac'h, Haonan Li, Kyle McDonell, Niklas Muennighoff, Chris Ociepa, Jason Phang, Laria Reynolds, Hailey Schoelkopf, Aviya Skowron, Lintang Sutawika, Eric Tang, Anish Thite, Ben Wang, Kevin Wang, and Andy Zou.
\newblock A framework for few-shot language model evaluation, 07 2024.
\newblock URL \url{https://zenodo.org/records/12608602}.

\bibitem[Google(2023)]{google_tpuv5_2023}
Google.
\newblock Enabling next-generation ai workloads: Announcing tpu v5p and ai hypercomputer.
\newblock \url{https://cloud.google.com/blog/products/ai-machine-learning/introducing-cloud-tpu-v5p-and-ai-hypercomputer}, 2023.

\bibitem[Guo et~al.(2016)Guo, Yao, and Chen]{guo_dynamic_2016}
Yiwen Guo, Anbang Yao, and Yurong Chen.
\newblock Dynamic network surgery for efficient dnns.
\newblock \emph{Advances in neural information processing systems}, 29, 2016.

\bibitem[Han et~al.(2015{\natexlab{a}})Han, Mao, and Dally]{han_deep_2015}
Song Han, Huizi Mao, and William~J Dally.
\newblock Deep compression: Compressing deep neural networks with pruning, trained quantization and huffman coding.
\newblock \emph{arXiv preprint arXiv:1510.00149}, 2015{\natexlab{a}}.

\bibitem[Han et~al.(2015{\natexlab{b}})Han, Pool, Tran, and Dally]{han_learning_2015}
Song Han, Jeff Pool, John Tran, and William Dally.
\newblock Learning both weights and connections for efficient neural network.
\newblock \emph{Advances in neural information processing systems}, 28, 2015{\natexlab{b}}.

\bibitem[Harma et~al.(2022)Harma, Chakraborty, Falsafi, Jaggi, and Oh]{harma_accuracy_2022}
Simla~Burcu Harma, Ayan Chakraborty, Babak Falsafi, Martin Jaggi, and Yunho Oh.
\newblock Accuracy boosters: Epoch-driven mixed-mantissa block floating-point for dnn training.
\newblock \emph{arXiv preprint arXiv:2211.10737}, 2022.

\bibitem[Hassibi et~al.(1993)Hassibi, Stork, and Wolff]{hassibi_optimal_1993}
Babak Hassibi, David~G Stork, and Gregory~J Wolff.
\newblock Optimal brain surgeon and general network pruning.
\newblock In \emph{IEEE international conference on neural networks}, pp.\  293--299. IEEE, 1993.

\bibitem[Hawks et~al.(2021)Hawks, Duarte, Fraser, Pappalardo, Tran, and Umuroglu]{hawks_ps_2021}
Benjamin Hawks, Javier Duarte, Nicholas~J Fraser, Alessandro Pappalardo, Nhan Tran, and Yaman Umuroglu.
\newblock Ps and qs: Quantization-aware pruning for efficient low latency neural network inference.
\newblock \emph{Frontiers in Artificial Intelligence}, 4:\penalty0 676564, 2021.

\bibitem[He et~al.(2016)He, Zhang, Ren, and Sun]{he2016deep}
Kaiming He, Xiangyu Zhang, Shaoqing Ren, and Jian Sun.
\newblock Deep residual learning for image recognition.
\newblock In \emph{Proceedings of the IEEE conference on computer vision and pattern recognition}, pp.\  770--778, 2016.

\bibitem[Hoefler et~al.(2021)Hoefler, Alistarh, Ben-Nun, Dryden, and Peste]{hoefler-sparsity-2021}
Torsten Hoefler, Dan Alistarh, Tal Ben-Nun, Nikoli Dryden, and Alexandra Peste.
\newblock Sparsity in deep learning: Pruning and growth for efficient inference and training in neural networks.
\newblock \emph{ArXiv}, abs/2102.00554, 2021.
\newblock URL \url{https://api.semanticscholar.org/CorpusID:231740691}.

\bibitem[Hu et~al.(2021)Hu, Peng, Zhu, Aly, and Lin]{hu_opq_2021}
Peng Hu, Xi~Peng, Hongyuan Zhu, Mohamed M~Sabry Aly, and Jie Lin.
\newblock Opq: Compressing deep neural networks with one-shot pruning-quantization.
\newblock In \emph{Proceedings of the AAAI conference on artificial intelligence}, volume~35, pp.\  7780--7788, 2021.

\bibitem[Hubara et~al.(2021)Hubara, Chmiel, Island, Banner, Naor, and Soudry]{hubara_transposable_2021}
Itay Hubara, Brian Chmiel, Moshe Island, Ron Banner, Joseph Naor, and Daniel Soudry.
\newblock Accelerated sparse neural training: {A} provable and efficient method to find {N:} {M} transposable masks.
\newblock In Marc'Aurelio Ranzato, Alina Beygelzimer, Yann~N. Dauphin, Percy Liang, and Jennifer~Wortman Vaughan (eds.), \emph{Advances in Neural Information Processing Systems 34: Annual Conference on Neural Information Processing Systems 2021, NeurIPS 2021, December 6-14, 2021, virtual}, pp.\  21099--21111, 2021.
\newblock URL \url{https://proceedings.neurips.cc/paper/2021/hash/b0490b85e92b64dbb5db76bf8fca6a82-Abstract.html}.

\bibitem[Jiang et~al.(2023)Jiang, Sablayrolles, Mensch, Bamford, Chaplot, de~Las~Casas, Bressand, Lengyel, Lample, Saulnier, Lavaud, Lachaux, Stock, Scao, Lavril, Wang, Lacroix, and Sayed]{jiang_mistral_2023}
Albert~Q. Jiang, Alexandre Sablayrolles, Arthur Mensch, Chris Bamford, Devendra~Singh Chaplot, Diego de~Las~Casas, Florian Bressand, Gianna Lengyel, Guillaume Lample, Lucile Saulnier, L{\'{e}}lio~Renard Lavaud, Marie{-}Anne Lachaux, Pierre Stock, Teven~Le Scao, Thibaut Lavril, Thomas Wang, Timoth{\'{e}}e Lacroix, and William~El Sayed.
\newblock Mistral 7b.
\newblock \emph{CoRR}, abs/2310.06825, 2023.
\newblock \doi{10.48550/ARXIV.2310.06825}.
\newblock URL \url{https://doi.org/10.48550/arXiv.2310.06825}.

\bibitem[Kang(2020)]{kang:accaware}
Hyeong{-}Ju Kang.
\newblock Accelerator-aware pruning for convolutional neural networks.
\newblock \emph{{IEEE} Trans. Circuits Syst. Video Technol.}, 30\penalty0 (7):\penalty0 2093--2103, 2020.
\newblock \doi{10.1109/TCSVT.2019.2911674}.
\newblock URL \url{https://doi.org/10.1109/TCSVT.2019.2911674}.

\bibitem[Kao et~al.(2022)Kao, Yazdanbakhsh, Subramanian, Agrawal, Evci, and Krishna]{kao_training_2022}
Sheng-Chun Kao, Amir Yazdanbakhsh, Suvinay Subramanian, Shivani Agrawal, Utku Evci, and Tushar Krishna.
\newblock Training recipe for n: M structured sparsity with decaying pruning mask.
\newblock \emph{arXiv preprint arXiv:2209.07617}, 2022.

\bibitem[Kozlov et~al.(2021)Kozlov, Lazarevich, Shamporov, Lyalyushkin, and Gorbachev]{kozlov_neural_2021}
Alexander Kozlov, Ivan Lazarevich, Vasily Shamporov, Nikolay Lyalyushkin, and Yury Gorbachev.
\newblock Neural network compression framework for fast model inference.
\newblock In \emph{Intelligent Computing: Proceedings of the 2021 Computing Conference, Volume 3}, pp.\  213--232. Springer, 2021.

\bibitem[Kurtic et~al.(2022)Kurtic, Campos, Nguyen, Frantar, Kurtz, Fineran, Goin, and Alistarh]{kurtic_optimal_2022}
Eldar Kurtic, Daniel Campos, Tuan Nguyen, Elias Frantar, Mark Kurtz, Benjamin Fineran, Michael Goin, and Dan Alistarh.
\newblock The optimal bert surgeon: Scalable and accurate second-order pruning for large language models.
\newblock \emph{arXiv preprint arXiv:2203.07259}, 2022.

\bibitem[Kuzmin et~al.(2023)Kuzmin, Nagel, van Baalen, Behboodi, and Blankevoort]{kuzmin_svsq_2023}
Andrey Kuzmin, Markus Nagel, Mart van Baalen, Arash Behboodi, and Tijmen Blankevoort.
\newblock Pruning vs quantization: Which is better?
\newblock In Alice Oh, Tristan Naumann, Amir Globerson, Kate Saenko, Moritz Hardt, and Sergey Levine (eds.), \emph{Advances in Neural Information Processing Systems 36: Annual Conference on Neural Information Processing Systems 2023, NeurIPS 2023, New Orleans, LA, USA, December 10 - 16, 2023}, 2023.
\newblock URL \url{http://papers.nips.cc/paper\_files/paper/2023/hash/c48bc80aa5d3cbbdd712d1cc107b8319-Abstract-Conference.html}.

\bibitem[Lasby et~al.(2023)Lasby, Golubeva, Evci, Nica, and Ioannou]{lasby_dynamic_2023}
Mike Lasby, Anna Golubeva, Utku Evci, Mihai Nica, and Yani Ioannou.
\newblock Dynamic sparse training with structured sparsity.
\newblock \emph{arXiv preprint arXiv:2305.02299}, 2023.

\bibitem[LeCun et~al.(1989)LeCun, Denker, and Solla]{lecun_optimal_1989}
Yann LeCun, John Denker, and Sara Solla.
\newblock Optimal brain damage.
\newblock \emph{Advances in neural information processing systems}, 2, 1989.

\bibitem[Lee et~al.(2024)Lee, Jin, Kim, Kim, and Park]{lee_owq_2024}
Changhun Lee, Jungyu Jin, Taesu Kim, Hyungjun Kim, and Eunhyeok Park.
\newblock {OWQ:} outlier-aware weight quantization for efficient fine-tuning and inference of large language models.
\newblock In Michael~J. Wooldridge, Jennifer~G. Dy, and Sriraam Natarajan (eds.), \emph{Thirty-Eighth {AAAI} Conference on Artificial Intelligence, {AAAI} 2024, Thirty-Sixth Conference on Innovative Applications of Artificial Intelligence, {IAAI} 2024, Fourteenth Symposium on Educational Advances in Artificial Intelligence, {EAAI} 2014, February 20-27, 2024, Vancouver, Canada}, pp.\  13355--13364. {AAAI} Press, 2024.
\newblock \doi{10.1609/AAAI.V38I12.29237}.
\newblock URL \url{https://doi.org/10.1609/aaai.v38i12.29237}.

\bibitem[Li et~al.(2020)Li, Wallace, Shen, Lin, Keutzer, Klein, and Gonzalez]{li_train_2020}
Zhuohan Li, Eric Wallace, Sheng Shen, Kevin Lin, Kurt Keutzer, Dan Klein, and Joey Gonzalez.
\newblock Train large, then compress: Rethinking model size for efficient training and inference of transformers.
\newblock In \emph{International Conference on machine learning}, pp.\  5958--5968. PMLR, 2020.

\bibitem[Liu et~al.(2023)Liu, Wang, Dao, Zhou, Yuan, Song, Shrivastava, Zhang, Tian, Re, et~al.]{liu_deja_2023}
Zichang Liu, Jue Wang, Tri Dao, Tianyi Zhou, Binhang Yuan, Zhao Song, Anshumali Shrivastava, Ce~Zhang, Yuandong Tian, Christopher Re, et~al.
\newblock Deja vu: Contextual sparsity for efficient llms at inference time.
\newblock In \emph{International Conference on Machine Learning}, pp.\  22137--22176. PMLR, 2023.

\bibitem[Lu et~al.(2024)Lu, Zhou, Xu, Zhang, Gao, and Li]{lu_spp_2024}
Xudong Lu, Aojun Zhou, Yuhui Xu, Renrui Zhang, Peng Gao, and Hongsheng Li.
\newblock {SPP:} sparsity-preserved parameter-efficient fine-tuning for large language models.
\newblock In \emph{Forty-first International Conference on Machine Learning, {ICML} 2024, Vienna, Austria, July 21-27, 2024}. OpenReview.net, 2024.
\newblock URL \url{https://openreview.net/forum?id=9Rroj9GIOQ}.

\bibitem[Lu et~al.(2023)Lu, Agrawal, Subramanian, Rybakov, Sa, and Yazdanbakhsh]{lu_step_2023}
Yucheng Lu, Shivani Agrawal, Suvinay Subramanian, Oleg Rybakov, Christopher~De Sa, and Amir Yazdanbakhsh.
\newblock {STEP:} learning {N:} {M} structured sparsity masks from scratch with precondition.
\newblock In Andreas Krause, Emma Brunskill, Kyunghyun Cho, Barbara Engelhardt, Sivan Sabato, and Jonathan Scarlett (eds.), \emph{International Conference on Machine Learning, {ICML} 2023, 23-29 July 2023, Honolulu, Hawaii, {USA}}, volume 202 of \emph{Proceedings of Machine Learning Research}, pp.\  22812--22824. {PMLR}, 2023.
\newblock URL \url{https://proceedings.mlr.press/v202/lu23c.html}.

\bibitem[Ma et~al.(2024)Ma, Wang, Ma, Wang, Wang, Huang, Dong, Wang, Xue, and Wei]{ma2024era}
Shuming Ma, Hongyu Wang, Lingxiao Ma, Lei Wang, Wenhui Wang, Shaohan Huang, Li~Dong, Ruiping Wang, Jilong Xue, and Furu Wei.
\newblock The era of 1-bit llms: All large language models are in 1.58 bits, 2024.

\bibitem[Merity et~al.(2017)Merity, Xiong, Bradbury, and Socher]{merity_wikitext2_2017}
Stephen Merity, Caiming Xiong, James Bradbury, and Richard Socher.
\newblock Pointer sentinel mixture models.
\newblock In \emph{5th International Conference on Learning Representations, {ICLR} 2017, Toulon, France, April 24-26, 2017, Conference Track Proceedings}. OpenReview.net, 2017.
\newblock URL \url{https://openreview.net/forum?id=Byj72udxe}.

\bibitem[Mesnard et~al.(2024)Mesnard, Hardin, Dadashi, Bhupatiraju, Pathak, Sifre, Rivi{\`{e}}re, Kale, Love, Tafti, Hussenot, Chowdhery, Roberts, Barua, Botev, Castro{-}Ros, Slone, H{\'{e}}liou, Tacchetti, Bulanova, Paterson, Tsai, Shahriari, Lan, Choquette{-}Choo, Crepy, Cer, Ippolito, Reid, Buchatskaya, Ni, Noland, Yan, Tucker, Muraru, Rozhdestvenskiy, Michalewski, Tenney, Grishchenko, Austin, Keeling, Labanowski, Lespiau, Stanway, Brennan, Chen, Ferret, Chiu, and et~al.]{mesnard_gemma_2024}
Thomas Mesnard, Cassidy Hardin, Robert Dadashi, Surya Bhupatiraju, Shreya Pathak, Laurent Sifre, Morgane Rivi{\`{e}}re, Mihir~Sanjay Kale, Juliette Love, Pouya Tafti, L{\'{e}}onard Hussenot, Aakanksha Chowdhery, Adam Roberts, Aditya Barua, Alex Botev, Alex Castro{-}Ros, Ambrose Slone, Am{\'{e}}lie H{\'{e}}liou, Andrea Tacchetti, Anna Bulanova, Antonia Paterson, Beth Tsai, Bobak Shahriari, Charline~Le Lan, Christopher~A. Choquette{-}Choo, Cl{\'{e}}ment Crepy, Daniel Cer, Daphne Ippolito, David Reid, Elena Buchatskaya, Eric Ni, Eric Noland, Geng Yan, George Tucker, George{-}Christian Muraru, Grigory Rozhdestvenskiy, Henryk Michalewski, Ian Tenney, Ivan Grishchenko, Jacob Austin, James Keeling, Jane Labanowski, Jean{-}Baptiste Lespiau, Jeff Stanway, Jenny Brennan, Jeremy Chen, Johan Ferret, Justin Chiu, and et~al.
\newblock Gemma: Open models based on gemini research and technology.
\newblock \emph{CoRR}, abs/2403.08295, 2024.
\newblock \doi{10.48550/ARXIV.2403.08295}.
\newblock URL \url{https://doi.org/10.48550/arXiv.2403.08295}.

\bibitem[Micikevicius et~al.(2018)Micikevicius, Narang, Alben, Diamos, Elsen, Garcia, Ginsburg, Houston, Kuchaiev, Venkatesh, and Wu]{micikevicius_mixedfp16_2018}
Paulius Micikevicius, Sharan Narang, Jonah Alben, Gregory Diamos, Erich Elsen, David Garcia, Boris Ginsburg, Michael Houston, Oleksii Kuchaiev, Ganesh Venkatesh, and Hao Wu.
\newblock Mixed {Precision} {Training}.
\newblock \emph{arXiv:1710.03740 [cs, stat]}, February 2018.
\newblock URL \url{http://arxiv.org/abs/1710.03740}.
\newblock arXiv: 1710.03740.

\bibitem[Micikevicius et~al.(2022)Micikevicius, Stosic, Burgess, Cornea, Dubey, Grisenthwaite, Ha, Heinecke, Judd, Kamalu, Mellempudi, Oberman, Shoeybi, Siu, and Wu]{fp8_micikevicius_2022}
Paulius Micikevicius, Dusan Stosic, Neil Burgess, Marius Cornea, Pradeep Dubey, Richard Grisenthwaite, Sangwon Ha, Alexander Heinecke, Patrick Judd, John Kamalu, Naveen Mellempudi, Stuart~F. Oberman, Mohammad Shoeybi, Michael~Y. Siu, and Hao Wu.
\newblock {FP8} formats for deep learning.
\newblock \emph{CoRR}, abs/2209.05433, 2022.
\newblock \doi{10.48550/arXiv.2209.05433}.
\newblock URL \url{https://doi.org/10.48550/arXiv.2209.05433}.

\bibitem[Micikevicius et~al.(2023)Micikevicius, Oberman, Dubey, Cornea, Rodriguez, Bratt, Grisenthwaite, Jouppi, Chou, Huffman, Schulte, Wittig, Jani, and Deng]{micikevicius_ocp8bit_2023}
Paulius Micikevicius, Stuart Oberman, Pradeep Dubey, Marius Cornea, Andres Rodriguez, Ian Bratt, Richard Grisenthwaite, Norm Jouppi, Chiachen Chou, Amber Huffman, Michael Schulte, Ralph Wittig, Dharmesh Jani, and Summer Deng.
\newblock {OCP 8-bit Floating Point Specification (OFP8)}.
\newblock \emph{Open Compute Project}, 2023.

\bibitem[Microsoft(2024)]{microxcaling_repo}
Microsoft.
\newblock {MicroXcaling: A Library for Microservices Autoscaling}.
\newblock \url{https://github.com/microsoft/microxcaling}, 2024.

\bibitem[Mishra et~al.(2021{\natexlab{a}})Mishra, Latorre, Pool, Stosic, Stosic, Venkatesh, Yu, and Micikevicius]{mishra_accelerating_2021}
Asit~K. Mishra, Jorge~Albericio Latorre, Jeff Pool, Darko Stosic, Dusan Stosic, Ganesh Venkatesh, Chong Yu, and Paulius Micikevicius.
\newblock Accelerating sparse deep neural networks.
\newblock \emph{ArXiv}, abs/2104.08378, 2021{\natexlab{a}}.
\newblock URL \url{https://api.semanticscholar.org/CorpusID:233296249}.

\bibitem[Mishra et~al.(2021{\natexlab{b}})Mishra, Latorre, Pool, Stosic, Stosic, Venkatesh, Yu, and Micikevicius]{mishra_nvidianm_2021}
Asit~K. Mishra, Jorge~Albericio Latorre, Jeff Pool, Darko Stosic, Dusan Stosic, Ganesh Venkatesh, Chong Yu, and Paulius Micikevicius.
\newblock Accelerating sparse deep neural networks.
\newblock \emph{CoRR}, abs/2104.08378, 2021{\natexlab{b}}.
\newblock URL \url{https://arxiv.org/abs/2104.08378}.

\bibitem[Nvidia(2021)]{nvidia_amperewhitepaper_2021}
Nvidia.
\newblock Nvidia ampere architecture whitepaper.
\newblock Technical report, NVIDIA, 2021.
\newblock URL \url{. https://images.nvidia.com/aem-dam/en-zz/ Solutions/data-center/nvidia-ampere-architecture-whitepaper.pdf}.

\bibitem[Nvidia(2022)]{nvidia_h100whitepaper_2022}
Nvidia.
\newblock Nvidia h100 tensor core gpu architecture.
\newblock Technical report, NVIDIA, 2022.
\newblock URL \url{. https://www.advancedclustering.com/wp-content/uploads/2022/03/gtc22-whitepaper-hopper.pdf Solutions/data-center/nvidia-ampere-architecture-whitepaper.pdf}.

\bibitem[Nvidia(2024)]{nvidia_blackwell}
Nvidia.
\newblock Nvidia tensor cores: Unprecedented acceleration for generative ai.
\newblock \url{https://www.nvidia.com/en-us/data-center/tensor-cores/#blackwell}, 2024.

\bibitem[OpenAI(2023)]{openai_gpt4tech_2023}
OpenAI.
\newblock {GPT-4} technical report.
\newblock \emph{CoRR}, abs/2303.08774, 2023.
\newblock \doi{10.48550/ARXIV.2303.08774}.
\newblock URL \url{https://doi.org/10.48550/arXiv.2303.08774}.

\bibitem[Park et~al.(2022)Park, Kim, and Lee]{park_quantized_2022}
Jun-Hyung Park, Kang-Min Kim, and Sangkeun Lee.
\newblock Quantized sparse training: a unified trainable framework for joint pruning and quantization in dnns.
\newblock \emph{ACM Transactions on Embedded Computing Systems (TECS)}, 21\penalty0 (5):\penalty0 1--22, 2022.

\bibitem[Park et~al.(2019)Park, Xu, and Brick]{park_squantizer_2019}
Mi~Sun Park, Xiaofan Xu, and Cormac Brick.
\newblock Squantizer: Simultaneous learning for both sparse and low-precision neural networks, 2019.
\newblock URL \url{https://arxiv.org/abs/1812.08301}.

\bibitem[Pool \& Yu(2021)Pool and Yu]{nvidia_nmtraining}
Jeff Pool and Chong Yu.
\newblock Channel permutations for {N:} {M} sparsity.
\newblock In Marc'Aurelio Ranzato, Alina Beygelzimer, Yann~N. Dauphin, Percy Liang, and Jennifer~Wortman Vaughan (eds.), \emph{Advances in Neural Information Processing Systems 34: Annual Conference on Neural Information Processing Systems 2021, NeurIPS 2021, December 6-14, 2021, virtual}, pp.\  13316--13327, 2021.
\newblock URL \url{https://proceedings.neurips.cc/paper/2021/hash/6e8404c3b93a9527c8db241a1846599a-Abstract.html}.

\bibitem[Rouhani et~al.(2023{\natexlab{a}})Rouhani, Zhao, Elango, Shafipour, Hall, Mesmakhosroshahi, More, Melnick, Golub, Varatkar, Shao, Kolhe, Melts, Klar, L'Heureux, Perry, Burger, Chung, Deng, Naghshineh, Park, and Naumov]{rouhani_mx_2023}
Bita~Darvish Rouhani, Ritchie Zhao, Venmugil Elango, Rasoul Shafipour, Mathew Hall, Maral Mesmakhosroshahi, Ankit More, Levi Melnick, Maximilian Golub, Girish Varatkar, Lai Shao, Gaurav Kolhe, Dimitry Melts, Jasmine Klar, Renee L'Heureux, Matt Perry, Doug Burger, Eric~S. Chung, Zhaoxia~(Summer) Deng, Sam Naghshineh, Jongsoo Park, and Maxim Naumov.
\newblock With shared microexponents, {A} little shifting goes a long way.
\newblock In Yan Solihin and Mark~A. Heinrich (eds.), \emph{Proceedings of the 50th Annual International Symposium on Computer Architecture, {ISCA} 2023, Orlando, FL, USA, June 17-21, 2023}, pp.\  83:1--83:13. {ACM}, 2023{\natexlab{a}}.
\newblock \doi{10.1145/3579371.3589351}.
\newblock URL \url{https://doi.org/10.1145/3579371.3589351}.

\bibitem[Rouhani et~al.(2023{\natexlab{b}})Rouhani, Zhao, More, Hall, Khodamoradi, Deng, Choudhary, Cornea, Dellinger, Denolf, Stosic, Elango, Golub, Heinecke, James{-}Roxby, Jani, Kolhe, Langhammer, Li, Melnick, Mesmakhosroshahi, Rodriguez, Schulte, Shafipour, Shao, Siu, Dubey, Micikevicius, Naumov, Verilli, Wittig, Burger, and Chung]{rouhani_microscaling_2023}
Bita~Darvish Rouhani, Ritchie Zhao, Ankit More, Mathew Hall, Alireza Khodamoradi, Summer Deng, Dhruv Choudhary, Marius Cornea, Eric Dellinger, Kristof Denolf, Dusan Stosic, Venmugil Elango, Maximilian Golub, Alexander Heinecke, Phil James{-}Roxby, Dharmesh Jani, Gaurav Kolhe, Martin Langhammer, Ada Li, Levi Melnick, Maral Mesmakhosroshahi, Andres Rodriguez, Michael Schulte, Rasoul Shafipour, Lei Shao, Michael~Y. Siu, Pradeep Dubey, Paulius Micikevicius, Maxim Naumov, Colin Verilli, Ralph Wittig, Doug Burger, and Eric~S. Chung.
\newblock Microscaling data formats for deep learning.
\newblock \emph{CoRR}, abs/2310.10537, 2023{\natexlab{b}}.
\newblock \doi{10.48550/ARXIV.2310.10537}.
\newblock URL \url{https://doi.org/10.48550/arXiv.2310.10537}.

\bibitem[Sanh et~al.(2020)Sanh, Wolf, and Rush]{sanh_movement_2020}
Victor Sanh, Thomas Wolf, and Alexander Rush.
\newblock Movement pruning: Adaptive sparsity by fine-tuning.
\newblock \emph{Advances in neural information processing systems}, 33:\penalty0 20378--20389, 2020.

\bibitem[Scao et~al.(2022)Scao, Fan, Akiki, Pavlick, Ilic, Hesslow, Castagn{\'{e}}, Luccioni, Yvon, Gall{\'{e}}, Tow, Rush, Biderman, Webson, Ammanamanchi, Wang, Sagot, Muennighoff, del Moral, Ruwase, Bawden, Bekman, McMillan{-}Major, Beltagy, Nguyen, Saulnier, Tan, Suarez, Sanh, Lauren{\c{c}}on, Jernite, Launay, Mitchell, Raffel, Gokaslan, Simhi, Soroa, Aji, Alfassy, Rogers, Nitzav, Xu, Mou, Emezue, Klamm, Leong, van Strien, Adelani, and et~al.]{scao_bloom_2022}
Teven~Le Scao, Angela Fan, Christopher Akiki, Ellie Pavlick, Suzana Ilic, Daniel Hesslow, Roman Castagn{\'{e}}, Alexandra~Sasha Luccioni, Fran{\c{c}}ois Yvon, Matthias Gall{\'{e}}, Jonathan Tow, Alexander~M. Rush, Stella Biderman, Albert Webson, Pawan~Sasanka Ammanamanchi, Thomas Wang, Beno{\^{\i}}t Sagot, Niklas Muennighoff, Albert~Villanova del Moral, Olatunji Ruwase, Rachel Bawden, Stas Bekman, Angelina McMillan{-}Major, Iz~Beltagy, Huu Nguyen, Lucile Saulnier, Samson Tan, Pedro~Ortiz Suarez, Victor Sanh, Hugo Lauren{\c{c}}on, Yacine Jernite, Julien Launay, Margaret Mitchell, Colin Raffel, Aaron Gokaslan, Adi Simhi, Aitor Soroa, Alham~Fikri Aji, Amit Alfassy, Anna Rogers, Ariel~Kreisberg Nitzav, Canwen Xu, Chenghao Mou, Chris Emezue, Christopher Klamm, Colin Leong, Daniel van Strien, David~Ifeoluwa Adelani, and et~al.
\newblock {BLOOM:} {A} 176b-parameter open-access multilingual language model.
\newblock \emph{CoRR}, abs/2211.05100, 2022.
\newblock \doi{10.48550/ARXIV.2211.05100}.
\newblock URL \url{https://doi.org/10.48550/arXiv.2211.05100}.

\bibitem[Sun et~al.(2023)Sun, Liu, Bair, and Kolter]{sun_wanda_2023}
Mingjie Sun, Zhuang Liu, Anna Bair, and J.~Zico Kolter.
\newblock A simple and effective pruning approach for large language models.
\newblock \emph{CoRR}, abs/2306.11695, 2023.
\newblock \doi{10.48550/ARXIV.2306.11695}.
\newblock URL \url{https://doi.org/10.48550/arXiv.2306.11695}.

\bibitem[Sun et~al.(2021)Sun, Zhou, Stuijk, Wijnhoven, Nelson, Li, and Corporaal]{sun_dominosearch_2021}
Wei Sun, Aojun Zhou, Sander Stuijk, Rob G.~J. Wijnhoven, Andrew Nelson, Hongsheng Li, and Henk Corporaal.
\newblock Dominosearch: Find layer-wise fine-grained {N:} {M} sparse schemes from dense neural networks.
\newblock In Marc'Aurelio Ranzato, Alina Beygelzimer, Yann~N. Dauphin, Percy Liang, and Jennifer~Wortman Vaughan (eds.), \emph{Advances in Neural Information Processing Systems 34: Annual Conference on Neural Information Processing Systems 2021, NeurIPS 2021, December 6-14, 2021, virtual}, pp.\  20721--20732, 2021.
\newblock URL \url{https://proceedings.neurips.cc/paper/2021/hash/ad68473a64305626a27c32a5408552d7-Abstract.html}.

\bibitem[Syed et~al.(2023)Syed, Guo, and Sundarapandiyan]{syed_prunetune_2023}
Aaquib Syed, Phillip~Huang Guo, and Vijaykaarti Sundarapandiyan.
\newblock Prune and tune: Improving efficient pruning techniques for massive language models.
\newblock In Krystal Maughan, Rosanne Liu, and Thomas~F. Burns (eds.), \emph{The First Tiny Papers Track at {ICLR} 2023, Tiny Papers @ {ICLR} 2023, Kigali, Rwanda, May 5, 2023}. OpenReview.net, 2023.
\newblock URL \url{https://openreview.net/pdf?id=cKlgcx7nSZ}.

\bibitem[Sze et~al.(2020)Sze, Chen, Yang, and Emer]{Sze_efficientprocssing_2020}
Vivienne Sze, Yu{-}Hsin Chen, Tien{-}Ju Yang, and Joel~S. Emer.
\newblock \emph{Efficient Processing of Deep Neural Networks}.
\newblock Synthesis Lectures on Computer Architecture. Morgan {\&} Claypool Publishers, 2020.
\newblock ISBN 978-3-031-00638-8.
\newblock \doi{10.2200/S01004ED1V01Y202004CAC050}.
\newblock URL \url{https://doi.org/10.2200/S01004ED1V01Y202004CAC050}.

\bibitem[Touvron et~al.(2023)Touvron, Martin, Stone, Albert, Almahairi, Babaei, Bashlykov, Batra, Bhargava, Bhosale, Bikel, Blecher, Canton{-}Ferrer, Chen, Cucurull, Esiobu, Fernandes, Fu, Fu, Fuller, Gao, Goswami, Goyal, Hartshorn, Hosseini, Hou, Inan, Kardas, Kerkez, Khabsa, Kloumann, Korenev, Koura, Lachaux, Lavril, Lee, Liskovich, Lu, Mao, Martinet, Mihaylov, Mishra, Molybog, Nie, Poulton, Reizenstein, Rungta, Saladi, Schelten, Silva, Smith, Subramanian, Tan, Tang, Taylor, Williams, Kuan, Xu, Yan, Zarov, Zhang, Fan, Kambadur, Narang, Rodriguez, Stojnic, Edunov, and Scialom]{touvron_llama2_2023}
Hugo Touvron, Louis Martin, Kevin Stone, Peter Albert, Amjad Almahairi, Yasmine Babaei, Nikolay Bashlykov, Soumya Batra, Prajjwal Bhargava, Shruti Bhosale, Dan Bikel, Lukas Blecher, Cristian Canton{-}Ferrer, Moya Chen, Guillem Cucurull, David Esiobu, Jude Fernandes, Jeremy Fu, Wenyin Fu, Brian Fuller, Cynthia Gao, Vedanuj Goswami, Naman Goyal, Anthony Hartshorn, Saghar Hosseini, Rui Hou, Hakan Inan, Marcin Kardas, Viktor Kerkez, Madian Khabsa, Isabel Kloumann, Artem Korenev, Punit~Singh Koura, Marie{-}Anne Lachaux, Thibaut Lavril, Jenya Lee, Diana Liskovich, Yinghai Lu, Yuning Mao, Xavier Martinet, Todor Mihaylov, Pushkar Mishra, Igor Molybog, Yixin Nie, Andrew Poulton, Jeremy Reizenstein, Rashi Rungta, Kalyan Saladi, Alan Schelten, Ruan Silva, Eric~Michael Smith, Ranjan Subramanian, Xiaoqing~Ellen Tan, Binh Tang, Ross Taylor, Adina Williams, Jian~Xiang Kuan, Puxin Xu, Zheng Yan, Iliyan Zarov, Yuchen Zhang, Angela Fan, Melanie Kambadur, Sharan Narang, Aur{\'{e}}lien Rodriguez, Robert Stojnic, Sergey Edunov,
  and Thomas Scialom.
\newblock Llama 2: Open foundation and fine-tuned chat models.
\newblock \emph{CoRR}, abs/2307.09288, 2023.
\newblock \doi{10.48550/ARXIV.2307.09288}.
\newblock URL \url{https://doi.org/10.48550/arXiv.2307.09288}.

\bibitem[van Baalen et~al.(2023)van Baalen, Kuzmin, Nair, Ren, Mahurin, Patel, Subramanian, Lee, Nagel, Soriaga, and Blankevoort]{baalen_int8_2023}
Mart van Baalen, Andrey Kuzmin, Suparna~S. Nair, Yuwei Ren, Eric Mahurin, Chirag Patel, Sundar Subramanian, Sanghyuk Lee, Markus Nagel, Joseph Soriaga, and Tijmen Blankevoort.
\newblock {FP8} versus {INT8} for efficient deep learning inference.
\newblock \emph{CoRR}, abs/2303.17951, 2023.
\newblock \doi{10.48550/ARXIV.2303.17951}.
\newblock URL \url{https://doi.org/10.48550/arXiv.2303.17951}.

\bibitem[Wang et~al.(2022)Wang, Liu, Venkataramani, Sen, Chen, Maghraoui, Srinivasan, and Chang]{wang_deepcomp_2022}
Naigang Wang, Chi{-}Chun~(Charlie) Liu, Swagath Venkataramani, Sanchari Sen, Chia{-}Yu Chen, Kaoutar~El Maghraoui, Vijayalakshmi Srinivasan, and Leland Chang.
\newblock Deep compression of pre-trained transformer models.
\newblock In Sanmi Koyejo, S.~Mohamed, A.~Agarwal, Danielle Belgrave, K.~Cho, and A.~Oh (eds.), \emph{Advances in Neural Information Processing Systems 35: Annual Conference on Neural Information Processing Systems 2022, NeurIPS 2022, New Orleans, LA, USA, November 28 - December 9, 2022}, 2022.

\bibitem[Wang \& Kanwar(2019)Wang and Kanwar]{wang_bfloat16_2019}
Shibo Wang and Pankaj Kanwar.
\newblock {BFloat16}: {The} secret to high performance on {Cloud} {TPUs}, August 2019.

\bibitem[Wang et~al.(2020)Wang, Wang, Cai, Lin, Liu, Wang, Lin, and Han]{wang_apq_2020}
Tianzhe Wang, Kuan Wang, Han Cai, Ji~Lin, Zhijian Liu, Hanrui Wang, Yujun Lin, and Song Han.
\newblock Apq: Joint search for network architecture, pruning and quantization policy.
\newblock In \emph{Proceedings of the IEEE/CVF Conference on Computer Vision and Pattern Recognition}, pp.\  2078--2087, 2020.

\bibitem[Wen et~al.(2016)Wen, Wu, Wang, Chen, and Li]{wen_learning_2016}
Wei Wen, Chunpeng Wu, Yandan Wang, Yiran Chen, and Hai Li.
\newblock Learning structured sparsity in deep neural networks.
\newblock \emph{Advances in neural information processing systems}, 29, 2016.

\bibitem[Wu et~al.(2023)Wu, Li, Aminabadi, Yao, and He]{wu2023understanding}
Xiaoxia Wu, Cheng Li, Reza~Yazdani Aminabadi, Zhewei Yao, and Yuxiong He.
\newblock Understanding int4 quantization for transformer models: Latency speedup, composability, and failure cases, 2023.

\bibitem[Xiao et~al.(2023)Xiao, Lin, Seznec, Wu, Demouth, and Han]{xiao_smoothquant_2023}
Guangxuan Xiao, Ji~Lin, Micka{\"{e}}l Seznec, Hao Wu, Julien Demouth, and Song Han.
\newblock Smoothquant: Accurate and efficient post-training quantization for large language models.
\newblock In Andreas Krause, Emma Brunskill, Kyunghyun Cho, Barbara Engelhardt, Sivan Sabato, and Jonathan Scarlett (eds.), \emph{International Conference on Machine Learning, {ICML} 2023, 23-29 July 2023, Honolulu, Hawaii, {USA}}, volume 202 of \emph{Proceedings of Machine Learning Research}, pp.\  38087--38099. {PMLR}, 2023.
\newblock URL \url{https://proceedings.mlr.press/v202/xiao23c.html}.

\bibitem[Yao et~al.(2019)Yao, Cao, Xiao, Zhang, and Nie]{yao:balancedsparsity}
Zhuliang Yao, Shijie Cao, Wencong Xiao, Chen Zhang, and Lanshun Nie.
\newblock Balanced sparsity for efficient {DNN} inference on {GPU}.
\newblock In \emph{The Thirty-Third {AAAI} Conference on Artificial Intelligence, {AAAI} 2019, The Thirty-First Innovative Applications of Artificial Intelligence Conference, {IAAI} 2019, The Ninth {AAAI} Symposium on Educational Advances in Artificial Intelligence, {EAAI} 2019, Honolulu, Hawaii, USA, January 27 - February 1, 2019}, pp.\  5676--5683. {AAAI} Press, 2019.
\newblock \doi{10.1609/AAAI.V33I01.33015676}.
\newblock URL \url{https://doi.org/10.1609/aaai.v33i01.33015676}.

\bibitem[Yeh et~al.(2022)Yeh, Sterner, Lai, Chuang, and Ihler]{yeh_like_2022}
Thomas Yeh, Max Sterner, Zerlina Lai, Brandon Chuang, and Alexander Ihler.
\newblock Be like water: Adaptive floating point for machine learning.
\newblock In \emph{International Conference on Machine Learning}, pp.\  25490--25500. PMLR, 2022.

\bibitem[Yu et~al.(2023)Yu, Chen, Gan, and Fan]{yu_boost_2023}
Chong Yu, Tao Chen, Zhongxue Gan, and Jiayuan Fan.
\newblock Boost vision transformer with gpu-friendly sparsity and quantization.
\newblock In \emph{Proceedings of the IEEE/CVF Conference on Computer Vision and Pattern Recognition}, pp.\  22658--22668, 2023.

\bibitem[Yu et~al.(2020)Yu, Wu, Klopp, Chen, and Chien]{yu_joint_2020}
Po-Hsiang Yu, Sih-Sian Wu, Jan~P Klopp, Liang-Gee Chen, and Shao-Yi Chien.
\newblock Joint pruning \& quantization for extremely sparse neural networks.
\newblock \emph{arXiv preprint arXiv:2010.01892}, 2020.

\bibitem[Zafrir et~al.(2019)Zafrir, Boudoukh, Izsak, and Wasserblat]{zafrir_q8bert_2019}
Ofir Zafrir, Guy Boudoukh, Peter Izsak, and Moshe Wasserblat.
\newblock {Q8BERT:} quantized 8bit {BERT}.
\newblock In \emph{Fifth Workshop on Energy Efficient Machine Learning and Cognitive Computing - NeurIPS Edition, EMC2@NeurIPS 2019, Vancouver, Canada, December 13, 2019}, pp.\  36--39. {IEEE}, 2019.
\newblock \doi{10.1109/EMC2-NIPS53020.2019.00016}.
\newblock URL \url{https://doi.org/10.1109/EMC2-NIPS53020.2019.00016}.

\bibitem[Zandonati et~al.(2023)Zandonati, Bucagu, Pol, Pierini, Sirkin, and Kopetz]{zandonati_towards_2023}
Ben Zandonati, Glenn Bucagu, Adrian~Alan Pol, Maurizio Pierini, Olya Sirkin, and Tal Kopetz.
\newblock Towards optimal compression: Joint pruning and quantization.
\newblock \emph{CoRR}, abs/2302.07612, 2023.
\newblock \doi{10.48550/ARXIV.2302.07612}.
\newblock URL \url{https://doi.org/10.48550/arXiv.2302.07612}.

\bibitem[Zhang et~al.(2022{\natexlab{a}})Zhang, McDanel, and Kung]{zhang_fast_2022}
Sai~Qian Zhang, Bradley McDanel, and HT~Kung.
\newblock Fast: Dnn training under variable precision block floating point with stochastic rounding.
\newblock In \emph{2022 IEEE International Symposium on High-Performance Computer Architecture (HPCA)}, pp.\  846--860. IEEE, 2022{\natexlab{a}}.

\bibitem[Zhang et~al.(2022{\natexlab{b}})Zhang, Roller, Goyal, Artetxe, Chen, Chen, Dewan, Diab, Li, Lin, Mihaylov, Ott, Shleifer, Shuster, Simig, Koura, Sridhar, Wang, and Zettlemoyer]{zhang_opt_2022}
Susan Zhang, Stephen Roller, Naman Goyal, Mikel Artetxe, Moya Chen, Shuohui Chen, Christopher Dewan, Mona~T. Diab, Xian Li, Xi~Victoria Lin, Todor Mihaylov, Myle Ott, Sam Shleifer, Kurt Shuster, Daniel Simig, Punit~Singh Koura, Anjali Sridhar, Tianlu Wang, and Luke Zettlemoyer.
\newblock {OPT:} open pre-trained transformer language models.
\newblock \emph{CoRR}, abs/2205.01068, 2022{\natexlab{b}}.
\newblock \doi{10.48550/ARXIV.2205.01068}.
\newblock URL \url{https://doi.org/10.48550/arXiv.2205.01068}.

\bibitem[Zhang et~al.(2021)Zhang, Colbert, Kreutz-Delgado, and Das]{zhang_training_2021}
Xinyu Zhang, Ian Colbert, Ken Kreutz-Delgado, and Srinjoy Das.
\newblock Training deep neural networks with joint quantization and pruning of weights and activations.
\newblock \emph{arXiv preprint arXiv:2110.08271}, 2021.

\bibitem[Zhou et~al.(2021)Zhou, Ma, Zhu, Liu, Zhang, Yuan, Sun, and Li]{zhou_nmfromscratch_2021}
Aojun Zhou, Yukun Ma, Junnan Zhu, Jianbo Liu, Zhijie Zhang, Kun Yuan, Wenxiu Sun, and Hongsheng Li.
\newblock Learning {N:} {M} fine-grained structured sparse neural networks from scratch.
\newblock In \emph{9th International Conference on Learning Representations, {ICLR} 2021, Virtual Event, Austria, May 3-7, 2021}. OpenReview.net, 2021.
\newblock URL \url{https://openreview.net/forum?id=K9bw7vqp\_s}.

\end{thebibliography}
\bibliographystyle{iclr2025_conference}
}

\newpage
\appendix

\doparttoc %
\faketableofcontents %
\addcontentsline{toc}{section}{Appendix} %
\renewcommand\thepart{} %
\renewcommand\partname{} %
\part{Appendix} %
\parttoc

\section{Additional related work}
\label{app:related_work}
\niparagraph{Quantization.}
Element-wise scaling formats, such as FP32, BFLOAT32~\citep{wang_bfloat16_2019}, and FP16~\citep{micikevicius_mixedfp16_2018}, consist of sign, mantissa, and exponent components, differing in the bit allocation for each component.
Conversely, block-wise scaling formats assign scaling factors to blocks of elements, with block sizes varying by format. For instance, INT8 employs per-tensor scaling, where a single scaling factor is shared by around 1K elements.

Recent research highlights the effectiveness of fine-grained block-wise scaling formats with block sizes smaller than 100 elements, especially in the sub-8-bit regime for both training and inference~\citep{drumond_hbfp_2018, rouhani_mx_2023, rouhani_microscaling_2023, zhang_fast_2022, rouhani_msfp_2020}.
These formats are further categorized into single-level and two-level scaling.
Single-level block-wise formats, such as HBFP~\citep{drumond_hbfp_2018,rouhani_msfp_2020} and MXINT~\citep{rouhani_microscaling_2023}, enable fixed-point arithmetic by sharing a single exponent within a block of mantissa or integers.
Two-level formats, like MXFP~\citep{rouhani_mx_2023, rouhani_microscaling_2023, fp8_micikevicius_2022} and FP8, use more granular scaling factors at the second level, offering greater robustness across diverse range of models.

\niparagraph{Sparsity.}
Unstructured sparsity~\citep{han_deep_2015, guo_dynamic_2016, frankle_lottery_2019, evci_rigging_2020} involves removing individual tensor elements without any specific pattern.
Structured sparsity~\citep{wen_learning_2016}, on the other hand, employs specific patterns when pruning tensor elements.
Recent work~\citep{yao:balancedsparsity,kang:accaware} has highlighted the effectiveness of fine-grained N:M structured sparsity in mitigating model accuracy loss.
The introduction of the 2:4 structured-sparse Tensor Core in the Nvidia Ampere architecture~\citep{nvidia_amperewhitepaper_2021} has further driven research in developing N:M sparsity training recipes~\citep{mishra_nvidianm_2021, nvidia_nmtraining, zhou_nmfromscratch_2021, sun_dominosearch_2021, hubara_transposable_2021, lu_step_2023}.

The fundamental operation in any sparsification scheme is selecting candidate elements for pruning among which magnitude-based sparsity~\citep{han_learning_2015} is one of the most widely used methods~\citep{lu_step_2023,ding_usmlite_2023,bambhaniya_progressive_2024}.
In addition, recent work has introduced one-shot pruning methods, such as SparseGPT~\citep{frantar_sparsegpt_2023} and Wanda~\citep{sun_wanda_2023}, aiming to eliminate the need for an additional fine-tuning phase.
While these methods achieve state-of-the-art performance for one-shot pruning, evidence suggests that incorporating a fine-tuning phase can lead to better model quality~\citep{sun_wanda_2023,syed_prunetune_2023}.
Although these methods are proposed to eliminate fine-tuning and present state-of-the-art accuracies for one-shot pruning, it has been shown that fine-tuning still improves accuracies significantly~\citep{sun_wanda_2023,syed_prunetune_2023}.
\begin{table}[t]
\caption{Types of max-scaled numerical encodings}
\vspace{\baselineskip}
\label{tab:num_formats}
\centering
  \resizebox{0.85\columnwidth}{!}{
\begin{tabular}{ccccccccc}
    \hline
     \multicolumn{2}{c}{} &   \multicolumn{2}{c} {\textbf{Element-wise}} & \multicolumn{3}{c}{\textbf{Single-level block-wise}} & \multicolumn{2}{c}{\textbf{Two-level block-wise}} \\
     \cmidrule(l{5pt}r{5pt}){3-4} 
     \cmidrule(l{5pt}r{5pt}){5-7} 
     \cmidrule(l{5pt}r{5pt}){8-9} 
     \multicolumn{2}{c}{}  & \textbf{FP32/FP16} & \textbf{BFloat16} & \textbf{INT} & \textbf{HBFP} & \textbf{MXINT} & \textbf{MXFP} & \textbf{FP8} \\
     \hline
     \multirow{2}{*}{\makecell{Scaling level 1}} & Block size & 1 & 1 & 1k & 64 & 32 & 32 & 10k \\
      & Scale type & HW & HW & SW & HW & HW & HW & SW \\
      \hline
     \multirow{2}{*}{\makecell{Scaling level 2}} & Block size & - & - & - & - & - & 1 & 1 \\
     & Scale type & - & - & - & - & - & HW & HW\\
     \hline
\end{tabular}
}
\end{table}
\vspace{\baselineskip}

\section{Hyperparameters}
\label{app:hyperparameters}

We perform full parameter fine-tuning while applying magnitude-based sparsity methods. We find the optimal hyperparameters through grid search for each model and sparsity type and apply the same hyperparameters across all number formats, including FP32. We observe that fine-tuning in a Q$\rightarrow$S order, where we quantize and sparsify tensors at each iteration, leads to a highly unstable training process, especially with the structured sparsity. For this reason, we impose limitations on the number of training iterations and the learning rate. Thus, we prioritize achieving reproducible and comparable results across all number formats over achieving full convergence for each specific configuration.

\begin{table}[ph]
\label{tab:hyperparameters}
\caption{Details of the sparse fine-tuning experiments}
\vspace{\baselineskip}
\centering
  \resizebox{\columnwidth}{!}{
  \begin{tabular}{ccccccc}
    \toprule

\textbf{Model} & \textbf{Sparsity type} & \textbf{Batch size} & \textbf{Weight decay} & \textbf{Optimizer} & \textbf{FT num. iterations} & \textbf{Learning rate} \\
    \midrule
\multirow{2}{*}{OPT-125M}  & 50\% & 8 & - & Adam & 1776 & 1e$^{-4}$ \\
                        & 2:4 & 8 & - & Adam & 1776 & 1e$^{-4}$ \\\midrule
    
\multirow{2}{*}{OPT-6.7B}  &  50\% & 4 & - & Adam & 1000 & 5e$^{-4}$ \\
                        & 2:4 & 4 & - & Adam & 1500 & 5e$^{-4}$ \\\midrule
    
\multirow{2}{*}{LLaMA-2-7B}  &  50\%  & 2 & 1e$^{-3}$ & AdamW & 150 & 2e$^{-4}$ \\ 
                          & 2:4  & 2 & 1e$^{-3}$ & AdamW & 60 & 5e$^{-5}$ \\\midrule

\multirow{2}{*}{LLaMA-3-8B}  &  50\%  & 2 & 1e$^{-3}$ & AdamW & 200 & 2e$^{-5}$ \\ 
                          & 2:4  & 1 & 1e$^{-3}$ & AdamW & 300 & 2e$^{-5}$ \\\midrule                         
\multirow{2}{*}{ViT-B/16}  &  50\%  & 32 & 1e$^{-3}$ & Adam & 30000 & 5e$^{-5}$ \\ 
                          & 2:4  & 32 & 1e$^{-3}$ & Adam & 30000 & 5e$^{-5}$ \\                         

    \bottomrule
  \end{tabular}
}
\end{table}
\vspace{\baselineskip}

\bench{OPT-125M} and \bench{OPT-6.7B} models are fine-tuned with block sizes of 512 and 1024 respectively, while LLaMa models utilize a block size of 2048. All configurations employ a linear learning rate schedule without a warm-up.

\section{Compute resources and runtime}
\label{app:compute}
We conduct our experiments on four NVIDIA A100 GPUs with 80GB memory, and for small models, we use four NVIDIA V100 GPUs with 32GB memory. 
The hyperparameters used in our experiments, including the number of fine-tuning epochs, are given in Appendix~\ref{app:hyperparameters}.
In summary, the estimated runtime for each fine-tuning experiments on these hardware platforms are as follows: (a) 20 minutes for \bench{OPT-125M}, (b) 5-6 hours for \bench{OPT-6.7B},
(c) 2-3 hours for \bench{LLaMA-2-7B} and \bench{LLaMA-3-8B}, and (d) 40 hours for \bench{ViT-B/16}.

\section{Fine-tuning strategies}\label{app:fine-tuning}

Magnitude-based sparsity applied in one-shot causes a significant perplexity degradation and thus needs an additional fine-tuning to recover the perplexity. In combination with quantization, several fine-tuning strategies are possible:
\begin{enumerate}
    \item Sparse fine-tuning of FP32 model followed by the post-training quantization, sparsity masks are applied to the FP32 weight tensors
    \item Fine-tuning in sparsified and quantized manner, where we sparsify and then quantize tensors at each iteration
    \item Sparse fine-tuning of FP32 model followed by the post-training quantization, sparsity masks are applied to the quantized weight tensors
    \item Fine-tuning in quantized and sparsified manner, where we quantize and then sparsify tensors at each iteration
\end{enumerate}
The former two strategies correspond to the S$\rightarrow$Q order of transformations, while the latter two correspond to the Q$\rightarrow$S order. We conduct ablation experiments for the \bench{OPT-125M} to compare these fine-tuning strategies. The results are presented in the Table ~\ref{tab:fine-tuning}.
\begin{table}[ht]
\centering
\caption{Validation perplexities of \bench{OPT-125M} on WikiText2, produced by different fine-tuning strategies. The best results for each configuration are highlighted in bold.}
\vspace{\baselineskip}
\centering
\resizebox{0.5\columnwidth}{!}{
\label{tab:fine-tuning}
\begin{tabular}{cccccc}
\toprule
\textbf{\makecell{Sparsity \\ type}} & \textbf{\makecell{Number \\ format}} & \textbf{Order} & \textbf{\makecell{Quantization \\ during fine-tuning}} & \textbf{PPL} \\
\midrule
\multirow{4}{*}{50\%} 
& \multirow{4}{*}{HBFP8} & \multirow{2}{*}{S$\rightarrow$Q} & $\times$ & \textbf{30.46} \\
&                         &                       &   $\checkmark$ & 33.51 \\
&                         &  \multirow{2}{*}{Q$\rightarrow$S} & $\times$ & 39.04 \\
&                         &                         &  $\checkmark$ & \textbf{37.48} \\
\midrule
\multirow{4}{*}{50\%} 
& \multirow{4}{*}{HBFP6} & \multirow{2}{*}{S$\rightarrow$Q} & $\times$ & \textbf{32.51} \\
&                         &                       &  $\checkmark$ & 36.20 \\
&                         &  \multirow{2}{*}{Q$\rightarrow$S} & $\times$ & 41.97 \\
&                         &                         &  $\checkmark$ & \textbf{40.86} \\
\bottomrule

\end{tabular}
}
\end{table}
\vspace{\baselineskip}

According to our results, post-training quantization outperforms sparse-and-quantized fune-tuning in S$\rightarrow$Q order. In contrast, fine-tuning in quantized and sparsified manner recovers perplexity better than post-training quantization in the reverse order.

\section{Experimental Setup Summary}
\label{app:exp-setup}
Table~\ref{tab:exp-setup} presents a summary of our experimental setup.
\begin{table}[!ht]
\centering
\resizebox{1\columnwidth}{!}{
\begin{tabular}{lccc}
\hline
\multicolumn{4}{|l|}{\textbf{Compressed operations by model type}}                                                                                                                                                                               \\ \hline
\multicolumn{1}{|l|}{}                                            & \multicolumn{1}{c|}{LLMs}                                     & \multicolumn{1}{c|}{ViTs}                         & \multicolumn{1}{c|}{CNNs}                       \\ \hline
\multicolumn{1}{|l|}{Linear Layers}                               & \multicolumn{1}{c|}{Yes}                                      & \multicolumn{1}{c|}{Yes}                          & \multicolumn{1}{c|}{Yes}                        \\ \hline
\multicolumn{1}{|l|}{Convolutional Layers}                        & \multicolumn{1}{c|}{--}                                       & \multicolumn{1}{c|}{--}                           & \multicolumn{1}{c|}{Yes}                        \\ \hline
\multicolumn{1}{|l|}{Embedding Layer}                             & \multicolumn{1}{c|}{No}                                       & \multicolumn{1}{c|}{Yes}                          & \multicolumn{1}{c|}{--}                         \\ \hline
\multicolumn{1}{|l|}{Attention MatMuls}                           & \multicolumn{1}{c|}{Yes}                                      & \multicolumn{1}{c|}{Yes}                          & \multicolumn{1}{c|}{--}                         \\ \hline
\multicolumn{4}{l}{}                                                                                                                                                                                                                    \\ \hline
\multicolumn{4}{|l|}{\textbf{Tensor-level granularity for compressed operations}}                                                                                                                                                                \\ \hline
\multicolumn{1}{|l|}{}                                            & \multicolumn{1}{c|}{Weights}                                  & \multicolumn{1}{c|}{Activations}                  & \multicolumn{1}{c|}{Gradients}                  \\ \hline
\multicolumn{1}{|l|}{Forward Pass (for inference and finetuning)} & \multicolumn{1}{c|}{Sparsified + quantized}                   & \multicolumn{1}{c|}{Quantized}                    & \multicolumn{1}{c|}{--}                         \\ \hline
\multicolumn{1}{|l|}{Backward Pass (for finetuning only)}         & \multicolumn{1}{c|}{Quantized}                                & \multicolumn{1}{c|}{Quantized}                    & \multicolumn{1}{c|}{Quantized}                  \\ \hline

\multicolumn{4}{|l|}{(1) Note: Only matrix multiplications are compressed; other operations (e.g., optimizer updates) are in FP32}                  \\ \hline
\multicolumn{4}{|l|}{(2) Note: Master weights are stored in FP32, since our primary aim is to compress inference.}                  \\ \hline

\multicolumn{4}{l}{}                                                                                                                                                                                                                    \\ \hline
\multicolumn{4}{|l|}{\textbf{Workflow for the two different compression orders}}                                                                                                                                                                 \\ \hline
\multicolumn{1}{|l|}{S $\rightarrow$ Q}                           & \multicolumn{3}{l|}{\begin{tabular}[c]{@{}l@{}}- Sparse finetuning followed by zero-shot quantization\\ - Sparsity mask recomputed for each iteration\end{tabular}} \\ \hline
\multicolumn{1}{|l|}{Q $\rightarrow$ S}                           & \multicolumn{3}{l|}{\begin{tabular}[c]{@{}l@{}}- Quantization and sparse finetuning\\ - Quantization and sparsity masks recomputed for each iteration\end{tabular}} \\ \hline
\end{tabular}
}
\caption{Summary of experimental setup}
\label{tab:exp-setup}
\end{table}

\section{Post-training one-shot sparsity methods}
\label{app:oneshotsparsity}

Other sparsity schemes, such as Wanda and SparseGPT, have a different pruning policy, which uses activations to assess the significance of the weights and prunes only the least significant ones. The pruning metrics in Wanda and SparseGPT are 
\begin{equation}
    S_{ij} = |\mathbf{W}_{ij}| \cdot \| \mathbf{X}_j \|_2 \quad\mathrm{and}\quad  S_{ij} = [|\mathbf{W}|^2/\text{diag}((\mathbf{X}^{T}\mathbf{X} + \lambda \mathbf{I})^{-1}]_{ij}
\end{equation}
respectively. If quantization is applied before sparsity, the input values will change, which might also change the set of the nullified weights. However, the significance of those weights will not change considerably. Therefore, the correction vector $\vt_c$ consists of the least significant weights, which are multiplied by the elements of $q(\bx)$ with the lowest values due to the chosen pruning metrics. As a result, the magnitude of $\vten$ and the effect of changing the order of the operations is much lower for those sparsity schemes than for the magnitude-based sparsity.

\begin{table}[ht]
\centering
\caption{Model perplexities on WikiText2 for combined sparsity and quantization. The numbers in the parentheses show the difference in perplexity between the sparse and dense configuration.}
\vspace{\baselineskip}
\centering
  \resizebox{\columnwidth}{!}{
\label{tab:order_withsparsegpt}
\begin{tabular}{@{}ccccccccccccccc@{}}
\toprule
& & & \multicolumn{6}{c}{\textbf{OPT-125M} ($\downarrow$)} & \multicolumn{6}{c}{\textbf{LLaMA-2-7B} ($\downarrow$)}   \\ 
\cmidrule(l{5pt}r{5pt}){4-9} \cmidrule(l{5pt}r{5pt}){10-15} 
\textbf{\makecell{Sparsity\\type}} & \textbf{\makecell{Sparsity\\method}} & \textbf{Order}& \textbf{FP32} & \textbf{INT8} & \textbf{MXFP8} & \textbf{MXFP6} & \textbf{HBFP8} & \textbf{HBFP6}
& \textbf{FP32} & \textbf{INT8} & \textbf{MXFP8} & \textbf{MXFP6} & \textbf{HBFP8} & \textbf{HBFP6} \\ \midrule
0\% & - & - & 27.65 & 28.06 & 28.45 & 28.01 & 27.81 & 29.91 
& 5.12 & 5.15 & 5.17 & 5.16 & 5.12 & 5.24  \\ \midrule
\multirow{6}{*}[-0.5em]{50\%} & \multirow{ 2}{*}{Magnitude} & S$\rightarrow$ Q & 29.94\textsuperscript{{(+2.29)}} & \textbf{30.22}\textsuperscript{{(+2.16)}} & \textbf{31.13}\textsuperscript{{(+2.68)}} & \textbf{31.20}\textsuperscript{{(+3.19)}} & \textbf{30.46}\textsuperscript{{(+2.65)}} & \textbf{32.51}\textsuperscript{{(+2.60)}}  
& 6.31\textsuperscript{{(+1.19)}} & \textbf{6.94}\textsuperscript{{(+1.79)}} & \textbf{6.40}\textsuperscript{{(+1.23)}} & \textbf{6.38}\textsuperscript{{(+1.22)}} & \textbf{6.32}\textsuperscript{{(+1.2)}} & \textbf{6.51}\textsuperscript{{(+1.27)}} \\ 
 &  & Q$\rightarrow$ S & - & 34.71\textsuperscript{{(+6.65)}} & 36.39\textsuperscript{{(+7.94)}} & 35.60\textsuperscript{{(+7.59)}} & 37.48\textsuperscript{{(+9.67)}} & 40.86\textsuperscript{{(+10.95)}} 
& - & 8.13\textsuperscript{{(+2.98)}} & 8.47\textsuperscript{{(+3.30)}} & 9.32\textsuperscript{{(+4.16)}} & 9.86\textsuperscript{{(+4.74)}} & 10.20\textsuperscript{{(+4.96)}}  \\\cmidrule(l){2-15}
 & \multirow{2}{*}{Wanda} & S$\rightarrow$ Q & 38.97\textsuperscript{{(+11.32)}} & \textbf{39.29}\textsuperscript{{(+11.23)}}  & \textbf{39.72}\textsuperscript{{(+11.27)}} & \textbf{40.02}\textsuperscript{{(+12.01)}} & \textbf{39.21}\textsuperscript{{(+11.40)}} & \textbf{42.33}\textsuperscript{{(+12.42)}} 
& 6.46\textsuperscript{{(+1.34)}} & 6.47\textsuperscript{{(+1.32)}} & \textbf{6.53}\textsuperscript{{(+1.36)}} & 6.53\textsuperscript{{(+1.37)}} & 6.48\textsuperscript{{(+1.36)}}  & \textbf{6.73}\textsuperscript{{(+1.49)}}  \\ 
 &  & Q$\rightarrow$ S & - & 40.01\textsuperscript{{(+11.95)}} & 40.58\textsuperscript{{(+12.13)}} & 40.43\textsuperscript{{(+12.42)}} & 40.52\textsuperscript{{(+12.71)}}  & 42.57\textsuperscript{{(+12.66)}} 
& - & \textbf{6.46}\textsuperscript{{(+1.31)}} & 6.55\textsuperscript{{(+1.38)}} & \textbf{6.52}\textsuperscript{{(+1.36)}} & 6.48\textsuperscript{{(+1.36)}} & 6.79\textsuperscript{{(+1.55)}}  \\ \cmidrule(l){2-15}
 & \multirow{2}{*}{SparseGPT} & S$\rightarrow$ Q & 33.24\textsuperscript{{(+5.59)}} & \textbf{33.22}\textsuperscript{{(+5.16)}} & \textbf{35.27}\textsuperscript{{(+6.82)}} & \textbf{34.22}\textsuperscript{{(+6.21)}} & \textbf{33.41}\textsuperscript{{(+5.60)}}  & \textbf{35.86}\textsuperscript{{(+5.95)}} 
& 6.51\textsuperscript{{(+1.39)}} & \textbf{6.51}\textsuperscript{{(+1.36)}} & \textbf{6.58}\textsuperscript{{(+1.41)}} & 6.58\textsuperscript{{(+1.42)}} & \textbf{6.52}\textsuperscript{{(+1.40)}} & \textbf{6.77}\textsuperscript{{(+1.53)}}  \\ 
 &  & Q$\rightarrow$ S & - & 33.54\textsuperscript{{(+5.48)}} & 35.32\textsuperscript{{(+6.87)}} & 34.29\textsuperscript{{(+6.28)}} & 33.64\textsuperscript{{(+5.83)}} & 36.80\textsuperscript{{(+6.89)}} 
& - & 6.53\textsuperscript{{(+1.38)}} & 6.60\textsuperscript{{(+1.43)}} & 6.58\textsuperscript{{(+1.42)}} & 6.55\textsuperscript{{(+1.43)}} & 6.93\textsuperscript{{(+1.69)}}   \\ \midrule
\multirow{6}{*}[-0.5em]{2:4}  & \multirow{ 2}{*}{Magnitude} & S$\rightarrow$ Q & 31.89\textsuperscript{{(+4.24)}} & \textbf{32.76}\textsuperscript{{(+4.7)}} & \textbf{33.99}\textsuperscript{{(+5.54)}} & \textbf{33.41}\textsuperscript{{(+5.40)}} & \textbf{32.25}\textsuperscript{{(+4.44)}} & \textbf{34.58}\textsuperscript{{(+4.67)}}
& \textbf{9.30}\textsuperscript{{(+4.18)}} & \textbf{9.37}\textsuperscript{{(+4.22)}} & \textbf{9.35}\textsuperscript{{(+4.18)}} & \textbf{9.32}\textsuperscript{{(+4.16)}} & \textbf{9.39}\textsuperscript{{(+4.27)}} & \textbf{10.68}\textsuperscript{{(+5.44)}} \\  
 &  & Q$\rightarrow$ S & - & 45.06\textsuperscript{{(+17.00)}} & 44.16\textsuperscript{{(+15.71)}} & 42.25\textsuperscript{{(+14.24)}} & 46.57\textsuperscript{{(+18.76)}} & 55.64\textsuperscript{{(+25.73)}} 
& - & 14.65\textsuperscript{{(+9.50)}} & 14.35\textsuperscript{{(+9.18)}} & 14.50\textsuperscript{{(+9.34)}} & 14.98\textsuperscript{{(+9.86)}}  & 18.64\textsuperscript{{(+13.40)}}\\\cmidrule(l){2-15}
 & \multirow{2}{*}{Wanda} & S$\rightarrow$ Q & 79.91\textsuperscript{{(+52.26)}} & \textbf{79.81}\textsuperscript{{(+51.75)}} & \textbf{85.25}\textsuperscript{{(+56.80)}} & \textbf{84.10}\textsuperscript{{(+56.09)}} & \textbf{80.62}\textsuperscript{{(+52.81)}}  & \textbf{90.66}\textsuperscript{{(+60.75)}} 
& 11.36\textsuperscript{{(+6.24)}} & 11.37\textsuperscript{{(+6.22)}} & \textbf{11.15}\textsuperscript{{(+5.98)}} & \textbf{11.35}\textsuperscript{{(+6.19)}} & 11.45 \textsuperscript{{(+6.33)}} & \textbf{12.74} \textsuperscript{{(+7.50)}}  \\ 
 &  & Q$\rightarrow$ S & - & 80.28\textsuperscript{{(+52.22)}} & 86.69\textsuperscript{{(+58.24)}} & 84.38\textsuperscript{{(+56.37)}} & 80.69\textsuperscript{{(+52.88)}}  & 91.04\textsuperscript{{(+61.13)}}
& - & \textbf{11.28}\textsuperscript{{(+6.13)}} & 11.24\textsuperscript{{(+6.07)}} & 11.46\textsuperscript{{(+6.30)}} & \textbf{11.36}\textsuperscript{{(+6.24)}}  & 13.61\textsuperscript{{(+8.37)}} \\ \cmidrule(l){2-15}
 & \multirow{2}{*}{SparseGPT}  & S$\rightarrow$ Q & 45.14\textsuperscript{{(+17.49)}} & 45.34\textsuperscript{{(+17.28)}} & \textbf{48.44}\textsuperscript{{(+19.99)}} & \textbf{46.49}\textsuperscript{{(+18.48)}} & \textbf{45.52}\textsuperscript{{(+17.71)}} & \textbf{50.74}\textsuperscript{{(+20.83)}} 
& 10.22\textsuperscript{{(+5.10)}} & \textbf{10.21}\textsuperscript{{(+5.06)}} & \textbf{10.15}\textsuperscript{{(+4.98)}} & 10.26\textsuperscript{{(+5.10)}} & 10.26\textsuperscript{{(+5.14)}}  & \textbf{10.86}\textsuperscript{{(+5.62)}} \\ 
 &  & Q$\rightarrow$ S & - & \textbf{44.96}\textsuperscript{{(+16.9)}} & 48.67\textsuperscript{{(+20.22)}} & 46.50\textsuperscript{{(+18.49)}} & 45.82\textsuperscript{{(+18.01)}} & 57.39\textsuperscript{{(+27.48)}} 
& - & 10.21\textsuperscript{{(+5.06)}} & 10.24\textsuperscript{{(+5.07)}} & 10.26\textsuperscript{{(+5.10)}} & \textbf{10.21}\textsuperscript{{(+5.09)}} & 11.16\textsuperscript{{(+5.92)}}   \\ 
\bottomrule
\end{tabular}
}
\end{table}
\vspace{\baselineskip}

We further explore the effectiveness of post-training one-shot sparsity methods, specifically SparseGPT and Wanda, which utilize a selection criterion based on the product of the magnitudes of weights and activations.
We report our results in Table~\ref{tab:order_withsparsegpt}.
We observe that magnitude-based sparsity continues to achieve better perplexities due to fine-tuning.
However, because of their selection criterion, SparseGPT and Wanda are not affected by the order of the operations.
Even when quantization alters the relative magnitudes within a weight tensor, the corresponding activations can compensate by preserving the original ranking of importance when multiplied together.
Consequently, the difference in perplexities between S$\rightarrow$Q and Q$\rightarrow$S for these methods is minimal, and in few instances, Q$\rightarrow$S yields better perplexities.

\section{Orthogonality threshold for ViT}
\label{app:vit}

Table~\ref{tab:vitresults} shows the results of cross-entropy loss across various combinations of sparsity and quantization schemes for \bench{ViT-B/16} performing the image classification task on ImageNet-1k. First, we note that the calculated orthogonality threshold serves as a correct lower bound for most configurations, supporting our mathematical analysis. Second, \bench{ViT-B/16} is significantly more robust to the combination of sparsity and quantization schemes compared to the other LLMs studied in this paper. When used with moderate sparsity levels (50\% and 2:4) and 8-bit/6-bit number formats, the actual cross-entropy loss is close to the calculated orthogonality threshold, showing the robustness of \bench{ViT-B/16}. Only at higher compression rates achieved by using 75\% or 1:4 sparsity and 4-bit number formats such as HBFP4, do we see the impact of the sparsity and quantization errors affecting the final cross-entropy loss, making it significantly higher than the calculated orthogonality threshold.
\begin{table}[ht!]
\caption{Comparison of evaluation cross-entropy loss with estimated orthogonality thresholds.}
\vspace{\baselineskip}
\centering
\resizebox{0.60\columnwidth}{!}{
\label{tab:vitresults}
\begin{tabular}{cclcclcc}
\hline
\multirow{3}{*}{\begin{tabular}[c]{@{}c@{}}Sparsity\\ type\end{tabular}} & \multirow{3}{*}{\begin{tabular}[c]{@{}c@{}}Number\\ format\end{tabular}} &  & \multicolumn{5}{c}{ViT-B/16}                                                   \\ \cline{4-8} 
\vspace{\baselineskip}
                                                                         &                                                                          &  & \multicolumn{2}{c}{Metric}        &  & \multicolumn{2}{c}{Orthogonality {Threshold}} \\ \cline{4-5} \cline{7-8} 
                                                                         &                                                                          &  & Accuracy         & CE Loss        &  & Accuracy            & CE Loss           \\ \hline
\multirow{7}{*}{0\%}                                                     & FP32                                                                     &  & 81.70\%          & 0.703          &  & -                   & -                 \\
                                                                         & INT8                                                                     &  & 81.64\%          & 0.706          &  & -                   & -                 \\
                                                                         & MXFP8                                                                    &  & 81.12\%          & 0.722          &  & -                   & -                 \\
                                                                         & MXFP6                                                                    &  & 81.26\%          & 0.715          &  & -                   & -                 \\
                                                                         & HBFP8                                                                    &  & 81.67\%          & 0.704          &  & -                   & -                 \\
                                                                         & HBFP6                                                                    &  & 81.35\%          & 0.718          &  & -                   & -                 \\
                                                                         & HBFP4                                                                    &  & 72.73\%          & 1.094          &  & -                   & -                 \\ \hline
\multirow{7}{*}{50\%}                                                    & FP32                                                                     &  & 81.04\%          & 0.723          &  & -                   & -                 \\
                                                                         & INT8                                                                     &  & \textbf{81.03\%} & 0.728          &  & 80.98\%             & \textbf{0.725}    \\
                                                                         & MXFP8                                                                    &  & \textbf{80.50\%} & 0.745          &  & 80.46\%             & \textbf{0.742}    \\
                                                                         & MXFP6                                                                    &  & \textbf{80.80\%} & \textbf{0.734} &  & 80.60\%             & 0.735             \\
                                                                         & HBFP8                                                                    &  & 81.00\%          & 0.724          &  & \textbf{81.01\%}    & \textbf{0.723}    \\
                                                                         & HBFP6                                                                    &  & 80.64\%          & \textbf{0.736} &  & \textbf{80.69\%}    & 0.737             \\
                                                                         & HBFP4                                                                    &  & \textbf{73.38\%} & \textbf{1.058} &  & 72.07\%             & 1.113             \\ \hline
\multirow{7}{*}{2:4}                                                     & FP32                                                                     &  & 80.06\%          & 0.759          &  & -                   & -                 \\
                                                                         & INT8                                                                     &  & 79.95\%          & 0.762          &  & \textbf{80.00\%}    & \textbf{0.761}    \\
                                                                         & MXFP8                                                                    &  & 79.48\%          & 0.781          &  & \textbf{79.48\%}    & \textbf{0.777}    \\
                                                                         & MXFP6                                                                    &  & \textbf{79.73\%} & \textbf{0.770} &  & 79.62\%             & 0.771             \\
                                                                         & HBFP8                                                                    &  & \textbf{80.06\%} & 0.760          &  & 80.03\%             & \textbf{0.759}    \\
                                                                         & HBFP6                                                                    &  & 79.69\%          & 0.774          &  & \textbf{79.71\%}    & \textbf{0.773}    \\
                                                                         & HBFP4                                                                    &  & 71.06\%          & 1.163          &  & \textbf{71.09\%}    & \textbf{1.149}    \\ \hline
\multirow{7}{*}{75\%}                                                    & FP32                                                                     &  & 77.26\%          & 0.881          &  & -                   & -                 \\
                                                                         & INT8                                                                     &  & 77.03\%          & 0.897          &  & \textbf{77.20\%}    & \textbf{0.884}    \\
                                                                         & MXFP8                                                                    &  & 76.57\%          & 0.913          &  & \textbf{76.68\%}    & \textbf{0.900}    \\
                                                                         & MXFP6                                                                    &  & \textbf{76.99\%} & 0.895          &  & 76.82\%             & \textbf{0.894}    \\
                                                                         & HBFP8                                                                    &  & 77.14\%          & \textbf{0.882} &  & \textbf{77.23\%}    & \textbf{0.882}    \\
                                                                         & HBFP6                                                                    &  & 76.89\%          & 0.899          &  & \textbf{76.91\%}    & \textbf{0.896}    \\
                                                                         & HBFP4                                                                    &  & 66.84\%          & 1.365          &  & \textbf{68.29\%}    & \textbf{1.272}    \\ \hline
\multirow{7}{*}{1:4}                                                     & FP32                                                                     &  & 73.24\%          & 1.055          &  & -                   & -                 \\
                                                                         & INT8                                                                     &  & 72.90\%          & 1.070          &  & \textbf{73.18\%}    & \textbf{1.058}    \\
                                                                         & MXFP8                                                                    &  & 72.36\%          & 1.095          &  & \textbf{72.66\%}    & \textbf{1.074}    \\
                                                                         & MXFP6                                                                    &  & 72.92\%          & 1.070          &  & \textbf{72.80\%}    & \textbf{1.068}    \\
                                                                         & HBFP8                                                                    &  & 73.16\%          & 1.057          &  & \textbf{73.21\%}    & \textbf{1.056}    \\
                                                                         & HBFP6                                                                    &  & 72.77\%          & 1.078          &  & \textbf{72.89\%}    & \textbf{1.070}    \\
                                                                         & HBFP4                                                                    &  & 59.58\%          & 1.725          &  & \textbf{64.27\%}    & \textbf{1.446}    \\ \hline
\end{tabular}}
\end{table}

\section{Robustness analysis}
\label{app:robustness}
To substantiate our conclusions on the optimal compression operation order, we conduct limited experiments across three distinct random seeds.
We report the mean perplexities and error bars in the Table~\ref{tab:seeds}. 
Given the computational cost of fine-tuning, we limit the robustness analysis to the \bench{OPT-125M} model and HBFP8/6 number format. 
For both sparsity types we observe stable results, consistently affirming the higher efficacy of the S$\rightarrow$Q order. Note that deviations causes by different seeds do \emph{not} compromise the integrity of our conclusions.
\begin{table}[ht]
\centering
\caption{Validation perplexities of \bench{OPT-125M} on WikiText2 for S$\rightarrow$Q and Q$\rightarrow$S. We report mean and standard deviation over three random seeds.}
\vspace{\baselineskip}
\centering
\resizebox{ 0.4\columnwidth}{!}{
\label{tab:seeds}
\begin{tabular}{ccccc}
\toprule
\textbf{\makecell{Sparsity \\ Type}} & \textbf{\makecell{Number \\ Format}} & \textbf{Order} & \textbf{PPL} \\
\midrule
\multirow{4}{*}{50\%} & \multirow{2}{*}{HBFP8} & S$\rightarrow$Q & 30.5 ($\pm$ 0.2) \\
                      &                        & Q$\rightarrow$S &  37.4 ($\pm$ 0.3) \\
                      & \multirow{2}{*}{HBFP6} & S$\rightarrow$Q & 32.5 ($\pm$ 0.2) \\
                      &                        & Q$\rightarrow$S & 40.8 ($\pm$ 0.3) \\
\midrule
\multirow{4}{*}{2:4} & \multirow{2}{*}{HBFP8} & S$\rightarrow$Q & 32.2 ($\pm$ 0.1) \\
                       &                        & Q$\rightarrow$S & 46.5 ($\pm$ 0.4) \\
                       & \multirow{2}{*}{HBFP6} & S$\rightarrow$Q & 34.6 ($\pm$ 0.2) \\
                       &                        & Q$\rightarrow$S & 55.5 ($\pm$ 0.8) \\
\bottomrule
\end{tabular}
}
\end{table}

\section{Layer-wise Error Propagation}\label{app:error_propogation}

We also examine the error propagation to ensure that errors introduced in earlier layers do not vanish in subsequent layers. Figure \ref{fig:error_propogation} illustrates how compression errors persist and propagate throughout the pre-trained \bench{OPT-2.7b} model. In this analysis, we quantize each layer in isolation and measure the corresponding propagated error to the other layers, which remain in full precision. We observe that relatively large errors introduced in earlier layers, despite small fluctuations, stay on the same level as they go through the network. As a result, even a single-layer error can significantly degrade model performance, highlighting the potential threats of errors due to the non-orthogonality of compression techniques or applying them in the suboptimal order.

\begin{figure}[ht]
  \centering
  \includegraphics[width=0.9\linewidth]{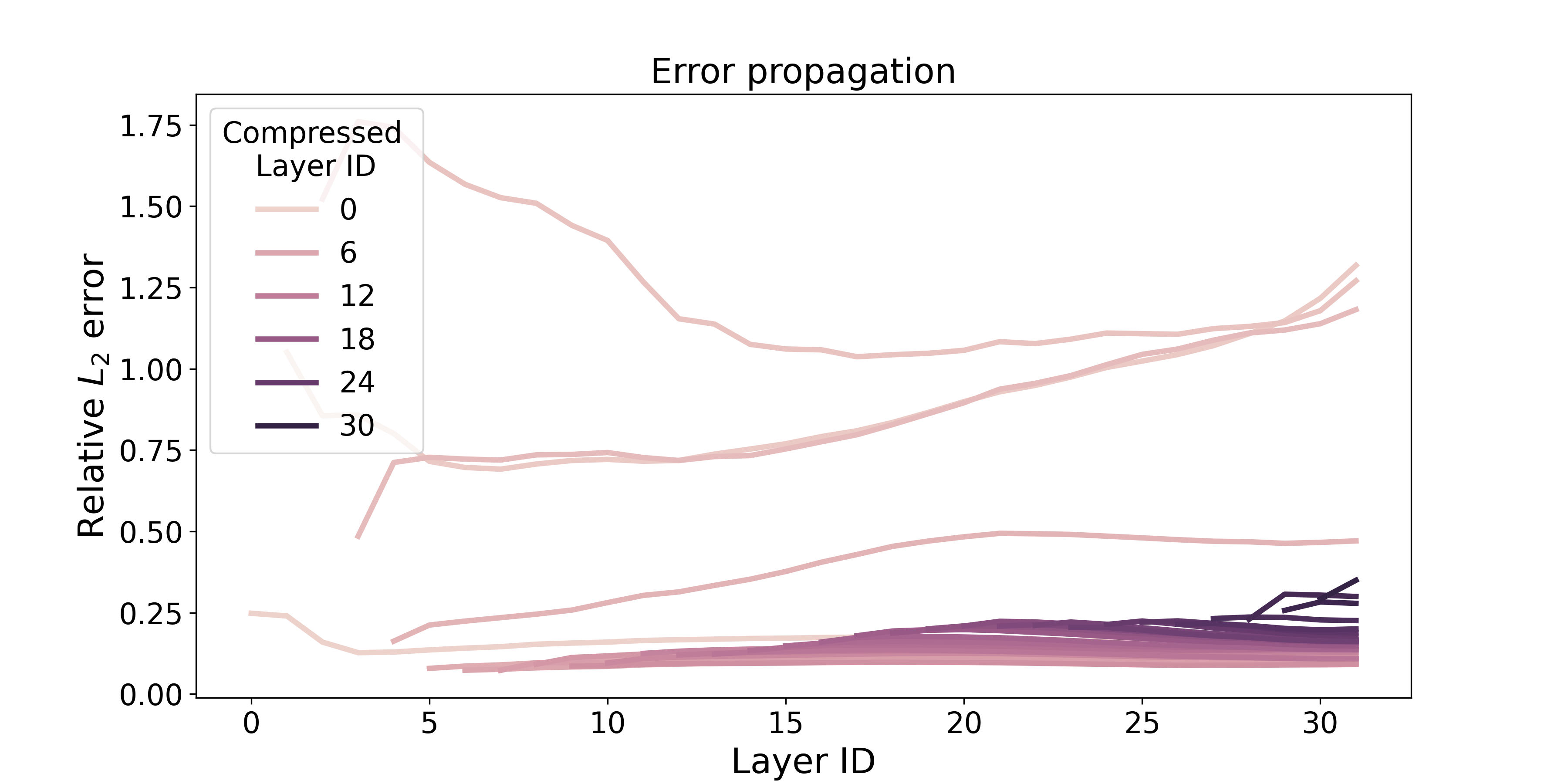}
  \caption{Error dynamics for single-layer quantization. Each line represents the relative $L_2$ error of outputs ($\|\hat{Y_i} - Y_i\|_2 /\| Y_i \|_2$) for the compressed layer at the particular index and all subsequent layers, which remain in full precision.}
  \label{fig:error_propogation}
\end{figure}

\newpage
\section{Proofs for the mathematical analysis}
\label{app:mathematics}

\begin{proof}[Proof of Theorem \ref{q_ind_s}]

Let $n_s$ represent the number of elements pruned from the block by the sparsity transformation. 
Without loss of generality, we assume that the last $n_s$ elements in the block are pruned, as permuting the elements does not affect the block's norm.
As the sparsity transformation does not prune the largest element in the block, the $scale$ parameter of quantization remains unchanged.
Consequently, the quantization error for the non-zero components before and after sparsity remains the same:

\begin{gather}
   \left\| \varepsilon_{q \circ s} (\mathbf{x}) \right\| = \left\|
   \begin{pmatrix}
        x_{1}  \\
        \vdots \\
        x_{n - n_s} \\
        x_{n - n_s + 1} \\
        \vdots \\
        x_n
   \end{pmatrix} 
   - q
   \begin{pmatrix}
        x_{1}  \\
        \vdots \\
        x_{n - n_s} \\
        0 \\
        \vdots \\
        0
   \end{pmatrix} 
\right\| = 
\left\| 
\begin{pmatrix}
  \varepsilon_q(\bx)_1 \\
  \vdots \\
  \varepsilon_q(\bx)_{n - n_s} \\
  x_{n - n_s + 1}  \\
  \vdots \\
  x_{n} \\
\end{pmatrix} 
\right\| \le
\\
\le
\left\| 
\begin{pmatrix}
  \varepsilon_q(\bx)_1 \\
  \vdots \\
  \varepsilon_q(\bx)_{n - n_s} \\
  0  \\
  \vdots \\
  0 \\
\end{pmatrix} 
\right\|
+ 
\left\| 
\begin{pmatrix}
  0 \\
  \vdots \\
  0 \\
  x_{n - n_s +1}  \\
  \vdots \\
  x_{n} \\
\end{pmatrix} 
\right\|
=
\left\| 
\begin{pmatrix}
  \varepsilon_q(\bx)_1 \\
  \vdots \\
  \varepsilon_q(\bx)_{n - n_s} \\
  0  \\
  \vdots \\
  0 \\
\end{pmatrix} 
\right\| 
+
\left\| 
\begin{pmatrix}
  \varepsilon_s(\bx)_{1} \\
  \vdots \\
  \varepsilon_s(\bx)_{n - n_s} \\
  \varepsilon_s(\bx)_{n - n_s + 1}  \\
  \vdots \\
  \varepsilon_s(\bx)_{n} \\
\end{pmatrix} 
\right\|
\le
\\
\le
\left\| 
\begin{pmatrix}
  \varepsilon_q(\bx)_{1}  \\
  \vdots \\
  \varepsilon_q(\bx)_{n - n_s} \\
  \varepsilon_q(\bx)_{n - n_s + 1}  \\
  \vdots \\
  \varepsilon_q(\bx)_{n} \\
\end{pmatrix} 
\right\| +
\left\| 
\begin{pmatrix}
  \varepsilon_s(\bx)_{1} \\
  \vdots \\
  \varepsilon_s(\bx)_{n - n_s} \\
  \varepsilon_s(\bx)_{n - n_s + 1}  \\
  \vdots \\
  \varepsilon_s(\bx)_{n} \\
\end{pmatrix} 
\right\| = \left\| \varepsilon_{s} (\mathbf{x}) \right\| +
\left\| \varepsilon_{q} (\mathbf{x}) \right\|
\end{gather}
For $L_p$ norms, where $p \in (1, +\infty)$, the upper bound is attainable under one of two conditions: either the pruned elements in $\mathbf{x}$ are originally zero, or the quantization error for all elements of $\mathbf{x}$ is zero. 

In the first case, the first inequality becomes an equality because $\forall i \in \{n-n_s+1, \dots, n\}: ~ x_i = 0$. In the second case, the first inequality also becomes an equality because $\forall i \in \{1, \dots, n-n_s\}: ~ \varepsq(\mathbf{x})_i = 0$. 

Similarly, the second inequality becomes an equality as quantization maps zero to zero. Thus, the quantization error for elements in $\{n-n_s+1, \dots, n\}$ is zero either because the quantization error for all elements of $\mathbf{x}$ is zero or because the elements were originally zero: $\forall i \in \{n-n_s+1, \dots, n\}: ~ \varepsq(\mathbf{x})_i = 0$.

For $L_1$ norm there exist a non-trivial case. We consider the block of floating-point numbers $\mathbf{x} = (4.0, 4.1)^T $, INT4 quantization and $1:2$ sparsity. 
\begin{equation}
    s(\mathbf{x}) =
    \begin{pmatrix}
        0.0 \\
        4.1 \\
    \end{pmatrix}
    \quad
    q(\mathbf{x}) = 
    \begin{pmatrix}
        4.0 \\
        4.0 \\
    \end{pmatrix}
    \quad
    q(s(\mathbf{x})) = 
    \begin{pmatrix}
        0.0 \\
        4.0 \\
    \end{pmatrix}
\end{equation}
The L1-norms of the transformation errors are the following:
\begin{equation}
   \left\| \varepsilon_s(\mathbf{x}) \right\|_1 =
   \left\| 
    \begin{pmatrix}
        4.0 \\
        0.0 \\
    \end{pmatrix}
    \right\|_1 = 4.0
    \quad
    \left\| \varepsilon_q(\mathbf{x}) \right\|_1 = 
    \left\|
    \begin{pmatrix}
        0.0 \\
        0.1 \\
    \end{pmatrix}
    \right\|_1 = 0.1
    \quad
    \left\| \varepsilon_{q \circ s} (\mathbf{x}) \right\|_1 = 
    \left\| 
    \begin{pmatrix}
        4.0 \\
        0.1 \\ 
    \end{pmatrix}
    \right\|_1 = 4.1
\end{equation}

Therefore, $ \left\| \varepsilon_{q \circ s}(\mathbf{x}) \right\|_1 = \left\| \varepsilon_q(\mathbf{x}) \right\|_1 + \left\| \varepsilon_s(\mathbf{x}) \right\|_1 $ is attainable.
\end{proof}

\begin{proof}[Proof of Theorem \ref{s_not_ind_q}]
Consider the block of floating-point numbers $\mathbf{x} = (3.9, 4.0)^T $, INT4 quantization $q$ and $1$:$2$ sparsity $s$. 
After applying the quantization transformation to the block, initial relation between its elements $x_i < x_j$ is no longer preserved and both elements have equal probability to be zeroed out by the sparsity transformation. 
If sparsity zeroes out the element that was initially larger, the resulting error can exceed the sum of the errors caused by each transformation individually:
\begin{gather}
    s(\mathbf{x}) =
    \begin{pmatrix}
        0.0 \\
        4.0 \\
    \end{pmatrix}
    \quad
    q(\mathbf{x}) = 
    \begin{pmatrix}
        4.0 \\
        4.0 \\
    \end{pmatrix}
    \quad
    s(q(\mathbf{x})) = 
    \begin{pmatrix}
        4.0 \\
        0.0 \\
    \end{pmatrix}
\\
   \left\| \varepsilon_s(\mathbf{x}) \right\| =
   \left\| 
    \begin{pmatrix}
        3.9 \\
        0.0 \\
    \end{pmatrix}
    \right\|
    \quad
    \left\| \varepsilon_q(\mathbf{x}) \right\| = 
    \left\|
    \begin{pmatrix}
        -0.1 \\
        0.0 \\
    \end{pmatrix}
    \right\|
    \quad
    \left\| \varepsilon_{s \circ q} (\mathbf{x}) \right\| = 
    \left\| 
    \begin{pmatrix}
        -0.1 \\
        4.0 \\ 
    \end{pmatrix}
    \right\|
\end{gather}  

We consider $L_p$ norms, where $p \in [1; +\infty)$. In these norms, $\forall a \in \mathbb{R}:\|(a, 0)^T\| = |a| \cdot ||(1, 0)^T|| = |a|$. Therefore:
\begin{gather}
    \left\| \varepsilon_q(\mathbf{x}) \right\| + \left\| \varepsilon_s(\mathbf{x}) \right\| = \left\| \begin{pmatrix}
        -0.1 \\ 0.0
    \end{pmatrix} \right\| + 
    \left\| \begin{pmatrix}
        3.9 \\ 0.0
    \end{pmatrix} \right\| = |-0.1| + |3.9| = 4.0 = \\ = \left\| \begin{pmatrix}
        0.0 \\ 4.0
    \end{pmatrix} \right\| < \left\| \begin{pmatrix}
        -0.1 \\ 4.0
    \end{pmatrix} \right\|  = \| \varepsilon_{s \circ q} (\mathbf{x}) \|
\end{gather}
Thus, for this particular input $\mathbf{x}$, the inequality $\left\| \varepsilon_{s \circ q}(\mathbf{x}) \right\| > \left\| \varepsilon_q(\mathbf{x}) \right\| + \left\| \varepsilon_s(\mathbf{x}) \right\| $ holds true.
\end{proof}
\begin{theorem}[Upper-bound of the error for suboptimal order, general case]
    \label{s_not_ind_q_upper_extended}
    Let $q$ be the max-scaled block-wise quantization and $s$ be the magnitude-based N:M sparsity transformation. Let $step$ be the least upper bound for the magnitude of the quantization error for one element: $step = \sup \{ |\varepsq(\bx)_i| \; | \; \bx \in \mathbb{R}^n, i \in \{1 \dots n \} \}$. Let $\vec{\mathbf{1}}(n, N, M) \in \mathbb{R}^n$ be a vector with $\frac{M-N}{M} \cdot n$ ones and $\frac{N}{M} \cdot n$ zeros in any order. Then the error of the composition $s \circ q$ with respect to $L_1$ norm has the following upper bound:
    \begin{equation}
        \forall \mathbf{x} \in \mathbb{R}^n, ~ \left\| \varepsilon_{s \circ q}(\mathbf{x}) \right\|_1 \le \left\| \varepsilon_q(\mathbf{x}) \right\| + \left\| \varepsilon_s(\mathbf{x}) \right\| + \underbrace{2 \cdot step \cdot \| \vec{\mathbf{1}}(n, N, M) \|}_{\text{additional error}} 
    \end{equation}
\end{theorem}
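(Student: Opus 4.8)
The plan is to avoid comparing the two orders head-on and instead to decompose the suboptimal-order error along the intermediate quantized block. Writing out the definitions gives the exact identity
\begin{equation}
\varepsilon_{s \circ q}(\bx) = \bigl(\bx - q(\bx)\bigr) + \bigl(q(\bx) - s(q(\bx))\bigr) = \varepsq(\bx) + \varepss(q(\bx)),
\end{equation}
so by the triangle inequality $\|\varepsilon_{s \circ q}(\bx)\| \le \|\varepsq(\bx)\| + \|\varepss(q(\bx))\|$ for the fixed $L_p$ norm, $p \in [1,+\infty)$. Since the first term already matches the target, everything reduces to bounding $\|\varepss(q(\bx))\|$ — the sparsity error committed on the \emph{quantized} block — by $\|\varepss(\bx)\|$ plus the stated additive slack. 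The hard part is exactly this comparison: sparsity applied to $q(\bx)$ can prune a different set of coordinates than sparsity applied to $\bx$, because quantization may reorder magnitudes within a group.

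First I would fix the group-wise picture of N:M sparsity: partition the $n$ coordinates into $n/M$ groups of $M$, and in each group let $\Phi$ be the set of $M-N$ indices pruned from $\bx$ and $\Psi$ the set of $M-N$ indices pruned from $q(\bx)$, so $|\Phi| = |\Psi| = M-N$. Because each $L_p$ norm depends only on the magnitudes of the pruned entries, the per-group contributions are $\sum_{i \in \Phi}|x_i|^p$ to $\|\varepss(\bx)\|^p$ and $\sum_{i \in \Psi}|q(\bx)_i|^p$ to $\|\varepss(q(\bx))\|^p$.

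The core of the argument is to build, inside each group, a bijection between $\Psi$ and $\Phi$ along which magnitudes differ by at most $step$ (hence certainly at most $2\cdot step$). I match $\Phi \cap \Psi$ to itself, where the elementwise bound $\bigl|\,|q(\bx)_i| - |x_i|\,\bigr| \le |\varepsq(\bx)_i| \le step$ suffices. For the rest I match $\Psi \setminus \Phi$ (pruned by quantization but kept by magnitude on $\bx$) to $\Phi \setminus \Psi$ (kept by quantization but pruned on $\bx$), which have equal cardinality; here \emph{any} bijection works, because for every such pair $(i,j)$ the two optimality properties of the masks — $|x_i| \ge |x_j|$ (as $i$ survives and $j$ is pruned on $\bx$) and $|q(\bx)_j| \ge |q(\bx)_i|$ (as $j$ survives and $i$ is pruned on $q(\bx)$) — combine with the elementwise quantization bound to sandwich $|x_j| - step \le |q(\bx)_i| \le |x_j| + step$. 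Establishing this sandwich is the main obstacle, and it is precisely where the reordering-under-quantization phenomenon gets quantified.

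With the matching in hand, the per-group triangle inequality in $L_p$ gives $\|\varepss(q(\bx))\| \le \|\varepss(\bx)\| + step \cdot (M-N)^{1/p}$ on each group, and summing over the $n/M$ groups via Minkowski's inequality yields $\|\varepss(q(\bx))\| \le \|\varepss(\bx)\| + step \cdot \|\vec{\mathbf{1}}(n,N,M)\|$, since $\|\vec{\mathbf{1}}(n,N,M)\|^p = \tfrac{M-N}{M}\,n$. Combining with the opening decomposition proves the claim; in fact the constant can be taken as $1$, so the stated factor $2$ leaves room to spare, and setting $p=1$ recovers the $L_1$ form of Theorem~\ref{s_not_ind_q_upper}.
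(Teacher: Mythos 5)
Your proof is correct, and it follows the same overall architecture as the paper's: split the composed error by the triangle inequality into a quantization part and a ``sparsity on the wrong set'' part, then build a bijection between the two pruned index sets within each group and bound the matched magnitudes using the fact that quantization can invert an ordering only when the two values are within a quantization error of each other. The difference is where you make the split. The paper keeps the \emph{original} values $x_i$ on the coordinates pruned by $s\circ q$ and compares them to the original values $x_j^t$ that $s$ alone would prune; bounding $|x_i - x_j^t|$ then costs two quantization errors (one for each element of a collided pair), giving the factor $2\cdot step$. You instead write $\varepsilon_{s\circ q}(\bx)=\varepsq(\bx)+\varepss(q(\bx))$ exactly, so the second term carries the \emph{quantized} magnitudes $|q(\bx)_i|$, and your sandwich $|x_j|-step\le|q(\bx)_i|\le|q(\bx)_j|\le|x_j|+step$ spends only one quantization error per matched pair. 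This is a genuine (if small) improvement: it proves the theorem with constant $1$ in place of $2$, and your explicit group-by-group Minkowski argument also handles the general $L_p$ statement cleanly, whereas the paper's permutation step is stated more informally. One cosmetic caveat: like the paper, you implicitly assume each group prunes exactly $M-N$ elements (ties broken to a fixed cardinality), which is the intended reading of the sparsity definition.
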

\begin{proof}[Proof of Theorem \ref{s_not_ind_q_upper_extended}]

Without loss of generality for simplicity we assume that the sparsity operation nullifies the last elements within the vector.

\begin{equation}
   \left\| \varepsilon_{s \circ q} (\mathbf{x}) \right\| = \left\|
   \begin{pmatrix}
        x_{1}  \\
        \vdots \\
        x_{n - n_s} \\
        x_{n - n_s + 1} \\
        \vdots \\
        x_n
   \end{pmatrix} 
   - s
   \begin{pmatrix}
        x_{1} - \vq(\bx)_1 \\
        \vdots \\
        x_{n - n_s} - \vq(\bx)_{n - n_s}\\
        x_{n - n_s + 1} - \vq(\bx)_{n - n_s + 1} \\
        \vdots \\
        x_n - \vq(\bx)_n
   \end{pmatrix} 
\right\| = 
\left\| 
\begin{pmatrix}
  \varepsilon_q(\bx)_1 \\
  \vdots \\
  \varepsilon_q(\bx)_{n - n_s} \\
  x_{n - n_s + 1}  \\
  \vdots \\
  x_{n} \\
\end{pmatrix} 
\right\| \le
\end{equation}

\begin{equation}
\le
\left\| 
\begin{pmatrix}
  \varepsilon_q(\bx)_1 \\
  \vdots \\
  \varepsilon_q(\bx)_{n - n_s} \\
  0  \\
  \vdots \\
  0 \\
\end{pmatrix} 
\right\| + 
\left\| 
\begin{pmatrix}
  0 \\
  \vdots \\
  0 \\
  x_{n - n_s +1}  \\
  \vdots \\
  x_{n} \\
\end{pmatrix} 
\right\|
\le \\ 
\left\| 
\begin{pmatrix}
  \varepsilon_q^{1}  \\
  \vdots \\
  \varepsilon_q^{n - n_s} \\
  \varepsilon_q^{n - n_s + 1}  \\
  \vdots \\
  \varepsilon_q^{n} \\
\end{pmatrix} 
\right\| +
\left\| 
\begin{pmatrix}
  0 \\
  \vdots \\
  0 \\
  x_{n - n_s +1}  \\
  \vdots \\
  x_{n} \\
\end{pmatrix} 
\right\| =
\end{equation}

\begin{equation}
    =
    \left\| \varepsilon_{q} (\mathbf{x}) \right\| +
\left\| 
\begin{pmatrix}
  0 \\
  \vdots \\
  0 \\
  x_{n - n_s +1}  \\
  \vdots \\
  x_{n} \\
\end{pmatrix} 
\right\|
    := \left\| \vq(\bx) \right\| + \left\| \widetilde{\vs}(\bx) \right\|
\end{equation}

When quantization is applied first, two distinct numbers can become the same: $x_i < x_j \rightarrow q(\bx)_i = q(\bx)_j$. When we sparsify the quantized numbers, the number that was smaller might get nullified, as depicted in Figure \ref{fig:math_apdx}. Therefore, the last component $\widetilde \vs$ of the upper bound does not equal $\vs$. 

\begin{figure}[!ht]
  \centering
  \includegraphics[width=\linewidth]{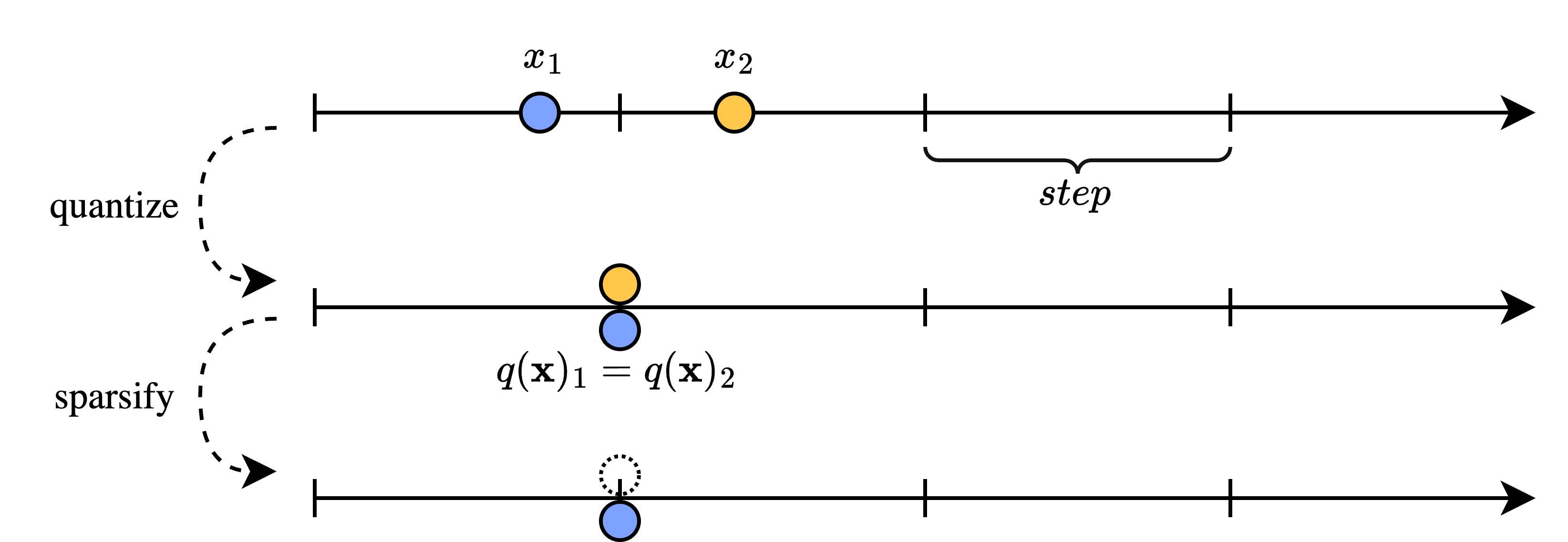}
  \caption{A visual representation of applying quantization first and then sparsification. After quantization two distinct elements become equal. Then, when sparsification is applied, the element that was originally bigger gets nullified as the sparsification operation cannot differentiate them by their magnitude.}
  \label{fig:math_apdx}
\end{figure}

However, in this case we can get an upper bound for the distance between them:

\begin{equation}
    \begin{cases}
        x_i < x_j \\
        q(\bx)_i \ge q(\bx)_j \\
    \end{cases}
    \hspace{-1em}
    \Leftrightarrow
    \begin{cases}
        x_i < x_j \\
        x_i - \vq(\bx)_i \ge x_j - \vq(\bx)_j \\
    \end{cases}
    \hspace{-1em}
    \Leftrightarrow
    \begin{cases}
        x_j - x_i > 0 \\
        x_j - x_i \le \vq(\bx)_j - \vq(\bx)_i \\
    \end{cases}
    \hspace{-1em}
    \Rightarrow
\end{equation}

\begin{equation}
    \Rightarrow  | x_j - x_i | \le | \vq(\bx)_j - \vq(\bx)_i | \le 
    | \vq(\bx)_j | + | \vq(\bx)_i |
    \le 2 \cdot step
\end{equation}

For each $x_i$ that was nullified after quantization followed by sparsification we define $x_i^t$ to be an element that would be nullified, if quantization was not applied. Then the vector $\vs(\bx)$ consists of all and only such elements $x_i^t$.

There exists a permutation $W$ of the vector $\vs(\bx)$ such that maps the element $x_i$ in $\widetilde \vs(\bx)$ to the element $x_i^t$ in $\vs(\bx)$. Therefore, an upper bound for $\|\widetilde \vs(\vx)\|$ is:

\begin{equation}
    \left\| \widetilde{\vs}(\bx) \right\| = \left\| \widetilde{\vs}(\bx) - W \vs(\bx) + W \vs(\bx) \right\| \le \left\| W \vs(\bx) \right\| + \left\| \widetilde{\vs}(\bx) - W \vs(\bx) \right\| = 
\end{equation}
\begin{equation}
    = \left\| \vs(\bx) \right\| + \left\| 
    \begin{pmatrix}
      0 - 0 \\
      \vdots \\
      0 - 0 \\
      x_{n - n_s +1} - x_{n - n_s +1}^t \\
      \vdots \\
      x_{n} - x_{n}^t \\
    \end{pmatrix} 
    \right\|
    \le
    \left\| \vs(\bx) \right\| + \left\| 
    \begin{pmatrix}
      0  \\
      \vdots \\
      0  \\
      2 \cdot step \\
      \vdots \\
      2 \cdot step \\
    \end{pmatrix} 
    \right\|
\end{equation}

For the case of $N:M$ sparsity the number of nullified elements within the block equals $\frac{M - N}{M} \cdot n$. Therefore:

\begin{equation}
        \left\| 
    \begin{pmatrix}
      0  \\
      \vdots \\
      0  \\
      2 \cdot step \\
      \vdots \\
      2 \cdot step \\
    \end{pmatrix} 
    \right\|
    = 2 \cdot step \cdot \left\| \begin{pmatrix}
        0 \\ \vdots \\ 0 \\ 1 \\ \vdots \\ 1
    \end{pmatrix} \right\| = 2 \cdot step \cdot  \| \vec{\mathbf{1}}(n, N, M) \|
\end{equation}

As a result, the upper bound for the error of the composition is the following:

\begin{equation}
    \left\| \varepsilon_{s \circ q} (\mathbf{x}) \right\| \le \left\| \vq(\bx) \right\| + \left\| \widetilde{\vs}(\bx) \right\| \le \left\| \varepsilon_q(\mathbf{x}) \right\| + \left\| \varepsilon_s(\mathbf{x}) \right\| + 2 \cdot step \cdot \| \vec{\mathbf{1}}(n, N, M) \|
\end{equation}

\end{proof}
\begin{proof}[Proof of Theorem \ref{s_not_ind_q_upper}]
As a corollary of Theorem \ref{s_not_ind_q_upper_extended}, with respect to $L_1$ norm, the last error term can be evaluated as follows:
\begin{equation}
    2 \cdot step \cdot  \| \vec{\mathbf{1}}(n, N, M) \|_1 = 2 \cdot step \cdot \left\| \begin{pmatrix}
        0 \\ \vdots \\ 0 \\ 1 \\ \vdots \\ 1
    \end{pmatrix} \right\|_1
    = 2 \cdot step \cdot \frac{M - N}{M} \cdot n
\end{equation}
\end{proof}
\begin{proof}[Proof of Theorem \ref{thm:dotp}]

    Error of the composition can be written as the following:
    \begin{align}
        \varepsqc(\bx, \bw) &= \langle \bx, \bw \rangle - \langle q(\bx), c(\bw) \rangle \\
                       &= \langle \bx, \bw \rangle - \langle \bx - \varepsq(\bx), \bw - \varepsc(\bw) \rangle \\
                       &= \langle \varepsq(\bx), \bw \rangle + \langle \bx, \varepsc(\bw) \rangle - \langle \varepsq(\bx), \varepsc(\bw) \rangle \\
                       &= \langle \varepsq(\bx), \bw \rangle + \langle \bx, \varepsq(\bw) + \varepss(\bw) + \varepst_c(\bw) \rangle - \langle \varepsq(\bx), \varepsq(\bw) + \varepss(\bw) + \varepst_c(\bw) \rangle \\
                       &=
                        \underbrace{\langle \bx, \varepss(\bw) \rangle}_{\varepsilon_{I,s}^D(\mathbf{x}, \mathbf{w})} + 
                        \underbrace{\langle \bx, \varepsq(\bw) \rangle + \langle \varepsq(\bx), \bw \rangle - \langle \varepsq(\bx), \varepsq(\bw) \rangle}_{\varepsilon^D_{q}(\mathbf{x}, \mathbf{w})} + 
                        \langle \underbrace{\bx - \varepsq(\bx)}_{q(\bx)}, \varepst_c(\bw) \rangle 
                       \\
                       &\phantom{{}={}} - \langle \varepsq(\bw), \varepss(\bw) \rangle \\
                       &= \varepsilon_{I,s}^D(\mathbf{x}, \mathbf{w}) + \varepsilon^D_{q}(\mathbf{x}, \mathbf{w}) + \langle q(\bx), \tilde \varepsilon_c(\bw) \rangle - \langle \varepsq(\mathbf{x}), \varepss(\mathbf{w}) \rangle
    \end{align}
    
    After adding the norms, we obtain the following:
    \begin{equation}
        | \varepsilon_{q,c}^D(\mathbf{x}, \mathbf{w}) | \le | \varepsilon_{I,s}^D(\mathbf{x}, \mathbf{w})| + |\varepsilon^D_{q}(\mathbf{x}, \mathbf{w}) | + |\underbrace{\langle q(\bx), \tilde \varepsilon_c(\bw) \rangle}_{\varepsilon_t}| + |\underbrace{\langle \varepsq(\mathbf{x}), \varepss(\mathbf{w}) \rangle}_{\varepsilon_i}|
    \end{equation} 
    where $\vten$ and $\varepsi$ are the additional error terms.

To prove non-orthogonality, consider the blocks of floating-point numbers $x = (1.0, 1.0)^T, ~ w = (0.6, 1.3)^T$, HBFP4 quantization $q$ and 1:2 sparsity $s$. We assume that $q$ does not affect $x$: $q(x) = x$. On the other hand, the block $w$ is transformed in the following way:
\begin{gather}
    s(\mathbf{w}) =
    \begin{pmatrix}
        0 \\
        1.3 \\
    \end{pmatrix}
    \quad
    q(\mathbf{w}) = 
    \begin{pmatrix}
        0.625 \\
        1.25 \\
    \end{pmatrix}
    \quad
    s(q(\mathbf{w})) = s(q(\mathbf{w})) = c(\bw) =
    \begin{pmatrix}
        0 \\
        1.25 \\
    \end{pmatrix}
\end{gather}
The dot product error of the composition equals:
\begin{equation}
    \varepsilon_{q,c}^D(\mathbf{x}, \mathbf{w}) = \langle \bx, \bw \rangle - \langle q(\bx), c(\bw) \rangle = \left \langle 
        \begin{pmatrix}
            1.0 \\
            1.0 \\
        \end{pmatrix}, 
        \begin{pmatrix}
            0.6 \\
            1.3 \\
        \end{pmatrix}
    \right \rangle - 
    \left \langle 
        \begin{pmatrix}
            1.0 \\
            1.0 \\
        \end{pmatrix}, 
        \begin{pmatrix}
            0 \\
            1.25 \\
        \end{pmatrix}
    \right \rangle = 0.65
\end{equation}
The dot product error of quantization equals:
\begin{equation}
    \varepsilon^D_{q}(\mathbf{x}, \mathbf{w}) = \langle \bx, \bw \rangle - \langle q(\bx), q(\bw) \rangle = \left \langle 
        \begin{pmatrix}
            1.0 \\
            1.0 \\
        \end{pmatrix}, 
        \begin{pmatrix}
            0.6 \\
            1.3 \\
        \end{pmatrix}
    \right \rangle - 
    \left \langle 
        \begin{pmatrix}
            1.0 \\
            1.0 \\
        \end{pmatrix}, 
        \begin{pmatrix}
            0.625 \\
            1.250 \\
        \end{pmatrix}
    \right \rangle = 0.025
\end{equation}
The dot product error of sparsity equals:
\begin{equation}
    \varepsilon_{I,s}^D(\mathbf{x}, \mathbf{w}) = \langle \bx, \bw \rangle - \langle \bx, s(\bw) \rangle = \left \langle 
        \begin{pmatrix}
            1.0 \\
            1.0 \\
        \end{pmatrix}, 
        \begin{pmatrix}
            0.6 \\
            1.3 \\
        \end{pmatrix}
    \right \rangle - 
    \left \langle 
        \begin{pmatrix}
            1.0 \\
            1.0 \\
        \end{pmatrix}, 
        \begin{pmatrix}
            0 \\
            1.3 \\
        \end{pmatrix}
    \right \rangle = 0.6
\end{equation}
Therefore, for these particular values of $\bx$ and $\bw$, the inequality: $
    | \varepsilon_{q,c}^D(\mathbf{x}, \mathbf{w}) | > | \varepsilon^D_{q}(\mathbf{x}, \mathbf{w}) | + | \varepsilon_{I,s}^D(\mathbf{x}, \mathbf{w}) |$ holds true.

\end{proof}

\begin{theorem} 
    Let $q$ be the max-scaled block-wise quantization and $s$ be the magnitude-based N:M sparsity transformation. Then:
    \begin{equation}
        \forall \mathbf{x} \in \mathbb{R}^n, \quad \left \lVert \varepsilon_{q \circ s}(\mathbf{x}) \right \rVert_1 \leq \left \lVert \varepsilon_{s \circ q}(\mathbf{x}) \right \rVert_1
    \end{equation}
    
    \end{theorem}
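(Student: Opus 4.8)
The plan is to prove the inequality block by block and reduce it to a per-group exchange argument. The key enabling fact, which is exactly the observation behind Theorem~\ref{q_ind_s}, is that magnitude-based sparsity never prunes a maximal-magnitude element, so the quantization $scale$ is the same in $q(\bx)$, in $q(s(\bx))$, and in the inner $q$ of $s\circ q$. Writing $q_i := Q_m(x_i, scale)$ and $\varepsilon_q(\bx)_i = x_i - q_i$, I would first record that, for \emph{any} sparsity mask keeping an index set $S$ (using $Q_m(0,scale)=0$), the composition error splits coordinatewise into $|\varepsilon_q(\bx)_i|$ on kept indices and $|x_i|$ on pruned indices, so its $L_1$ norm equals the common objective $F(S) := \sum_{i\in S}|\varepsilon_q(\bx)_i| + \sum_{i\notin S}|x_i|$. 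Applying this to $S_x$ (the $N$ largest $|x_i|$ per group, the mask selected by $q\circ s$) and $S_q$ (the $N$ largest $|q_i|$ per group, the mask selected by $s\circ q$) reduces the theorem to $F(S_x)\le F(S_q)$; since $F$ is a sum over the N:M groups and each mask keeps exactly $N$ indices per group, it suffices to establish the inequality one group at a time.

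Next, within a group I would use two structural properties of max-scaled quantization at a fixed scale: it is sign-preserving, so $|\varepsilon_q(\bx)_i| = \big||x_i|-|q_i|\big|$, and it is magnitude-monotone, so $|x_j|\ge|x_k|$ implies $|q_j|\ge|q_k|$. The per-group difference $F(S_q)-F(S_x)$ is supported only on the symmetric difference of the two masks, and because both keep exactly $N$ elements the sets $(S_x\setminus S_q)$ and $(S_q\setminus S_x)$ (restricted to the group) have equal size. I would fix a bijection $\pi$ between them and write the difference as a sum over pairs $(j,k=\pi(j))$ of the terms $(|x_j|-|x_k|) - (|\varepsilon_q(\bx)_j|-|\varepsilon_q(\bx)_k|)$, so it remains to show each such term is nonnegative.

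For a fixed pair, $j\in S_x$ and $k\notin S_x$ force $|x_j|\ge|x_k|$, while $k\in S_q$ and $j\notin S_q$ force $|q_j|\le|q_k|$. The crucial step is to combine these with magnitude-monotonicity: $|x_j|\ge|x_k|$ already yields $|q_j|\ge|q_k|$, so the two constraints can coexist only when $|q_j|=|q_k|$. In other words, quantization can collapse two distinct magnitudes to a common value but can never strictly reverse their order, which is precisely what eliminates the failure mode that made $s\circ q$ worse in Theorem~\ref{s_not_ind_q}; I expect this monotonicity/order-collapse observation to be the main obstacle, as everything hinges on it. Granting $|q_j|=|q_k|=:t$ and setting $a=|x_j|\ge b=|x_k|$, sign-preservation turns each pairwise term into $(a-b) - (|a-t|-|b-t|)$, which is nonnegative by the reverse triangle inequality $|a-t|-|b-t|\le|a-b|=a-b$. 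Summing the nonnegative pairwise contributions over all pairs and all groups gives $F(S_x)\le F(S_q)$, i.e. $\|\varepsilon_{q\circ s}(\bx)\|_1 \le \|\varepsilon_{s\circ q}(\bx)\|_1$; the same computation shows equality holds exactly when no order-collapse occurs, consistent with the attainable-equality cases in the earlier theorems.
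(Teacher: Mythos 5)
Your proof is correct and follows essentially the same route as the paper's: both reduce the comparison to the indices where the two prune masks disagree, observe that max-scaled quantization is magnitude-monotone so a disagreement can only occur when the two quantized magnitudes collide, and then verify a pairwise exchange inequality. Your execution is somewhat tighter --- the objective $F(S)$ and the bijection over the symmetric difference handle an arbitrary number of collisions at once, and the reverse triangle inequality $|a-t|-|b-t|\le a-b$ subsumes the paper's three explicit subcases on the position of the common quantized value $y$ --- but the decomposition and the key lemma are the same.
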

    
    \begin{proof} 
    If sparsity is applied first, then from the proof of Theorem 3.5
    \begin{equation}
        \varepsilon_{q \circ s}(\mathbf{x})_i =
        \begin{cases}
            \varepsilon_{s}(\mathbf{x})_i & \text{if } x_i \text{ is sparsified}, \\
            \varepsilon_{q}(\mathbf{x})_i & \text{otherwise}.
        \end{cases}
    \end{equation}
    
    If quantization is applied first, there are two cases.
    
    \textbf{Case 1}: there are no new duplicates. Quantization cannot reorder elements, it can only make them duplicates. Therefore, if there are no new duplicates, the same elements will be sparsified after quantization as the order of the elements did not change, and the error vector will be the same: 
    \begin{equation}
        \left \lVert \varepsilon_{s \circ q}(\mathbf{x}) \right \rVert_1 = \left \lVert \varepsilon_{q \circ s}(\mathbf{x}) \right \rVert_1
    \end{equation}
    
    \textbf{Case 2}: there are new duplicates. Let $x_{i}$ and $x_{j}$ be such elements that
    $ |x_i| < |x_j| $ and $ q(\mathbf{x})_i = q(\mathbf{x})_j =: y $, and the $j $-th element gets sparsified instead of the $i$-th. In this case: 
    \begin{equation}
        \varepsilon_{s \circ q} (\mathbf{x})_i  =  \varepsilon_{q} (\mathbf{x})_i  \text{ and }
        \varepsilon_{s \circ q}(\mathbf{x})_j = \varepsilon_{s}(\mathbf{x})_j = x_j 
    \end{equation}
    
    Therefore,
    \begin{equation}
        |\varepsilon_{s \circ q}(\mathbf{x})_i| + | \varepsilon_{s \circ q}(\mathbf{x})_j | = | \varepsilon_{q}(\mathbf{x})_i | + | \varepsilon_{s}(\mathbf{x})_j |= |y - x_i| + |x_j |
    \end{equation}
    
    If we consider the case $x_i < y < x_j$ and $y = 0$, then 
    \begin{equation}
        | \varepsilon_q(x)_i| + | \varepsilon_s(x)_j| =|x_i|+|x_j| = |\varepsilon_s(x)_i| + |\varepsilon_q(x)_j|
    \end{equation} 
    Otherwise, we assume that $x_i$, $x_j$ and $y$ have the same sign.
    
    Here we have three subcases:
    \begin{itemize}
        \item $|y|>|x_j|$. Then \begin{equation}
            |y -x_i|+|x_j|= |y| - |x_i| + |x_j| >|y| + |x_i| - |x_j| = |y - x_j| + |x_i| 
        \end{equation}
        \item $|y|<|x_i|$. Then \begin{equation} |y -x_i|+|x_j| =|x_i|-|y| + |x_j| =|y - x_j| + |x_i| \end{equation}
        \item $|x_i| < |y| < |x_j|$. Then \begin{equation}
            |y -x_i|+|x_j| =|y|-|x_i| + |x_j| >|x_j| = |x_j|-|y|+|y| > |x_j|-|y| +|x_i| = |y-x_j| + |x_i|
        \end{equation}
    \end{itemize}
    
    Therefore, 
    \begin{equation}
        \varepsilon_q(x)_i| + | \varepsilon_s(x)_j| =|y -x_i|+|x_j| \ge |y-x_j| + |x_i| = |\varepsilon_q(x)_j| + |\varepsilon_s(x)_i|
    \end{equation}
    
    As a result,
    \begin{equation}
        \left \lVert \varepsilon_{s \circ q}(x) \right \rVert_1 = \left( \sum_{k \neq i, j} |\varepsilon_{s \circ q}(x)_k| \right) + |\varepsilon_{s \circ q}(x)_i| + |\varepsilon_{s \circ q}(x)_j| = 
    \end{equation}
    \begin{equation}
        = \left( \sum_{k \neq i, j} |\varepsilon_{s \circ q}(x)_k| \right) + |\varepsilon_{q}(x)_i| + |\varepsilon_{s}(x)_j| \ge
    \end{equation}
    \begin{equation}
        \ge \left( \sum_{k \neq i, j} |\varepsilon_{q \circ s}(x)_k| \right) + |\varepsilon_{q}(x)_j| + |\varepsilon_{s}(x)_i| = \left \lVert \varepsilon_{q \circ s}(x) \right \rVert_1. \quad
    \end{equation}
    
\end{proof}

\section{Definitions of \texorpdfstring{$Q$}{Lg} for numerical formats}
\label{app:def_q}
\begin{enumerate}
    \item INT$m$ (Symmetric version) \citep{dettmers_llmint8_2022}
\begin{equation}
Q_m(x_i, scale) = s \cdot \left \lfloor \frac{x_i}{s} \right \rceil, 
\text{where } 
s=\frac{scale}{2^{m-1}-1},
\end{equation}
    \item HBFP$m$ \citep{drumond_hbfp_2018}
    \begin{equation}
Q_m(x_i, scale) = s \cdot \left \lfloor \frac{x_i}{s} \right \rceil, 
\text{where } 
s=2^{\left\lceil log_2(scale) \right\rceil - (m - 1)}
\end{equation}
    \item MXFP$m$ \citep{rouhani_miscroscaling_2023, microxcaling_repo}
\begin{equation}
Q_m(x_i, scale) = scale \cdot (-1)^S \cdot 2^E \cdot (1 + 2^{-m} \cdot M)
\end{equation}
\begin{align}
\text{where } 
    S &= \text{sign}(x_s) \\ 
    E &= \lfloor log_2(|x_s|) \rfloor - bias \\
    M &= \left \lfloor \left(\frac{|x_s|}{2^E} - 1\right) \cdot 2^m \right \rceil \\
    x_s &= \frac{x}{2^{\lceil \log_2(scale) \rceil}} \\
\end{align}
    Value of $bias$ depends on the chosen configuration \citep{micikevicius_ocp8bit_2023}.
    \item MXINT$m$ \citep{rouhani_miscroscaling_2023, microxcaling_repo}
\begin{equation}
Q_m(x_i, scale) = s \cdot \left \lfloor \frac{x_i}{s} \right \rceil, 
\text{where } 
s=2^{\lceil \log_2(scale) \rceil - (m - 1)}
\end{equation}
\end{enumerate}

\begin{figure}[ht]
  \centering
  \includegraphics[width=0.7\linewidth]{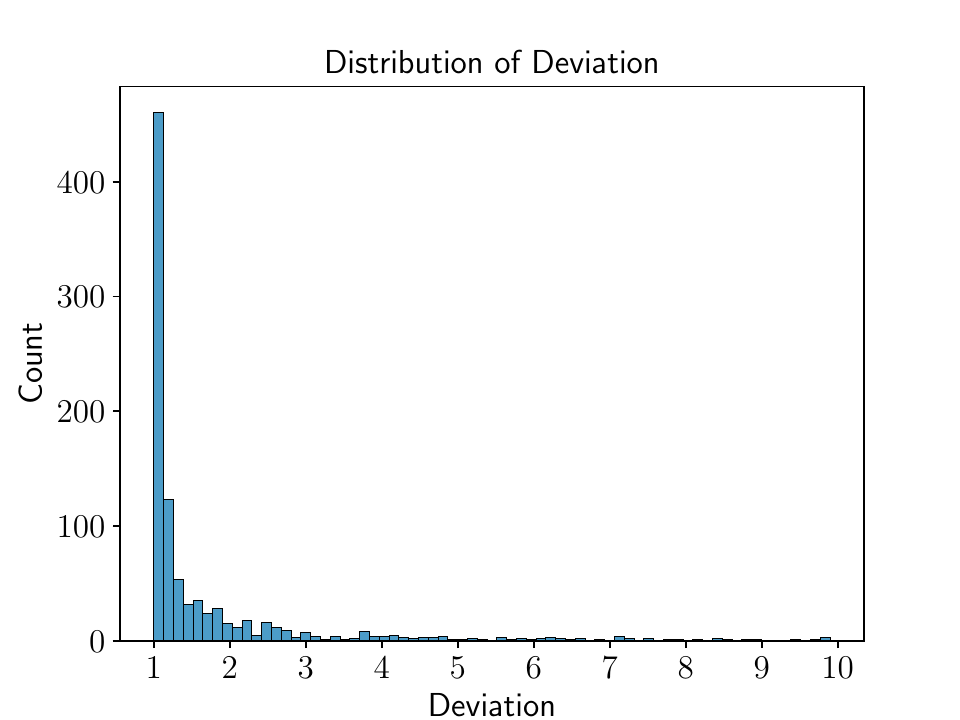}
  \caption{Distribution of the deviation values for several random blocks.}
  \label{fig:deviation_distr}
\end{figure}

\begin{figure}[ht]
  \centering
  \includegraphics[width=0.65\linewidth]{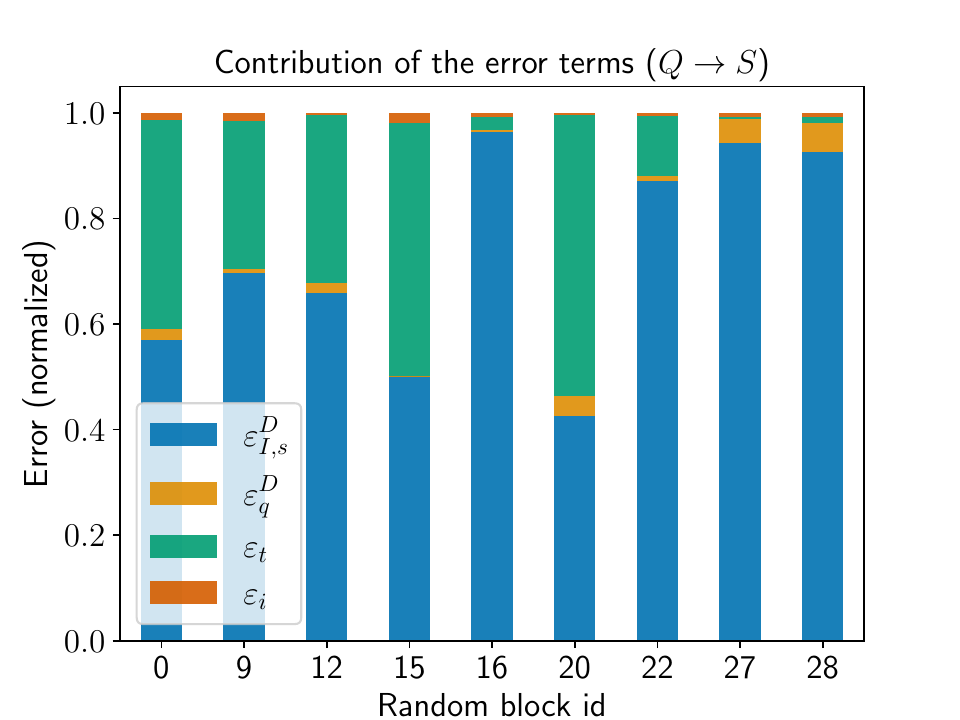}
  \caption{Normalized values of each error term of the upper bound for the case of applying quantization before sparsity.}
  \label{fig:deviation_distr_qs}
\end{figure}

\begin{figure}[ht]
  \centering
  \includegraphics[width=0.65\linewidth]{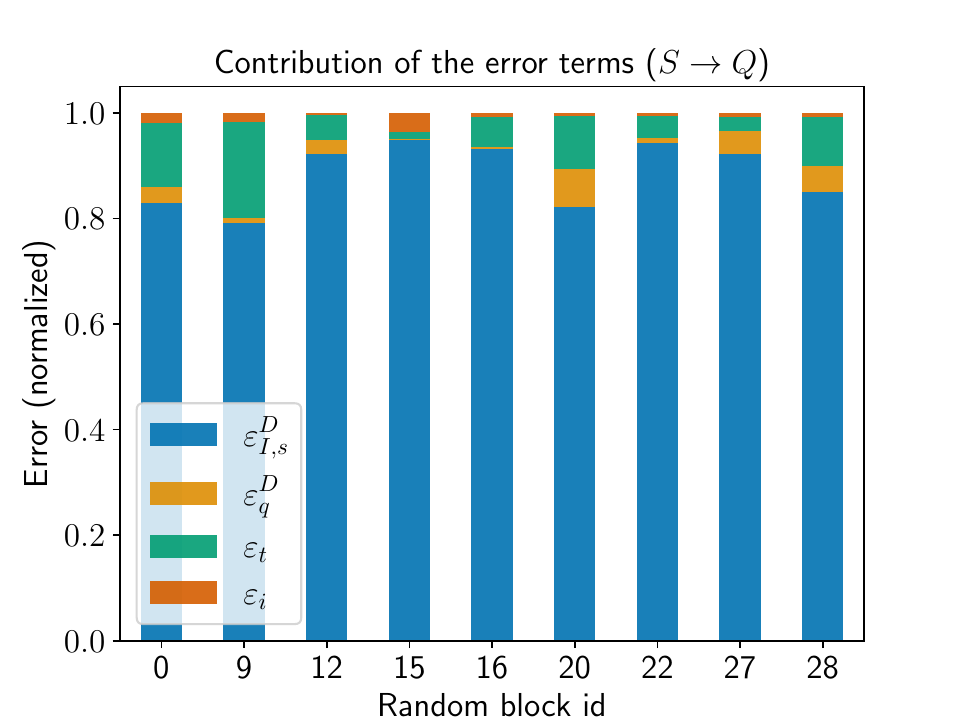}
  \caption{Normalized values of each error term of the upper bound for the case of applying sparsity before quantization. We fix the seed and consider the same random blocks as for the order $Q \rightarrow S$.}
  \label{fig:deviation_distr_sq}
\end{figure}
\section{Analysis of the upper bound of the dot product error}
\label{app:upper_bound}
\subsection{Upper bound is reachable} 
To test if the upper bound derived in Theorem \ref{thm:dotp} is reachable in practice, we randomly sampled $1000$ blocks of size $64$ from a standard normal distribution $\mathcal{N}(0, 1)$, and applied $2$:$4$ sparsity and HBFP6 quantization.
We then compute the aggregate error of the composition and individual error term of the upper bound.
Subsequently, we quantified how much the upper bound deviates from the actual composition error using the following formula:
\begin{equation}
    \text{Deviation} = \frac{| \varepsilon_{I,s}^D(\mathbf{x}, \mathbf{w})| + |\varepsilon^D_{q}(\mathbf{x}, \mathbf{w}) | + |\varepsilon_t| + |\varepsilon_i|}{| \varepsilon_{q,c}^D(\mathbf{x}, \mathbf{w}) |}
\end{equation}
As a corollary of Theorem \ref{thm:dotp}, the minimal value of deviation is $1$. 

Figure \ref{fig:deviation_distr} shows the deviation distribution of samples.
Most values fall into the first bin, suggesting the upper bound is frequently reached. 
It can also be seen that the deviation values can be large, almost reaching the value of $10$, which indicates that the upper bound can be pessimistic in some cases too. 
Theorem \ref{thm:dotp} does not rule out large values of deviation, as it applies the triangle inequality to obtain the upper bound, which leads to dropping the sign of each error term. 
If the values of the error terms are negative, they can make the overall error of the composition lower than the upper bound, which leads to the deviation values larger than one.

\subsection{Contribution of additional error terms}
Section~\ref{sec:orthogonality} describes how each additional error term can contribute to the overall error of the composition.
We hypothesize that the error term $\vten$ contributes less in case of applying sparsity followed by quantization than in case of applying quantization first. 
We also hypothesize that the magnitude of the error term $\varepsi$ is much lower than the magnitude of $\vten$. 
To test our hypotheses, we normalized the values of each term of the upper bound to compare their contribution to the error.
We considered both orders of applying the transformations. 
We also only looked at the samples with low deviation values ($< 1.05$) to increase the explainability power of the upper bound.

Figures \ref{fig:deviation_distr_qs} and \ref{fig:deviation_distr_sq} depict the results of the experiment. 
If we consider the order $Q \rightarrow S$ in Figure \ref{fig:deviation_distr_qs}, we can see that the term $\vten$ advocates for almost half of the error of the composition. 
However, in the order $S \rightarrow Q$ the term $\vten$ has a much lower impact, which proves our first hypothesis. 

We can also see that the values of $\varepsi$ are much lower that the values of $\vten$ in most of the cases in both orders. This proves our second hypothesis.

\section{Analysis of the Additional Error}
\label{app:additionalerror}
As a corollary of Theorem~\ref{thm:dotp}, the composition of max-scaled sparsity and quantization is \emph{non-orthogonal}, resulting in two additional error terms. 

The term $\vten$ incorporates the correction vector of the composition $\varepst_c$, which carries the additional error from the tensor level to the dot-product level. 
Depending on the order of the composition, the value of $\vten$ varies. 

If sparsity precedes quantization, the correction vector $\varepst_{q \circ s}(\bw)$ exclusively comprises negative quantization errors for the elements pruned by the sparsity transformation:
\begin{equation}
    \varepst_{q \circ s}(\bw)_i = 
    \begin{cases}
        -\varepsq(\bw)_i, & s(\bw)_i = 0 \\
        0, & \text{otherwise}
    \end{cases}
\end{equation}
However, if quantization is applied first, certain elements in the block may become equal, resulting in the sparsity removing a different set of elements. 
Formally, if $w_i$ is pruned by $s$ but not by $s \circ q$, there exists a $w_j$, where $j \neq i$, such that $q(\bw)_i = q(\bw)_j$. 
In this scenario, $\varepst_{s \circ q}(\bw)_i = -\varepss(\bw)_i$ and $\varepst_{s \circ q}(\bw)_j = \varepss(\bw)_j - \varepsq(\bw)_j$. 
Otherwise, it only contains the quantization errors of the pruned elements. 
Therefore, the magnitude of the correction vector for the composition $s \circ q$ is generally larger than that of the reverse order.
Besides the quantization errors, the correction vector also contains pruned elements, which are generally larger by orders of magnitude except in a few improbable edge cases. 
This results in the overall value of $\vten$ being larger, implying that \emph{the order of applying sparsity first and then quantization is optimal for dot products.}

The term $\varepsi$ also contributes to the additional error, encoding the interaction between the error vectors $\varepsq(\bx)$ and $\varepss(\bw)$. 
However, $\varepsi$ is less significant than $\vten$, as it contains the quantization error, the norm of which is generally orders of magnitudes lower than the norm of the original block. 
In addition, $\varepsi$ contains the sparsity error, which involves the smallest weights, diminishing the significance of this additional error.

\section{Combining GPTQ with Sparsity}

\niparagraph{Magnitude-based Sparsity and GPTQ.} Although our mathematical analysis does not specifically cover GPTQ, we conducted controlled experiments to evaluate its performance in the context of our paper. We applied 50\% unstructured sparsity to the OPT-125M model in combination with GPTQ.
\begin{table}[hbt!]
    \centering
    \begin{minipage}{\linewidth}
        \centering
        \begin{minipage}[b]{0.49\linewidth} %
            \centering
            \caption{GPTQ - Magnitude Based for OPT-125m}
            \vspace{0.5em}
            \resizebox{\columnwidth}{!}{
                \begin{tabular}{ccccc}
                    \toprule
                    \textbf{\makecell{S\&Q \\ Layer Id}} & \textbf{Sparsity} & \textbf{Quantization} & \textbf{Order} & \textbf{PPL} \\
                    \midrule
                    -       & -    & -        & -    & 27.65 \\
                    0, 1, 10, 11 & -    & GPTQ-4b    & -    & 28.1  \\
                    0, 1, 10, 11 & 50\% & -         & -    & 34.62 \\
                    0, 1, 10, 11 & 50\% & GPTQ-4b   & S $\rightarrow$ Q & 30.93 \\
                    0, 1, 10, 11 & 50\% & GPTQ-4b    & Q $\rightarrow$ S & 35.59 \\
                    \bottomrule
                \end{tabular}
            }
            \label{tab:gptq_magnitude}
        \end{minipage} \hfill
        \begin{minipage}[b]{0.49\linewidth}
            \centering
            \caption{GPTQ - SparseGPT}
            \vspace{0.5em}
            \resizebox{\columnwidth}{!}{
                \begin{tabular}{@{}ccccccccccccc@{}}
                \toprule
                & & \multicolumn{3}{c}{\textbf{OPT-350M}} & \multicolumn{3}{c}{\textbf{OPT-1.3B}}   \\ 
                \cmidrule(l{5pt}r{5pt}){3-5} \cmidrule(l{5pt}r{5pt}){6-8} 
                \textbf{\makecell{Sparsity\\type}} & \textbf{Order}& \textbf{Wikitext2} & \textbf{PTB} & \textbf{C4} 
                & \textbf{Wikitext2} & \textbf{PTB}  & \textbf{C4} \\ \midrule
                \multirow{2}{*}[-0.5em]{2:4} & S$\rightarrow$ Q 
                & \textbf{56.27} & \textbf{80.67} & \textbf{51.65} & \textbf{27.99} & \textbf{42.32} & \textbf{29.42} \\ 
                & Q$\rightarrow$ S 
                & 67.96 & 91.03 & 56.57 & 29.38 & 45.15 & 30.15 \\\midrule
                \multirow{2}{*}[-0.5em]{\rebut{3}:4} & S$\rightarrow$ Q 
                & \textbf{26.82} & \textbf{38.45} & \textbf{26.86} & \textbf{16.73} & \textbf{24.30} & \textbf{18.62} \\ 
                & Q$\rightarrow$ S 
                & 27.38 & 39.62 & 27.21 & 17.42 & 24.23 & 18.72 \\\midrule
                \multirow{2}{*}[-0.5em]{4:8}  & S$\rightarrow$ Q 
                & \textbf{43.06} & \textbf{62.84} & \textbf{39.09} & \textbf{22.96} & \textbf{33.61} & \textbf{23.86} \\  
                & Q$\rightarrow$ S 
                & 46.49 & 64.53 & 42.05 & 24.75 & 35.88 & 25.40 \\
                \bottomrule
                \end{tabular}
            }
            \label{tab:gptq_sparsegpt}
        \end{minipage}
    \end{minipage}

\end{table}

When applying GPTQ$\rightarrow$S, finetuning requires quantizing and sparsifying weight tensors in tandem at each iteration. However, GPTQ operates by quantizing a column and updating the remaining weights to compensate for the introduced errors. Under this scenario, comparing GPTQ$\rightarrow$S and S$\rightarrow$GPTQ with finetuning would not be fair due to the different amounts of error compensation. To ensure a fair comparison, we decided to eliminate the fine-tuning step and instead sparsified only a subset of layers to contain the sparsity error to a reasonable degree. We determined the number of layers to compress by setting a perplexity threshold equivalent to that achieved by SparseGPT. Table \ref{tab:gptq_magnitude} shows that even in this case, magnitude-based sparsity is most effective when applied before quantization (S$\rightarrow$GPTQ: 30.93 vs. GPTQ$\rightarrow$S: 35.59).

\niparagraph{SparseGPT and GPTQ.} GPTQ and SparseGPT apply the compression on a column-by-column basis and assume that elements to the right of the current column remain uncompressed. This is because dense updates propagate through these uncompressed elements to compensate for the introduced error. If subsequent columns are compressed, they would not remain so after the first update.

Given this context, we used the SparseGPT codebase, which natively supports S$\rightarrow$Q, and followed their instructions to apply this compression order. We also reached out to the author of SparseGPT and followed their recommendation to apply Q$\rightarrow$S. We experimented with 4-bit GPT quantization and different variants of OPT models. Table \ref{tab:gptq_sparsegpt} summarizes our results, supporting our hypothesis for the optimal order of compression for second-order methods. The mathematical study of the optimal compression order in second-order methods is beyond the scope of our work, and we leave it as future work.

\section{Convolutional Networks}
\label{app:cnn}

Our mathematical framework is designed to be applicable to any matrix multiplications, regardless of the specific model architecture. This allows us to study the optimal order of compression for various models, including CNNs. Therefore, we extended our experiments to include ResNet50 on the ImageNet dataset using all of the same configurations as ViT. The results are present in Table~\ref{tab:resnetresults}. These additional results further validate our findings regarding the optimal compression order and orthogonality threshold.

\begin{table}[ht]
\vspace{\baselineskip}
\caption{Comparison of evaluation cross-entropy loss with estimated orthogonality thresholds}
\vspace{\baselineskip}
\centering
\resizebox{0.60\columnwidth}{!}{
\label{tab:resnetresults}
\begin{tabular}{cclcclcc}
\hline
\multirow{3}{*}{\begin{tabular}[c]{@{}c@{}}Sparsity\\ type\end{tabular}} & \multirow{3}{*}{\begin{tabular}[c]{@{}c@{}}Number\\ format\end{tabular}} &  & \multicolumn{5}{c}{ResNet50}                                                   \\ \cline{4-8} 
\vspace{\baselineskip}
                                                                         &                                                                          &  & \multicolumn{2}{c}{Metric}        &  & \multicolumn{2}{c}{Orthogonality {Threshold}} \\ \cline{4-5} \cline{7-8} 
                                                                         &                                                                          &  & Accuracy         & CE Loss        &  & Accuracy            & CE Loss           \\ \hline
\multirow{6}{*}{0\%}                                                     & FP32                                                                     &  & 76.97\%          & 1.040          &  & -                   & -                 \\
                                                                         & INT8                                                                     &  & 76.83\%          & 1.051          &  & -                   & -                 \\
                                                                         & MXFP8                                                                    &  & 69.21\%          & 1.462          &  & -                   & -                 \\
                                                                         & MXFP6                                                                    &  & 70.86\%          & 1.362          &  & -                   & -                 \\
                                                                         & HBFP8                                                                    &  & 76.88\%          & 1.043          &  & -                   & -                 \\
                                                                         & HBFP6                                                                    &  & 74.61\%          & 1.176          &  & -                   & -                 \\ \hline
\multirow{6}{*}{50\%}                                                    & FP32                                                                     &  & 76.33\%          & 1.067          &  & -                   & -                 \\
                                                                         & INT8                                                                     &  & 76.06\%          & \textbf{1.072} &  & \textbf{76.19}\%    & 1.078             \\
                                                                         & MXFP8                                                                    &  & 62.07\%          & 1.917          &  & \textbf{68.57}\%    & \textbf{1.489}    \\
                                                                         & MXFP6                                                                    &  & 69.54\%          & 1.420          &  & \textbf{70.22}\%    & \textbf{1.389}    \\
                                                                         & HBFP8                                                                    &  & 76.21\%          & 1.072          &  & \textbf{76.24}\%    & \textbf{1.070}    \\
                                                                         & HBFP6                                                                    &  & 73.95\%          & \textbf{1.186} &  & \textbf{73.97}\%    & 1.204             \\ \hline
\multirow{6}{*}{2:4}                                                     & FP32                                                                     &  & 76.90\%          & 1.044          &  & -                   & -                 \\
                                                                         & INT8                                                                     &  & 76.49\%          & 1.060          &  & \textbf{76.66}\%    & \textbf{1.055}    \\
                                                                         & MXFP8                                                                    &  & 67.03\%          & 1.580          &  & \textbf{69.04}\%    & \textbf{1.467}    \\
                                                                         & MXFP6                                                                    &  & 70.47\%          & \textbf{1.351} &  & \textbf{70.69}\%    & 1.366             \\
                                                                         & HBFP8                                                                    &  & 76.51\%          & 1.053          &  & \textbf{76.71}\%    & \textbf{1.047}    \\
                                                                         & HBFP6                                                                    &  & \textbf{74.51}\% & \textbf{1.158} &  & 74.44\%             & 1.181             \\ \hline
\multirow{6}{*}{75\%}                                                    & FP32                                                                     &  & 67.30\%          & 1.569          &  & -                   & -                 \\
                                                                         & INT8                                                                     &  & \textbf{67.33}\% & \textbf{1.565} &  & 67.16\%             & 1.580             \\
                                                                         & MXFP8                                                                    &  & \textbf{62.14}\% & \textbf{1.920} &  & 59.54\%             & 1.991             \\
                                                                         & MXFP6                                                                    &  & 58.77\%          & 2.785          &  & \textbf{61.19}\%    & \textbf{1.891}    \\
                                                                         & HBFP8                                                                    &  & 67.13\%          & 1.581          &  & \textbf{67.21}\%    & \textbf{1.572}    \\
                                                                         & HBFP6                                                                    &  & 64.78\%          & 1.733          &  & \textbf{64.94}\%    & \textbf{1.706}    \\ \hline
\multirow{6}{*}{1:4}                                                     & FP32                                                                     &  & 73.91\%          & 1.218          &  & -                   & -                 \\
                                                                         & INT8                                                                     &  & \textbf{73.89}\% & \textbf{1.227} &  & 73.77\%             & 1.229             \\
                                                                         & MXFP8                                                                    &  & 58.11\%          & 2.314          &  & \textbf{66.15}\%    & \textbf{1.640}    \\
                                                                         & MXFP6                                                                    &  & 63.80\%          & 1.907          &  & \textbf{67.80}\%    & \textbf{1.540}    \\
                                                                         & HBFP8                                                                    &  & 73.79\%          & 1.225          &  & \textbf{73.82}\%    & \textbf{1.221}    \\
                                                                         & HBFP6                                                                    &  & 71.47\%          & \textbf{1.340} &  & \textbf{71.55}\%    & 1.355             \\ \hline
\end{tabular}}
\end{table}

\section{Optimal order without fine-tuning}
Magnitude-based sparsity, when applied without further re-training, leads to significant accuracy degradation ~\citep{hoefler-sparsity-2021, frantar_sparsegpt_2023}. In our case, without sparsity-aware fine-tuning, the sparsity error becomes several orders of magnitude larger than the quantization error, causing both S$\rightarrow$Q and Q$\rightarrow$S to yield predominantly sparsity error. Table \ref{tab:zero_shot} shows the WikiText2 perplexities of FP32 sparse models without sparsity-aware finetuning. 

\begin{table}[ht!]
    \centering
    \vspace{\baselineskip}
    \caption{Magnitude-based sparsity applied without fine-tuning}
    \vspace{\baselineskip}
    \resizebox{0.6\textwidth}{!}{
        \begin{tabular}{cccccc}
            \toprule
            \textbf{Sparsity} & \textbf{OPT-125m} & \textbf{OPT-6.7b} & \textbf{Llama-2-7B} & \textbf{Llama-3-8b} \\
            \midrule
            50\% & 146. & 81. & 27. & 34. \\
            2:4 & 619. & 238. & 76. & 113. \\
            \bottomrule
        \end{tabular}
    }
    \label{tab:zero_shot}
\end{table}
\vspace{\baselineskip}

The significant degradation in accuracy makes the scenario without sparsity-aware finetuning impractical. Therefore, we reproduced the configurations from Table \ref{tab:only_magnitude} with only a portion of the layers being sparse and quantized. By applying compression to approximately one-third of the layers, specifically those located at the beginning and end of the model, we can achieve acceptable perplexity increases. These results support the optimality of the S$\rightarrow$Q order.

\begin{table}[hbt!]
    \centering
    \vspace{\baselineskip}
    \caption{Perplexities of OPT-125m model without sparsity-aware fine-tuning} %
    \vspace{\baselineskip}
    \resizebox{0.8\columnwidth}{!}{ %
        \begin{tabular}{@{}ccccccccccc@{}}
            \toprule
            & & \multicolumn{7}{c}{\textbf{OPT-125M}} \\ 
            \cmidrule(lr{5pt}){3-9} %
            \textbf{\makecell{S\&Q \\Layer Id}} & \textbf{\makecell{Sparsity\\type}} & \textbf{Order} & \textbf{INT8} & \textbf{MXFP8} & \textbf{MXFP6} & \textbf{HBFP8} & \textbf{HBFP6} & \textbf{HBFP4} \\ 
            \midrule
            \multirow{4}{*}{0. 1, 10, 11} & \multirow{2}{*}{50\%} & S$\rightarrow$Q & \textbf{35.85} & \textbf{34.98} & \textbf{35.02} & \textbf{34.73} & \textbf{35.41} & \textbf{230.} \\ 
                                         &                       & Q$\rightarrow$S & 35.92 & 35.03 & 35.23 & 34.97 & 36.05 & 305. \\ 
                                         & \multirow{2}{*}{2:4}  & S$\rightarrow$Q & \textbf{42.97} & \textbf{40.5} & \textbf{40.51} & \textbf{40.19} & \textbf{43.43} & \textbf{471.} \\  
                                         &                       & Q$\rightarrow$S & 43.32 & 40.61 & 40.59 & 40.91 & 48.64 & 793. \\ 
            \midrule
            & & \multicolumn{7}{c}{\textbf{Llama-2-7B}} \\ 
            \cmidrule(lr{5pt}){3-9}
            \multirow{4}{*}{0-4, 27-31}   & \multirow{2}{*}{50\%} & S$\rightarrow$Q & \textbf{8.69} & \textbf{8.73} & \textbf{10.04} & \textbf{8.72} & \textbf{9.98} & \textbf{18.} \\ 
                                         &                       & Q$\rightarrow$S & 8.85 & 8.74 & 11.44 & 8.68 & 10.43 & 26. \\ 
                                         & \multirow{2}{*}{2:4}  & S$\rightarrow$Q & \textbf{9.31} & \textbf{9.39} & \textbf{11.21} & \textbf{9.25} & \textbf{10.06} & \textbf{31.} \\  
                                         &                       & Q$\rightarrow$S & 9.61 & 9.92 & 11.67 & 9.70 & 11.24 & 44. \\ 
            \bottomrule
        \end{tabular}
    }
    \label{tab:zero_shot_partial}
\end{table}
\vspace{\baselineskip}

\section{Collision analysis}
In the proof of Theorem \ref{s_not_ind_q_upper_extended}, we demonstrated that additional error may arise in suboptimal order when, after quantization, a larger value and a smaller value collide, causing the originally larger element to be pruned. In this section, we provide empirical evidence of the fact that these collisions take place.

We extracted the weights of all layers of the OPT-1.3b model and calculated the average reduction in unique elements after applying quantization (HBFP6). The reduction in unique elements for each layer is computed as:
\begin{equation}
    \Delta unique = unique(W) - unique(\hat{W})
\end{equation}
where $unique(X)$ returns the number of unique elements in $X$. On average, $\Delta unique(W, \hat{W}) = 24230$, with the total number of unique elements before quantization averaging $24419$. Therefore, quantization introduces significant collisions, which may lead to additional error.

To further validate our findings, we analyzed the reduction of unique elements at the block level rather than across the entire tensor. Specifically, we examined the 6th layer of OPT-1.3B, selecting random blocks with a block size of $64$. As shown in Figure \ref{fig:unique_reduction}, the reduction of unique elements in a block can reach up to 41, representing approximately $64\%$ of the elements in the block. These results confirm that quantization introduces a substantial number of duplicates.

\begin{figure}[ht]
  \centering
  \includegraphics[width=0.65\linewidth]{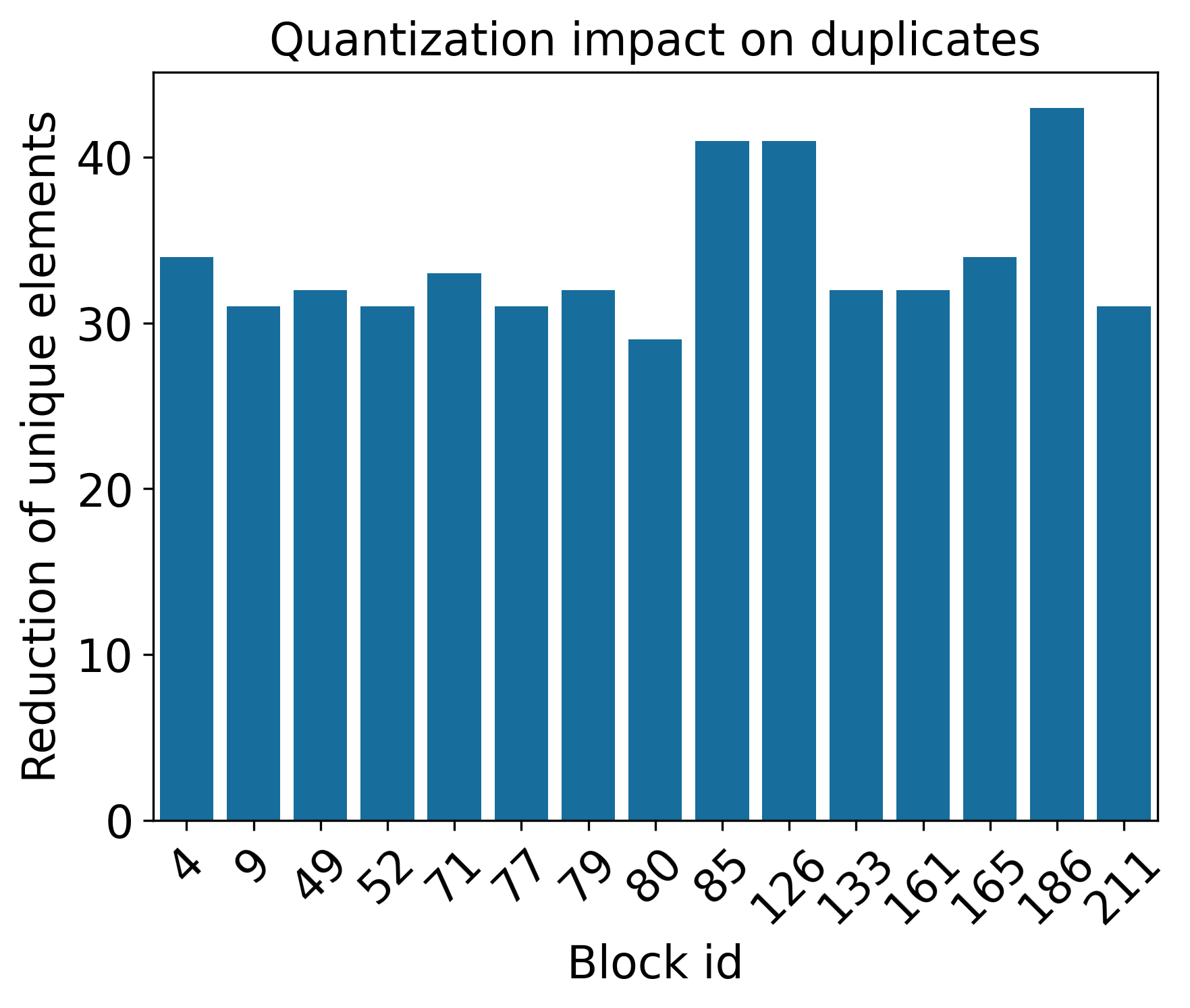}
  \caption{Impact of quantization the number of unique elements in each block.}
  \label{fig:unique_reduction}
\end{figure}

\section{OpenLLM evaluation}

To further validate our findings and expand our experimental setup, we evaluated the models on zero-shot tasks \cite{gao_lm_eval_harness} under both compression orders (S$\rightarrow$Q and Q$\rightarrow$S). The results are presented in Table \ref{tab:zero-opt} for OPT-125M and Table \ref{tab:zero-llama-three} for Llama-3-8B.
The results show that in all cases, the optimal compression order (S$\rightarrow$Q) yields better performance across all models and tasks, validating our theoretical conclusions.

\section{Order of sparsity and quantization during fine-tuning}
Our mathematical analysis of sparsity and quantization assumes the model weights to be fixed.
However, in our empirical study, the master weights of compressed layers change during fine-tuning. 
To bridge the gap between the static case in our mathematical analysis and the dynamic case in our experimental setup, we demonstrate that the optimal compression order is independent of the exact model weights, and that quantization minimally affects the model weights during sparsity-aware fine-tuning.

\niparagraph{Zero-shot sparsity and quantization.} 
To show that the optimal compression order is independent from the model weights, we apply zero-shot compression to intermediate dense FP32 checkpoints of the OPT-125M model on WikiText2. Table \ref{tab:finetuning-multiple} demonstrates final perplexities after applying HBFP6 quantization and 50\% unstructured sparsity in S$\rightarrow$Q and Q$\rightarrow$S orders at various checkpoints of dense fine-tuning.
The perplexity increases as dense fine-tuning progresses, which is expected, as the weights become less tolerant to sparsity during the dense FP32 fine-tuning process. For each checkpoint, S$\rightarrow$Q consistently outperforms Q$\rightarrow$S, achieving a relative gap of up to 7\%. This confirms that the optimal compression order is independent of the model’s specific weights, validating our mathematical analysis.

\niparagraph{Weight distributions during fine-tuning.} 
In our experiments, we employ sparsity-aware fine-tuning to mitigate sparsity-induced errors. A detailed summary of our experimental set-up is presented in Table \ref{tab:exp-setup}. During fine-tuning, we store master weights in full precision for each layer, following prior work ~\citep{rouhani_microscaling_2023}. This approach enables compression of matrix multiplications during the forward and backward stages while retaining full precision for weight updates. Consequently, quantization during forward and backward phases has minimal impact on the learning dynamics of the weights.

To demonstrate the minimal effect of quantization during fine-tuning, we compare the distributions of master weights at various iterations during S$\rightarrow$Q and Q$\rightarrow$S fine-tuning. Figures \ref{fig:weights-500}, \ref{fig:weights-1000}, \ref{fig:weights-1500} show that the master weights remain nearly identical between the two schedules, with negligible differences in the small magnitude range $[-0.2, 0.2]$. Although weight values in the tails of the distributions exhibit more noticeable discrepancies, reordering sparsity and quantization impacts only the smallest values within each block, which lie within the range where the distributions are nearly identical. 

To further quantify these differences, Figure \ref{fig:diff-1500} presents the distribution of discrepancies for a single weight component. The absolute difference between weight values remains below $0.02$, which is negligibly small at the tensor and model levels. Therefore, mathematical analysis remains valid even during fine-tuning. 

\begin{figure}[ht]
  \centering
  \includegraphics[width=0.8\linewidth]{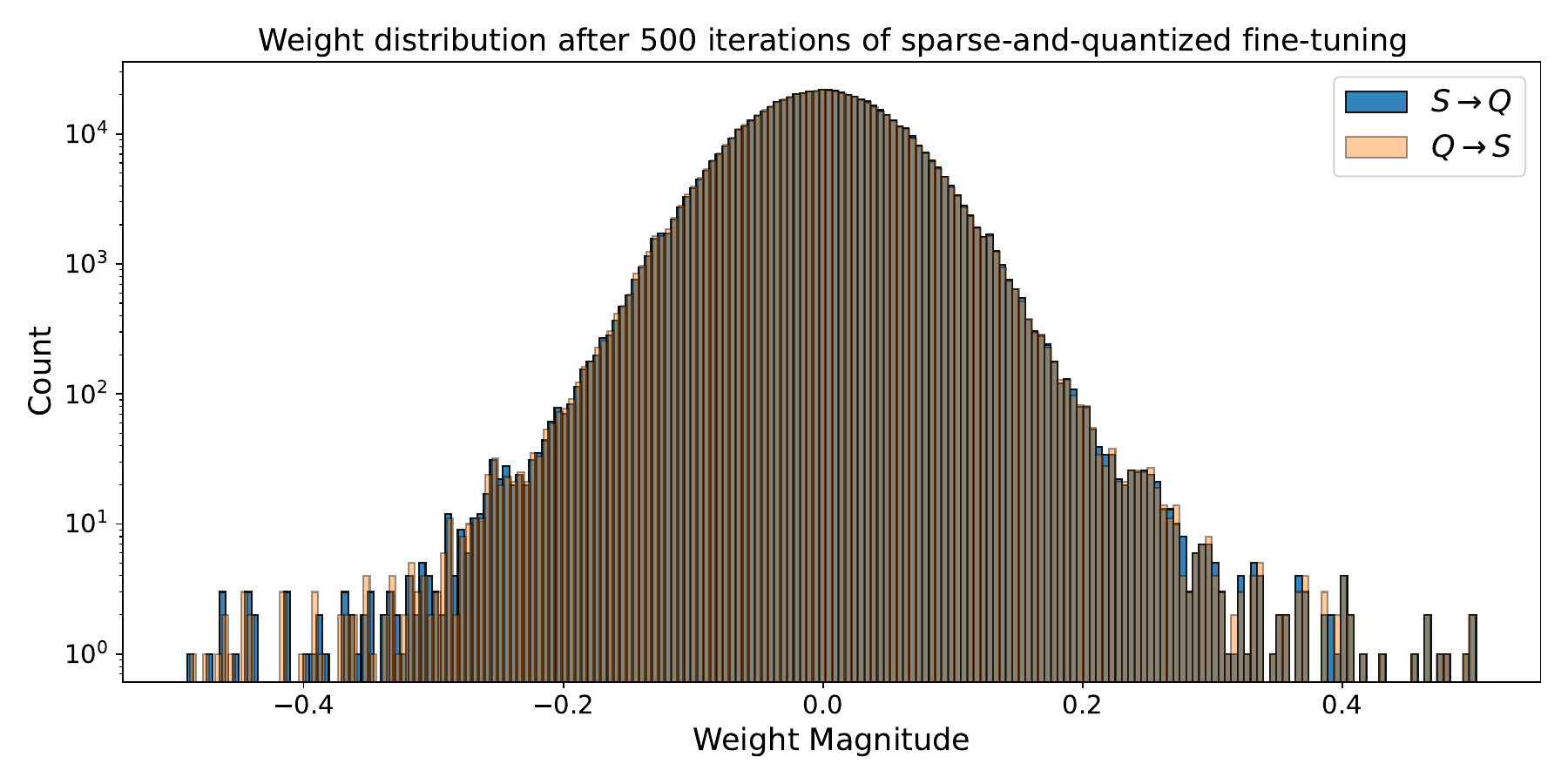}
  \caption{{Distribution of master weights of Q\_proj layer at the beginning of fine-tuning}}
  \label{fig:weights-500}
\end{figure}
\begin{figure}[ht]
  \centering
  \includegraphics[width=0.8\linewidth]{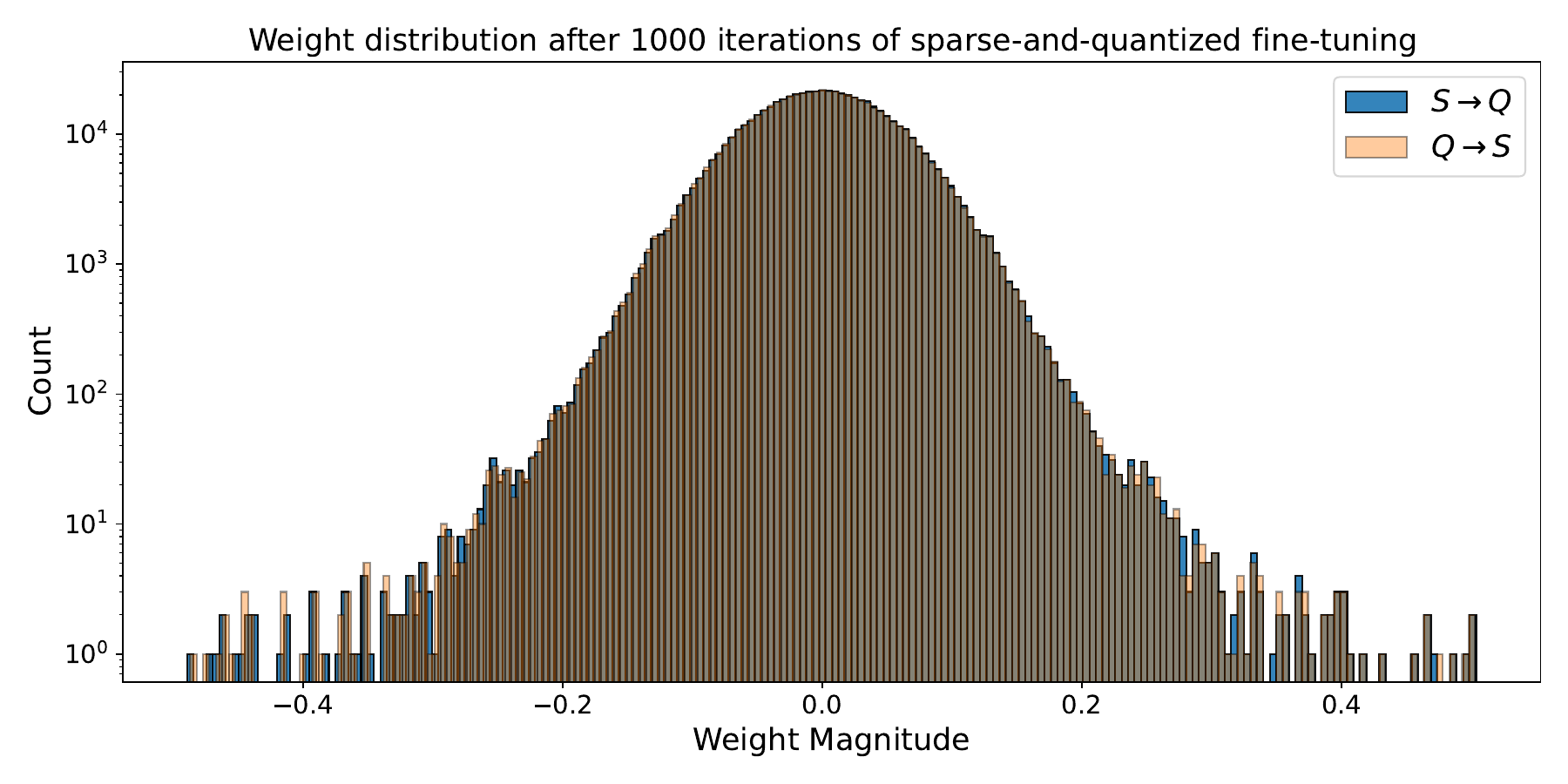}
  \caption{{Distribution of master weights of Q\_proj layer at the middle of fine-tuning}}
  \label{fig:weights-1000}
\end{figure}
\begin{figure}[ht]
  \centering
  \includegraphics[width=0.8\linewidth]{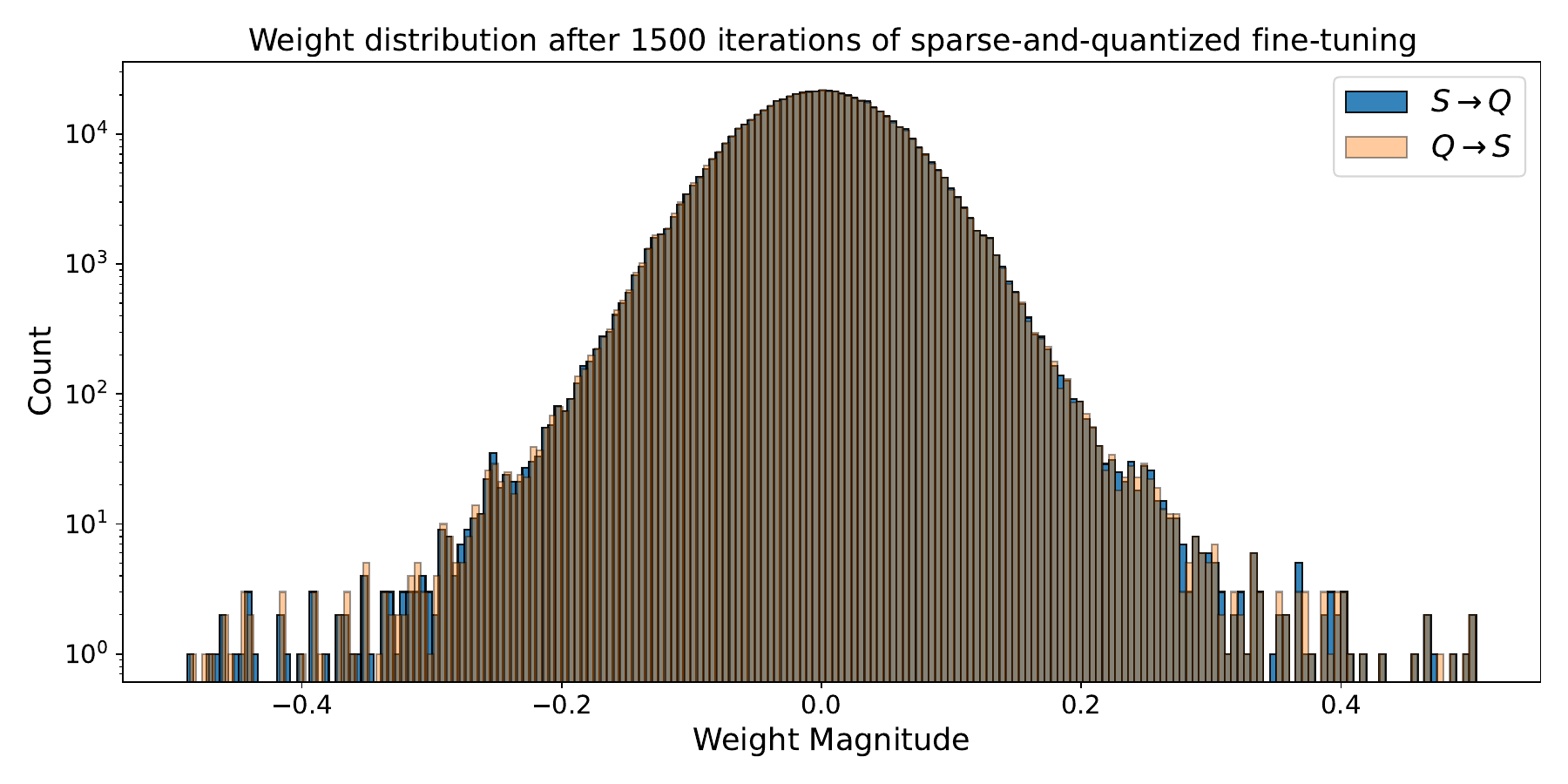}
  \caption{{Distribution of master weights of Q\_proj layer at the end of fine-tuning}}
  \label{fig:weights-1500}
\end{figure}
\begin{figure}[ht]
  \centering
  \includegraphics[width=0.8\linewidth]{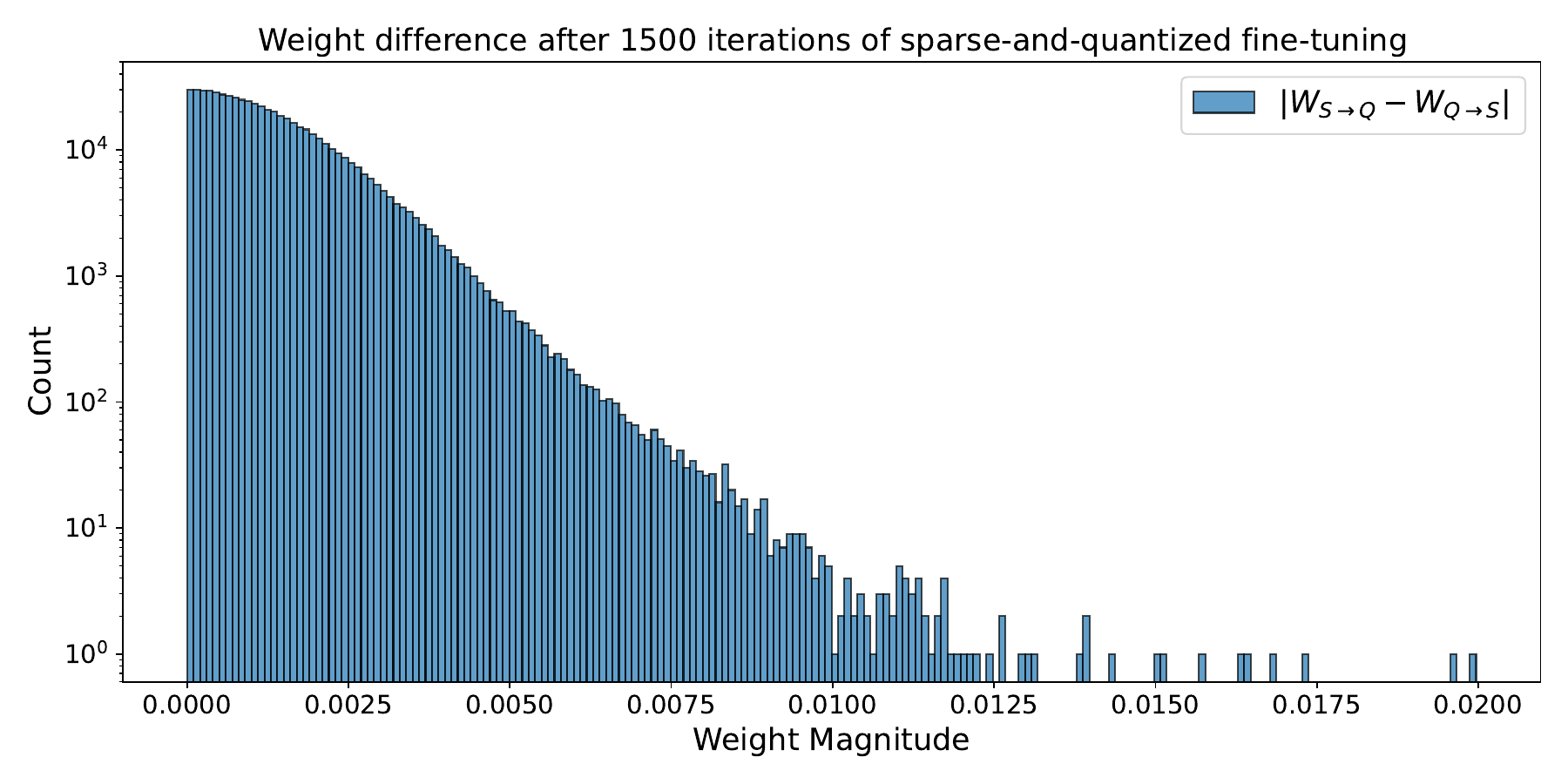}
  \caption{Distribution of discrepancies between weights of Q\_proj layer at the end of fine-tuning}
  \label{fig:diff-1500}
\end{figure}

\begin{table}[ht]
\centering
\caption{{OPT-125M Zero-Shot Performance. The best results for each configuration are highlighted in bold.}}
\vspace{\baselineskip}
\resizebox{0.8\columnwidth}{!}{
\begin{tabular}{@{}cccccccc@{}}
\toprule
\textbf{Sparsity} & \textbf{Num format} & \textbf{Order} & \textbf{ARC-c} & \textbf{ARC-e} & \textbf{HellaSWAG} & \textbf{WinoGrande} \\
\midrule
\multirow{2}{*}{0\%} & HBFP6 & - & 19.62 & 41.79 & 28.74 & 51.46 \\
\cmidrule(l){2-7}
& INT8 & - & 19.03 & 36.28 & 28.45 & 50.99 \\
\midrule
\multirow{4}{*}{50\%} & \multirow{2}{*}{HBFP6} & S$\rightarrow$Q & \textbf{20.31} & \textbf{39.86} & \textbf{28.01} & \textbf{51.38} \\
& & Q$\rightarrow$S & 19.04 & 38.56 & 26.41 & 50.67 \\
\cmidrule(l){2-7}
& \multirow{2}{*}{INT8} & S$\rightarrow$Q & \textbf{20.99} & \textbf{34.18} & \textbf{27.49} & \textbf{52.88} \\
& & Q$\rightarrow$S & 20.12 & 33.70 & 26.98 & 51.14 \\
\midrule
\multirow{4}{*}{2:4} & \multirow{2}{*}{HBFP6} & S$\rightarrow$Q & \textbf{20.31} & \textbf{38.26} & \textbf{27.56} & \textbf{52.17} \\
& & Q$\rightarrow$S & 18.92 & 36.68 & 27.23 & 49.96 \\
\cmidrule(l){2-7}
& \multirow{2}{*}{INT8} & S$\rightarrow$Q & \textbf{19.97} & \textbf{34.93} & \textbf{27.81} & \textbf{50.67} \\
& & Q$\rightarrow$S & 18.51 & 30.17 & 27.35 & 48.74 \\
\bottomrule
\end{tabular}
}
\label{tab:zero-opt}
\end{table}

\begin{table}[ht]
\centering
\caption{{Llama-3 Zero-Shot Performance. The best results for each configuration are highlighted in bold.}}
\vspace{\baselineskip}
\resizebox{0.8\columnwidth}{!}{
\begin{tabular}{@{}cccccccc@{}}
\toprule
\textbf{Sparsity} & \textbf{Num format} & \textbf{Order} & \textbf{ARC-c} & \textbf{ARC-e} & \textbf{HellaSWAG} & \textbf{WinoGrande} \\
\midrule
0\% & HBFP6 & - & 48.21 & 76.43 & 59.17 & 71.19 \\
\midrule
\multirow{2}{*}{50\%} & \multirow{2}{*}{HBFP6} & S$\rightarrow$Q & \textbf{37.54} & \textbf{69.19} & \textbf{50.64} & \textbf{64.25} \\
& & Q$\rightarrow$S & 37.29 & 67.59 & 49.54 & 63.38 \\
\midrule
\multirow{2}{*}{2:4} & \multirow{2}{*}{HBFP6} & S$\rightarrow$Q & \textbf{31.91} & \textbf{59.6} & \textbf{45.85} & \textbf{60.77} \\
& & Q$\rightarrow$S & 29.83 & 59.48 & 42.9 & 58.43 \\
\bottomrule
\end{tabular}
}
\label{tab:zero-llama-three}
\end{table}

\begin{table}[htb!]
\centering
\caption{Perplexities of the intermediate checkpoints during dense fine-tuning of OPT-125m, HBFP6+50\% sparsity. The best results for each configuration are highlighted in bold.}
\vspace{\baselineskip}
\begin{tabular}{l c c c c c c c}
\toprule
\textbf{Steps} & \textbf{500} & \textbf{700} & \textbf{900} & \textbf{1100} & \textbf{1300} & \textbf{1500} & \textbf{1700} \\
\midrule
S$\rightarrow$Q & \textbf{86.09} & \textbf{95.61} & \textbf{100.35} & \textbf{97.22} & \textbf{119.66} & \textbf{126.04} & \textbf{130.31} \\
Q$\rightarrow$S & 89.03 & 97.91 & 107.28 & 98.58 & 125.10 & 127.35 & 131.01 \\
\toprule
\end{tabular}
\label{tab:finetuning-multiple}
\end{table}

\newpage

\end{document}